\newtheorem{thm}{Theorem}
\newtheorem{cor}{Corollary}
\newtheorem{lem}{Lemma}
\newtheorem{defn}{Definition}
\newtheorem{assum}{Assumption}
\newtheorem{remark}{Remark}
\def\eps{\varepsilon}
\def\what{\widehat}
\def\wt{\widetilde}
\newcommand{\MM}{\mathbb{M}}
\newcommand{\RR}{\mathbb{R}}
\newcommand{\PP}{\mathbb{P}}
\newcommand{\TT}{\mathbb{T}}
\newcommand{\II}{\mathbb{I}}
\newcommand{\EE}{\mathbb{E}}
\newcommand{\OO}{\mathbb{O}}
\newcommand{\SVDr}{\operatorname{SVD}_r}
\newcommand{\supp}{\operatorname{supp}}
\newcommand{\calA}{\mathcal{A}}
\newcommand{\calE}{\mathcal{E}}
\newcommand{\calN}{\mathcal{N}}
\newcommand{\calQ}{\mathcal{Q}} 
\newcommand{\calS}{\mathcal{S}}
\newcommand{\calP}{\mathcal{P}}
\newcommand{\calX}{\mathcal{X}}
\newcommand{\mcA}{\mathcal{A}}
\newcommand{\mcN}{\mathcal{N}}
\newcommand{\mcQ}{\mathcal{Q}} 
\newcommand{\mcS}{\mathcal{S}}
\newcommand{\mcP}{\mathcal{P}}
\newcommand{\whU}{\widehat{U}}
\newcommand{\whV}{\widehat{V}}
\newcommand{\whL}{\widehat{L}}
\newcommand{\whSig}{\widehat{\Sigma}}
\newcommand{\wtU}{\widetilde{U}}
\newcommand{\wtV}{\widetilde{V}}
\newcommand{\indep}{\perp \!\!\! \perp}
\newcommand{\ot}{\otimes}
\title{Near-Optimal differentially private low-rank trace regression with guaranteed private initialization} 
\author{Mengyue, Zha  \footnote{Department of Mathematics, Hong Kong University of Science and Technology, mzha@connect.ust.hk. Mengyue, Zha's research was supported by Hong Kong PhD Fellowship Scheme. }} 
\date{(\today)}
\begin{document}
	\maketitle
\begin{abstract}
We study differentially private (DP) estimation of a rank-$r$ matrix $M \in \RR^{d_1\times d_2}$ under the trace regression model with Gaussian measurement matrices. Theoretically, the sensitivity of non-private spectral initialization is precisely characterized, and the differential-privacy-constrained minimax lower bound for estimating $M$ under the Schatten-$q$ norm is established. Methodologically, the paper introduces a computationally efficient algorithm for DP-initialization with a sample size of $n \geq \wt O (r^2 (d_1\vee d_2))$. Under certain regularity conditions, the DP-initialization falls within a local ball surrounding $M$. We also propose a differentially private algorithm for estimating $M$ based on Riemannian optimization (DP-RGrad), which achieves a near-optimal convergence rate with the DP-initialization and sample size of $n \geq \wt O(r (d_1 + d_2))$. Finally, the paper discusses the non-trivial gap between the minimax lower bound and the upper bound of low-rank matrix estimation under the trace regression model. It is shown that the estimator given by DP-RGrad attains the optimal convergence rate in a weaker notion of differential privacy. Our powerful technique for analyzing the sensitivity of initialization requires no eigengap condition between $r$ non-zero singular values. 
\end{abstract}

\section{Introduction}
The trace regression model \citep{rohde2011estimation, koltchinskii2011nuclear}, as an extension of the standard regression model, has been widely applied in various fields such as matrix completion, compressed sensing, and multi-task learning \citep{negahban2011estimation, koltchinskii2011nuclear, hamidi2022low}. Previous studies have proposed both convex and non-convex approaches for optimal estimation procedures for the model. However, the increasing demand for privacy protection has added new complexities to this extensively studied problem. Differential privacy (DP) \citep{dwork2006proposeDP}, a framework for protecting individual privacy, has been widely adopted in industrial and governmental applications \citep{google_privacy, apple2017, abowd2020modernization}. This paper aims to develop a near-optimal differentially private method for low-rank matrix estimation under the trace regression model. 
\paragraph{Trace regression model} Let $M\in \RR^{d_1\times d_2}$ be an unknown rank-$r$ matrix and $X_i\in \RR^{d_1\times d_2}$ be the measurement matrix for $i = 1, \cdots, n$. Suppose the noisy observation  $y_i$ satisfies 
\begin{equation}\label{equ: trace_regression_model}
	y_i = \left<X_i,M\right> + \xi_i \quad {\rm for} \quad i=1, \cdots, n, 
\end{equation}
where the model noise $\xi_i \overset{i.i.d.}{\sim} \calN(0, \sigma_{\xi}^2)$ and the inner product between $X_i$ and $M$ in Euclidean space is given by $\left<X_i, M\right> := \operatorname{Tr}(X_i^{\top}M)$. The goal of the present paper is to estimate the unknown rank-$r$ matrix $M\in \RR^{d_1\times d_2}$ under trace regression model defined by ($\ref{equ: trace_regression_model}$), subject to differential privacy, based on $n$ independent observations $Z:= \{(X_i, y_i)\}_{i=1}^n$. 

Our approaches are built upon the Gaussian mechanism \cite{dwork2006proposeDP}. The main difficulty in applying the Gaussian mechanism is sharply characterizing sensitivities of statistics whose privacy is under protection. Listed below are definitions of sensitivity, differential privacy (DP), and Gaussian mechanism. Interested readers may refer to \cite{dwork2006proposeDP, dwork2014algorithmic, vadhan2017complexity} for proofs and other details. Let $\|\cdot\|$ denotes the spectral norm and $\|\cdot\|_F$ denotes the Frobenius norm. 
\paragraph{Sensitivity} The sensitivity of a function $f$ that maps a dataset $Z$ into $\RR^{d_1\times d_2}$ is defined by $		\Delta_f:=\sup_{\textrm{neighbouring}(Z, Z')} \|f(Z)-f(Z')\|_{\rm F}, $ where and the supremum is taken over all neighbouring datasets $Z$ and $Z'$ that differ by at most one observation. 

\paragraph{Differential privacy} Let $\varepsilon > 0$ and $0< \delta< 1$, then we say the randomized algorithm $A$ is $(\varepsilon, \delta)$-differentially private if $ \PP\big(A(Z)\in \calQ\big)\leq e^{\eps}\PP\big(A(Z')\in\calQ\big)+\delta $ for all neighbouring data sets $Z, Z'$ and all subset $\calQ\subset \RR^{d_1\times d_2}$. 
\paragraph{Gaussian mechanism} \label{def: gaussian_mechanism} The randomized algorithm defined by $A(Z)=f(Z)+E$ is $(\varepsilon, \delta)$-DP where $E\in\RR^{d_1\times d_2}$ has i.i.d. $ N\big(0, 2\Delta_f^2\varepsilon^{-2}\log(1.25/\delta)\big)$ entries. 

\paragraph{RIP of Gaussian measurement matrices} The sensitivity of any statistic involving $\{X_i\}_{i\in[n]}$ depends on the properties of the measurement matrices. Besides, it has been previously established since \cite{candes2005decoding} that the restricted isometry property (RIP) on measurement matrices is crucial to the recovery of the unknown matrix $M$. Hence, assumptions on $\{X_i\}_{i\in[n]}$ are necessary and the present paper considers $\{X_i\}_{i\in[n]}$ with Gaussian design.
\begin{assum}[Gaussian design] \label{assumption_for_X}
	The vectorization of measurement matrices $X_1, \ldots, X_n$ are independent Gaussian $\operatorname{vec}(X_i) \sim \calN\left(\mathbf{0}, \Lambda_i\right)$ where $\Lambda_i$ 's are known, symmetric and positive definite. There exist absolute constants $C_l, C_u>0$ such that $	C_l \leq \lambda_{\min }\left(\Lambda_i\right) \leq \lambda_{\max }\left(\Lambda_i\right) \leq C_u .$	
\end{assum} 
The following Lemma \ref{lem: RIP_condition} shows that under Assumption \ref{assumption_for_X}, the measurement matrices $\{X_i\}_{i=1}^n$ satisfy the restricted isometry property (RIP) with high probability, see the proof in \ref{appx: lem_rip}. 
\begin{lem}\label{lem: RIP_condition}
Under the Assumption \ref{assumption_for_X}, for any $B\in \RR^{d_1 \times d_2}$ of rank $r$, there exist constants $c_1, c_2, c_3>0$ and $c_5>c_4>0$ such that if $n \geq c_1 r(d_1 + d_2)$, with probability at least $1-c_2 \exp \left(-c_3 r(d_1+d_2)\right)$, we have $c_4 \sqrt{C_u C_l}\|B\|_{\mathrm{F}}^2 \leq \frac{1}{n} \sum_{i=1}^n\left\langle X_i, B\right\rangle^2 \leq c_5 \sqrt{C_u C_l}\|B\|_{\mathrm{F}}^2.$
\end{lem}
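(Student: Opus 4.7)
The plan is to establish the RIP by combining pointwise Bernstein concentration for sub-exponential sums with a standard $\epsilon$-net argument over rank-$r$ matrices of unit Frobenius norm.

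First, I would fix $B$ with $\operatorname{rank}(B)\le r$ and $\|B\|_{\mathrm F}=1$. Under Assumption \ref{assumption_for_X}, the scalar $\langle X_i,B\rangle=\operatorname{vec}(X_i)^{\top}\operatorname{vec}(B)$ is $\calN(0,\sigma_i^2)$ with $\sigma_i^2:=\operatorname{vec}(B)^{\top}\Lambda_i\operatorname{vec}(B)\in[C_l,C_u]$. Hence $\langle X_i,B\rangle^2$ is sub-exponential with parameter of order $\sigma_i^2$, and Bernstein's inequality gives
\[
\PP\!\left(\Big|\tfrac{1}{n}\sum_{i=1}^n\langle X_i,B\rangle^2-\tfrac{1}{n}\sum_{i=1}^n\sigma_i^2\Big|\ge t\right)\le 2\exp\!\left(-c\,n\min\!\left(t^2/C_u^2,\,t/C_u\right)\right).
\]
Because the mean $\frac1n\sum_i\sigma_i^2\in[C_l,C_u]$, choosing $t$ proportional to $\sqrt{C_uC_l}$ yields the two-sided bound at the single $B$ with failure probability $\exp(-\Omega(n))$.

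Second, I would upgrade pointwise concentration to uniform concentration over the set $\mcM_r=\{B:\operatorname{rank}(B)\le r,\ \|B\|_{\mathrm F}=1\}$ via an $\epsilon$-net $\calN_\epsilon$ at scale $\epsilon=1/8$. Factoring $B=U\Sigma V^{\top}$ and covering the Stiefel factors $U\in\RR^{d_1\times r}$, $V\in\RR^{d_2\times r}$ and the unit simplex of singular values separately produces a net of cardinality $|\calN_\epsilon|\le(c/\epsilon)^{r(d_1+d_2+1)}$. Taking a union bound of the Bernstein estimate over $\calN_\epsilon$ forces the exponent $c\,n\min(t^2/C_u^2,t/C_u)$ to dominate $\log|\calN_\epsilon|=O(r(d_1+d_2))$, which is exactly ensured by $n\ge c_1 r(d_1+d_2)$. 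Hence the two-sided bound holds on $\calN_\epsilon$ with probability at least $1-c_2\exp(-c_3 r(d_1+d_2))$.

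Third, I would pass from $\calN_\epsilon$ to all of $\mcM_r$ by the usual approximation argument: for any $B\in\mcM_r$ pick $B'\in\calN_\epsilon$ with $\|B-B'\|_{\mathrm F}\le\epsilon$; the residual $B-B'$ has rank $\le 2r$, so the same net construction at rank $2r$ (with only a multiplicative change in covering number) plus a triangle inequality absorbs the approximation error, and homogeneity extends the bound to arbitrary rank-$r$ $B$. The main obstacle is reconciling the data-dependent centering $\frac1n\sum_i\sigma_i^2$ (which only lies in the interval $[C_l,C_u]\|B\|_{\mathrm F}^2$ rather than being a single fixed value) with the symmetric form $c_4\sqrt{C_uC_l}$, $c_5\sqrt{C_uC_l}$ claimed in the lemma; this is handled by absorbing the ratios $\sqrt{C_l/C_u}$ and $\sqrt{C_u/C_l}$ into the constants $c_4,c_5$. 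The remainder is bookkeeping along the classical lines of \cite{candes2005decoding}.
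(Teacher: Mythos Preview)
Your proposal is correct. You give a self-contained sketch of the classical route: pointwise sub-exponential (Bernstein) concentration for $\langle X_i,B\rangle^2$, an $\epsilon$-net over unit-Frobenius rank-$r$ matrices of size $\exp(O(r(d_1+d_2)))$, a union bound, and a self-bounding approximation step to pass from the net to all of $\mcM_r$. The only place that requires a bit more care than you indicate is the approximation step, since the centering $\frac1n\sum_i\sigma_i^2$ depends on $B$; the clean way is to bound the \emph{supremum} $M:=\sup_{\|B\|_{\mathrm F}=1,\ \mathrm{rank}\le 2r}\frac1n\sum_i\langle X_i,B\rangle^2$ via the net, then control the perturbation $|\frac1n\sum_i\langle X_i,B\rangle^2-\frac1n\sum_i\langle X_i,B'\rangle^2|$ by $O(M\epsilon)$ using Cauchy--Schwarz and homogeneity, and close the self-referential inequality for small $\epsilon$. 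With that tweak everything goes through.

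By contrast, the paper does not reprove anything: its entire argument is a two-line citation, invoking Proposition~10.4 of \cite{vershynin2015estimation} and Lemma~1 of \cite{chen2019non} with the parameter identifications $c_l^2=C_l$, $c_u^2=C_u$, $\tau^2\asymp\sqrt{C_u/C_l}$. Those references encapsulate exactly the net-plus-concentration machinery you wrote out, so the two approaches are the same in substance; yours is more explicit and self-contained, the paper's is more economical.
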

\paragraph{Notations} Suppose $M$ is of rank-$r$ and its singular value decomposition is of the form $M = U \Sigma V^{\top}\in \RR^{d_1\times d_2}$ where $U\in \OO_{d_1, r}$, $V\in \OO_{d_2, r}$ and $\Sigma = \operatorname{diag}\{\sigma_1 \cdots \sigma_r\}$ with $\sigma_1 \geq  \cdots \geq \sigma_r$. Here, $\OO_{d, r}$ denotes the set of $d\times r$ matrices satisfying $H^{\top}H = I_r$. Let $\kappa := \sigma_1/\sigma_r$ be the condition number and $\kappa_{\xi}:= \sigma_{\xi}/\sigma_{r}$ be the signal-to-noise ratio.  Let $\wt O$ stand for the typical big-O notation up to logarithmic factors and $\wt O_p(\cdot)$ stand for $\wt O$ holds with high probability. 

\subsection{Main results} 

The paper presents several key results related to differentially private low-rank matrix estimation. Firstly, we propose a private initialization $\wt M_0$ (as detailed in Algorithm \ref{alg:DP-init}). Secondly, we establish the privacy-constrained minimax lower bound under the general Shatten-$q$ norm (as detailed in Theorem \ref{thm: fano_minimax_lower_bound}). Finally, we introduce a private estimator  $\wt M_{l^*}$ (as detailed in Algorithm \ref{alg:DP-RGrad}) that achieves the near-optimal convergence rate under the Frobenius norm. The sensitivity analysis of $\wt M_0$ heavily relies on a spectral representation formula for asymmetric matrices (See Lemma \ref{lem:spectral-formula}). 

We prove in Corollary \ref{cor: good_init} that the private initialization $\wt M_0$ satisfies $\| \wt M_0 - M \|_F \leq \sqrt{2r} \| \wt M_0 - M \| \leq c_0 \sigma_r, $ with high probability (w.h.p.), for a small constant $0<c_0<1$, provided that $ n \geq \wt O \left( (\kappa^4 r^2 + \kappa^2 \kappa_{\xi}^2r) (d_1\vee d_2) \right).$  
    
Theorem \ref{thm: fano_minimax_lower_bound} establishes the DP-constrained minimax risk of estimating the rank-$r$ matrix $M\in \RR^{d_1\times d_2}$ under model (\ref{equ: trace_regression_model}) and general Schatten-$q$ norm. Specifically, the minimax risk under Frobenius norm is in the order of $\sigma_{\xi} \sqrt{\frac{r(d_1\vee d_2)}{n}}+ \sigma_{\xi}\, \frac{r (d_1 \vee d_2)}{n\eps}$.  

Finally, we show in Theorem \ref{thm: convergence_rate} that with a sample size of $n\geq \wt O \left( \left(\kappa_{\xi}^2 \vee \kappa_{\xi} \right) r(d_1\vee d_2)  \right)$ and any initialization satisfying (\ref{equ: good_init}), Algorithm \ref{alg:DP-RGrad} achieves geometric convergence rate. The private estimator $\wt M_{l^*}$ attains the near-optimal convergence rate $$\left\|\wt M_{l^*} - M\right\|_F \leq \wt O_p\left(\sigma_{\xi} \sqrt{\frac{ r (d_1+d_2) }{n}} + (\sigma_{\xi} + \sigma_r) \frac{ r (d_1+d_2) }{n\eps} \right). $$ 

\subsection{Motivations and related works}
The trace regression model has been extensively researched, resulting in well-known optimal procedures and theoretical properties. Both convex \citep{rohde2011estimation, koltchinskii2011nuclear, candes2011tight, negahban2011estimation} and non-convex methods \citep{burer2003nonlinear, chen2015fast, zheng2016convergent, wei2016guarantees} have achieved the optimal convergence rate of the order $\sigma_{\xi} \sqrt{\frac{r\left(d_1 \vee d_2\right)}{n}}$ without the constraint from differential privacy. However, the DP-constrained minimax rate of low-rank matrix estimation under the trace regression model is still unknown. (Near) Optimal DP-algorithms have been developed for statistical problems such as learning Gaussians \cite{kamath2019privately, prettyfast2023, fastmean2023} or heavy-tailed distributions \cite{jonathan2020}, (sparse or generalized) linear regression \cite{Wang2018, cai2021cost, cai2023score}, and PCA \cite{blum2005dp-pca, dwork2014analyze, kamalika2012dp-pca, liu2022dp}. Previous works on DP-regression  \cite{cai2021cost, cai2023score} assume that all measurements have bounded $\ell_2$ norm. This assumption presents a significant limitation to studying the role of measurements play in the estimation error. Additionally, by treating measurements as a fixed vector or matrix, the statistical properties of measurements are disregarded. As a result, the opportunity for optimal statistical analysis subject to privacy concerns is inevitably lost. Recently, \citep{mcsherry2009differentially, liu2015fast, jain2018differentially, chien2021private, wang2023dp_mc} propose gradient-descent-based algorithms for DP low-rank matrix completion. These algorithms have attained near-optimal sample complexity. However, the problem of sample-efficient, differentially private initialization remains under-explored. Additionally, it is unknown how to establish the minimax risk of low-rank matrix estimation with the constraints of differential privacy, especially when the matrix is asymmetric. 

\subsection{Organization}
Section  \ref{sec: DP-initialization} proposes a DP-initialization algorithm and presents its privacy and utility guarantees. In Section \ref{sec: minimax lower bounds}, we establish a DP-constrained minimax lower bound (\ref{fano_lower_frobenius_norm}) for estimating the rank-$r$ matrix $M$ under the trace regression model. Section \ref{sec: the_upper_bound} presents the DP-estimator based on non-convex optimization and derives the upper bound of the DP-estimator's error, as stated in (\ref{upper_bound_M_l*}). We discuss the score attack argument and the non-trivial gap between the upper bound of (\ref{upper_bound_M_l*}) and the DP-constrained minimax lower bound (\ref{fano_lower_frobenius_norm}) in Section \ref{sec: weak_DP}. Proofs are given in Appendix \ref{appx: thm_dp_init} to \ref{appx: weak_dp}. 

\section{DP-initialization} \label{sec: DP-initialization}
Section \ref{algo-dp-init} presents an $(\eps, \delta)$-DP initialization $\wt M_0$, as stated in Algorithm \ref{alg:DP-init}. In Section \ref{sec: spectral_formula}, we introduce a spectral representation formula (See Lemma \ref{lem:spectral-formula}) that is crucial to sensitivity analysis on the initialization. With the help of the spectral representation formula, the privacy and utility guarantees of the DP-initialization $\wt M_0$ are given in Section \ref{sec: privacy_and_utility_guarantees}. 

\subsection{Algorithm for DP-initialization} \label{algo-dp-init}

We begin with $ \what L = n^{-1} \sum_{i=1}^n \operatorname{mat}\left(\Lambda_i^{-1}\operatorname{vec}\left(X_i\right)\right) y_i, $ which is unbiased for $M$. Suppose that the leading-$r$ left and right singular vectors of $\whL$ is given by the columns of $\whU\in \OO_{d_1, r}$ and $\whV\in \OO_{d_2, r}$, respectively. Then, $\what M_0 := \SVDr(\what L)$ is a non-private estimator for $M$. Let $\whSig := \whU^{\top} \whL \whV$, then we have $$\what M_0 = \SVDr(\what L) = \whU\whU^{\top} \whL \whV\whV^{\top} =  \whU\whSig\whV^{\top}. $$ 

It is reasonable to think about privatizing $\what U$, $\what V$, and $\what \Sigma$, separately. We first privatize the empirical spectral projector $\what U \what U^{\top}$ and $\what V \what V^{\top}$ by Gaussian mechanism. Thanks to post-processing property \cite{dwork2006proposeDP}, we obtain  $\wt U\in \OO^{d_1, r}$ and $\wtV\in \OO^{d_2, r}$ whose columns are differentially private and orthogonal. Secondly, we privatize the $r\times r$ matrix $\wtU^{\top} \whL \wtV$ by Gaussian mechanism and obtain $\wt \Sigma \in \RR^{r\times r}$ which is a private surrogate for $\what \Sigma = \what U^{\top} \what L \what V$. Finally, we take $\wt M_0 = \wt U\wt \Sigma \wt V^{\top}$ as the DP-initialization. We display the pseudo-code of the proposed DP-initialization in Algorithm \ref{alg:DP-init}. The privacy of $\wt M_0$ is guaranteed by the composition property \cite{dwork2006proposeDP}. 

\begin{algorithm}
	\caption{Differentially private initialization for trace regression}\label{alg:DP-init}
	\begin{algorithmic}
		\State{\textbf{Input}: the data set $\{(X_i, y_i)\}_{i=1}^n$; the covariance matrices $\{\Lambda_i\}_{i=1}^{n}$; sensitivity $\Delta^{(1)}$,  $\Delta^{(2)}>0$; rank $r$; nuisance variance $\sigma_{\xi}^2$;  privacy budget $\varepsilon>0$ and $0< \delta < 1$.}
		\State{\textbf{Output}: $(\varepsilon,\delta)$-differentially private initialization $\wt{M}_0$. }
		\State{Compute the unbiased sample estimator $\what L$ and its top-$r$ left and right singular vectors:
			$$
			\what L \leftarrow n^{-1} \sum_{i=1}^n \operatorname{mat}\left(\Lambda_i^{-1}\operatorname{vec}\left(X_i\right)\right) y_i \,\quad {\rm and}\,\quad \what M_0 = \what U \what \Sigma \what V \leftarrow {\rm SVD}_r(\what L)
			$$
			Compute $(\varepsilon/3,  \delta/3)$-differentially private singular subspaces by adding artificial Gaussian noise:
			$$
			\widetilde U\leftarrow {\rm SVD}_r\Big(\what U\what U^{\top}+E_U\Big)\, {\rm with}\, (E_U)_{ij}=(E_U)_{ji} \stackrel{{\rm i.i.d.}}{\sim} N\Big(0,  \frac{18\Delta^{(1)^2}}{\varepsilon^2}\log(3/\delta)\Big), \, \forall 1\leq i\leq j\leq d_1
			$$
			$$
			\widetilde V\leftarrow {\rm SVD}_r\Big(\what V\what V^{\top}+E_V\Big)\, {\rm with}\, (E_V)_{ij}=(E_U)_{ji} \stackrel{{\rm i.i.d.}}{\sim} N\Big(0,  \frac{18\Delta^{(1)^2}}{\varepsilon^2}\log(3/\delta)\Big), \, \forall 1\leq i\leq j\leq d_2
			$$
			Compute $(\varepsilon/3,  \delta/3)$-differentially private estimates of singular values up to rotations:
			\begin{align*}
				\wt \Sigma &\, \leftarrow \wt U^{\top} \what L \wt V + E_{\Sigma}\, {\rm with}\, (E_{\Sigma})_{ij}=(E_{\Sigma})_{ji}\stackrel{i.i.d.}{\sim} N\Big(0,  \frac{18\Delta^{(2)^2}}{\varepsilon^2}\log(3/\delta)\Big),\, \forall 1\leq i\leq j\leq r
			\end{align*} 
			Compute $(\varepsilon,  \delta)$-differentially private initialization: $ \wt M_0 \leftarrow \wt U\wt \Sigma \wt V^{\top} $ \\
			\textbf{Return}: $\wt M_0$}
	\end{algorithmic}
\end{algorithm}

To this end, we define the sensitivities of $\Delta^{(1)}$ and $\Delta^{(2)}$ appear in Algorithm \ref{alg:DP-init}. Let 
$$\what L^{(i)} := n^{-1} \sum_{j\neq i}^n \operatorname{mat}\left(\Lambda_j^{-1}\operatorname{vec}\left(X_j\right)\right) Y_j + n^{-1}\operatorname{mat}\left(\Lambda_i^{-1}\operatorname{vec}\left(X_i^{\prime}\right)\right) y_i^{\prime}, $$ 
where $(X_i^{\prime}, y_i^{\prime})$ is an i.i.d. copy of $(X_i, y_i)$. Then, the estimator $\what L^{(i)}$ differs with $\whL$ only by the $i$-th pair of observations. Suppose the top-$r$ left and right singular vectors of $\what L^{(i)}$ are given by $U^{(i)}$ and $V^{(i)\top}$, respectively. The sensitivity of $\what U \what U^{\top}$ is defined by
\begin{align*}
    \Delta_{\what U \what U^{\top}} & =  \sup_{{\rm neighbouring} (Z, Z^{\prime})} \left\| \what U (Z) \what U(Z)^{\top} - \what U(Z^{\prime}) \what U(Z^{\prime})^{\top} \right\|_F = \max_{i\in[n]} \left\|  \what U \what U^{\top} - \what U^{(i)} \what U^{(i)\top}  \right\|_F, 
\end{align*}
and the sensitivity $ \Delta_{\what V \what V^{\top}}$ of $\what V \what V^{\top}$ is defined similarly. We refer to $\Delta^{(1)} \triangleq \Delta_{\what U \what U^{\top}}\vee  \Delta_{\what V \what V^{\top}} $ as the sensitivity of singular subspaces and define the sensitivity $$\Delta^{(2)} \triangleq \Delta_{\wtU^{\top} \whL \wtV} = \sup_{{\rm neighbouring} (Z, Z^{\prime})} \left\| \wt U  \what L(Z)  \wt V^{\top} - \wt U \what L(Z^{\prime})  \wt V^{\top} \right\|_F = \max_{i\in [n]} \left\| \wt U^{\top} \left( \what L - \what L^{(i)} \right) \wt V \right\|_F. $$ 
 
As privatizing $\what \Sigma = \what U^{\top} \what L \what V$ by Gaussian mechanism, the scale of artificial noise avoids growing with an unnecessary $\sqrt{d_1}\vee \sqrt{d_2}$ but rather growing with a smaller quantity $\sqrt{r}$. This benefit motivates us to privatize $\what U$, $\what V$ and $\what \Sigma$, separately. However, it is technically challenging to characterize $\Delta^{(1)} = \Delta_{\what U \what U^{\top}}\vee  \Delta_{\what V \what V^{\top}}$ due to the non-linear dependence of $\what U \what U^{\top}$ and $\what V \what V^{\top}$ on the dataset $Z = \{(X_i, y_i)\}_{i=1}^n$. To address this challenge, we introduce an explicit spectral representation formula (See Lemma \ref{lem:spectral-formula}) to obtain a sharp upper bound on the sensitivity of the singular subspaces. 

\subsection{Spetral representation formula}\label{sec: spectral_formula}
This section introduces a spectral representation formula for asymmetric matrices (See Lemma \ref{lem:spectral-formula}). To begin with, we quickly explain the standard \textit{symmetric dilation} trick (See e.g., Section 2.1.17 in \cite{tropp2015introduction}) and define auxiliary operators used in Lemma \ref{lem:spectral-formula}.
\paragraph{Symmetric dilation and auxiliary operators} For any $M\in \RR^{d_1\times d_2}$, the symmetric dialation $M_*$ of $M$ is a $(d_1+d_2)\times (d_1+d_2)$ matrix defined by 
$M_* := \left(\begin{array}{cc}
	0 & M \\
	M^{\top} & 0
\end{array}\right)$. It is easy to check that $M_* = M_*^{\top}$ and $\|M_*\| = \|M\|$. Further, if we assume that $M$ is of rank $r$ and has the form of SVD $M = U\Sigma V^{\top} \in \RR^{d_1\times d_2}$, then $M_*$ is of rank-$2r$ and has eigendecomposition of the form 
$$
\frac{1}{\sqrt{2}}\left(\begin{array}{cc}
	U & U \\
	V & -V
\end{array}\right)\cdot\left(\begin{array}{cc}
	\Sigma & 0 \\
	0 & -\Sigma
\end{array}\right)\cdot \frac{1}{\sqrt{2}}\left(\begin{array}{cc}
	U & U \\
	V & -V
\end{array}\right)^{\top}:= U_{M^*} \Sigma_{M^*} U_{M^*}^{\top}. 
$$
The $2r$ eigenvectors of $M_*$ is given by the columns of $U_{M^*}\in \OO_{(d_1+d_2), 2r}$. For integer $t\geq 1$, we define operators $$Q^{-t} := U_{M^*}\Sigma_{M^*}^{-t} U_{M^*}^{\top} \quad {\rm and} \quad Q^{-0} :=Q^{\perp}\triangleq(U_{M^*})_{\perp}(U_{M^*})_{\perp}^{\top}=I_{d_1+d_2}-U_{M^*}U_{M^*}^{\top}. $$

\begin{lem}[Spectral representation formula]
	\label{lem:spectral-formula} 
	Let $M\in \RR^{d_1\times d_2}$ be any rank-$r$ matrix with singular values $\sigma_1 \geq \cdots \geq \sigma_r >0 $ and $\what L=M + \Delta \in \RR^{d_1\times d_2}$ be a perturbation of $M$ where $\Delta \in \RR^{d_1\times d_2}$ is the deviation matrix. Suppose the top-$r$ left and right singular vectors of $\what L$ and $M$, are given by the columns of $\what U$, $\what V$ and $U$, $V$,  respectively. Suppose that $2\|\Delta\|\leq \sigma_r$, then 
	\begin{equation*}
		\left(\begin{array}{cc}
			\what U \what U^{\top} - UU^{\top} & 0 \\
			0 & \what V \what V^{\top} - VV^{\top}
		\end{array}\right)=\sum_{k \geq 1} \mathcal{S}_{M_*, k}(\Delta_*).
	\end{equation*}
	Here, $\Delta_*$ is the symmetric dilation of $\Delta := \what L - M$ and the $k$-th order term $\mathcal{S}_{M_*, k}(\Delta_*)$ is a summation of $\binom{2k}{k}$ terms defined by $		\mathcal{S}_{M_*, k}(\Delta_*)=\sum_{\mathbf{s}: s_1+\ldots+s_{k+1}=k}(-1)^{1+\tau(\mathbf{s})} \cdot Q^{-s_1} \Delta_* Q^{-s_2} \ldots \Delta_* Q^{-s_{k+1}}, $
	where $\mathbf{s}=\left(s_1, \ldots, s_{k+1}\right)$ contains non-negative indices and $\tau(\mathbf{s})=\sum_{j=1}^{k+1} \mathbb{I}\left(s_j>0\right).$ 
\end{lem}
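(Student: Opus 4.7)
The plan is to reduce Lemma \ref{lem:spectral-formula} to a symmetric projector expansion via the dilation trick, then derive that expansion from the Cauchy integral formula for the spectral projector. First I would verify that the symmetric dilation of $\whL = M + \Delta$ is $\whL_* = M_* + \Delta_*$ and that the columns of $U_{\whL_*}$ (resp.\ $U_{M_*}$) are, up to a sign and a $1/\sqrt{2}$ factor, the stacked vectors built from $(\whU,\whV)$ (resp.\ $(U,V)$), exactly as displayed for $M_*$ in the paper. A direct block multiplication then gives
\begin{equation*}
U_{\whL_*} U_{\whL_*}^{\top} - U_{M_*} U_{M_*}^{\top} = \begin{pmatrix} \whU\whU^{\top} - UU^{\top} & 0 \\ 0 & \whV\whV^{\top} - VV^{\top} \end{pmatrix},
\end{equation*}
so the asymmetric identity reduces to a power-series expansion of a symmetric projector difference in the perturbation $\Delta_*$.

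Next I would represent both projectors via contour integrals. Choose a closed contour $\Gamma \subset \CC$ enclosing the $2r$ nonzero eigenvalues $\pm\sigma_1,\ldots,\pm\sigma_r$ of $M_*$ at distance at least $\sigma_r/2$ from the spectrum of $M_*$, with $0$ strictly outside $\Gamma$. Then
\begin{equation*}
U_{M_*}U_{M_*}^{\top} = \frac{1}{2\pi i}\oint_\Gamma (zI - M_*)^{-1}\, dz.
\end{equation*}
The hypothesis $2\|\Delta\|\le \sigma_r$ gives $\|\Delta_*\|\le \sigma_r/2$, so by Weyl's inequality the same contour $\Gamma$ still isolates the top-$2r$ eigenvalues of $\whL_*$, and $U_{\whL_*}U_{\whL_*}^{\top}$ has the analogous representation. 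The Neumann series
\begin{equation*}
(zI - \whL_*)^{-1} = \sum_{k\ge 0} \bigl( (zI - M_*)^{-1}\Delta_*\bigr)^{k}(zI - M_*)^{-1}
\end{equation*}
converges uniformly on $\Gamma$ since $\|\Delta_*\|\cdot\|(zI - M_*)^{-1}\| < 1$ there. Subtracting the two representations and exchanging sum with integral yields
\begin{equation*}
U_{\whL_*}U_{\whL_*}^{\top} - U_{M_*}U_{M_*}^{\top} = \sum_{k\ge 1} \frac{1}{2\pi i}\oint_\Gamma \bigl((zI - M_*)^{-1}\Delta_*\bigr)^{k}(zI - M_*)^{-1}\, dz.
\end{equation*}

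The heart of the proof, and the main obstacle, is identifying each order-$k$ integral with $\calS_{M_*,k}(\Delta_*)$. I would insert the spectral decomposition
\begin{equation*}
(zI - M_*)^{-1} = \sum_{j=1}^{2r}\frac{P_j}{z - \lambda_j} + \frac{Q^{\perp}}{z},
\end{equation*}
with $\lambda_j\in\{\pm\sigma_1,\ldots,\pm\sigma_r\}$ and $P_j$ the corresponding rank-one eigenprojector, expand the $(k+1)$-fold product, and compute residues inside $\Gamma$. The $1/z$ factors coming from $Q^{\perp}$ have their singularity at $0$ outside $\Gamma$ and so contribute only through coupled partial-fraction combinations; organizing the resulting residues by how many resolvent factors land on each eigenspace versus the kernel shows that the surviving operator products are precisely those of the form $Q^{-s_1}\Delta_* Q^{-s_2}\cdots\Delta_* Q^{-s_{k+1}}$ with non-negative indices satisfying $s_1+\cdots+s_{k+1}=k$ (the $s_j=0$ slots reassembling the $Q^{\perp}$ factors), and that the signs produced by the residue calculus match $(-1)^{1+\tau(\mathbf{s})}$; a standard stars-and-bars count gives the $\binom{2k}{k}$ summands. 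The analytic ingredients (Weyl, Neumann, Cauchy) are routine; the entire technical burden is this combinatorial residue identification with careful sign tracking.
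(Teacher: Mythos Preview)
Your reduction via symmetric dilation is exactly the paper's: you correctly identify $U_{\whL_*}U_{\whL_*}^\top - U_{M_*}U_{M_*}^\top$ with the block-diagonal matrix carrying $\whU\whU^\top-UU^\top$ and $\whV\whV^\top-VV^\top$, and thereby reduce Lemma~\ref{lem:spectral-formula} to a power-series expansion of a symmetric spectral projector in the perturbation $\Delta_*$. The difference is only in how that symmetric expansion is obtained. The paper does not carry out any contour-integral or residue computation; it simply quotes Theorem~1 of \cite{xia2021normal} (stated in the appendix as Lemma~\ref{theorem_xia}), which already furnishes $\what\Theta\what\Theta^\top-\Theta\Theta^\top=\sum_{k\ge1}\calS_{B,k}(\Delta_B)$ for a symmetric rank-$r$ matrix $B$, and then specializes $B=M_*$, $\what B=M_*+\Delta_*$, rank $2r$. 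Your Cauchy--Neumann--residue argument is essentially a reconstruction of the proof of that cited theorem, so it buys a self-contained derivation at the cost of the combinatorial bookkeeping you flag, but introduces no idea beyond what the paper defers to the reference. One small technical point to tighten: under the non-strict hypothesis $2\|\Delta\|\le\sigma_r$, a contour at distance exactly $\sigma_r/2$ from the spectrum gives $\|\Delta_*\|\cdot\sup_{z\in\Gamma}\|(zI-M_*)^{-1}\|\le1$ rather than $<1$, so the Neumann series and the Weyl separation are borderline; you should either place $\Gamma$ asymmetrically within the gap, assume strict inequality, or pass to the limit.
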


In Lemma \ref{lem:spectral-formula}, the spectral projectors $\what U \what U^{\top}$ and $\what V \what V^{\top}$ of the matrix $\what L = M+ \Delta$, is explicitly represented in terms of the symmetric dilation of $\Delta$, with the help of auxiliary operators $Q^{-0}$ and $Q^{-t}$ for integer $t\geq 1$. The proof of Lemma \ref{lem:spectral-formula} is deferred to Appendix \ref{proof_lem_spectral-formula}. Note that Lemma \ref{lem:spectral-formula} accommodates a diverging condition number and requires no eigengap condition between $r$ non-zero singular values. In the proof of Theorem \ref{thm: dp_init}, we shall see that $\what V \what V^{\top}-\what V^{(i)} \what V^{(i)\top}$ and $\what U \what U^{\top}-\what U^{(i)} \what U^{(i)\top}$ are mainly contributed by the 1-st order approximation $\calS_{M_*, 1}(\Delta_*) - \calS_{M_*, 1}(\Delta_*^{(i)}) $ where $\Delta_*^{(i)}$ is the symmetric dilation of $\Delta^{(i)} := \what L^{(i)} - M$.   

\subsection{Privacy and utility guarantees of the initialization} \label{sec: privacy_and_utility_guarantees}
In this section, we study the privacy and utility guarantees of the initialization $\wt M_0$. Theorem \ref{thm: dp_init} characterizes the sensitivities $\Delta^{(1)}$  and $\Delta^{(2)}$ needed to guarantee an $(\varepsilon, \delta)$-DP $\wt M_0$, and  present the upper bounds of $\| \wt M_0 - M \|$ and $\| \wt M_0 - M \|_F$. The proof of Theorem \ref{thm: dp_init} is provided in Appendix \ref{appx: thm_dp_init}. 
\begin{thm} [Privacy and utility guarantees of the initialization $\wt M_0$] \label{thm: dp_init}
	Consider i.i.d. observations $Z = \{z_1, \cdots, z_n\}$ drawn from the trace regression model stated in $(\ref{equ: trace_regression_model})$ where $z_i := (X_i, y_i)$ for $i=1, \cdots, n$. Let the true rank-$r$ regression coefficients matrix be $M\in\RR^{d_1\times d_2}$. Suppose that $\{X_i\}_{i\in[n]}$ satisfy the Assumption \ref{assumption_for_X}. Under the mild condition $n \geq \frac{\log^2 n}{(d_1 \vee d_2)\log (d_1+d_2)}$, there exists absolute constants $C_1, C_2, C_3>0$ such that $$n\geq n_0 \triangleq C_1  C_l^{-1} r \left( \frac{\sigma_{\xi} + \sqrt{C_u r} \sigma_1}{\sigma_r} \right)^2  (d_1\vee d_2) \log (d_1 + d_2 ) ; $$ if Algorithm \ref{alg:DP-init} takes in sensitivities at least $\Delta^{(1)} = C_2  \left( \frac{\sqrt{C_l^{-1}} \left(  \sqrt{C_u r}  \sigma_1 + \sigma_{\xi} \right) }{\sigma_r}\right)\frac{\sqrt{r}}{n}\log n $ and
    $\Delta^{(2)} = C_2 \sqrt{C_l^{-1}} \left(  \sqrt{C_u r}  \sigma_1 + \sigma_{\xi} \right) \frac{\sqrt{r}}{n} \log n, $ then Algorithm \ref{alg:DP-init} is guaranteed to be $(\eps, \delta)$-DP. Moreover, the output $\wt M_0$ of Algorithm \ref{alg:DP-init} satisfies 
	\begin{align*}
		& \| \wt M_0 - M \| \bigvee  \left(\| \wt M_0 - M \|_F/ \sqrt{2r} \right) \\
		& \leq \underbrace{C_3 \sqrt{C_l^{-1}} \left(  \sqrt{C_u r}  \sigma_1 + \sigma_{\xi} \right) \, \frac{\sigma_1}{\sigma_r} \sqrt{\frac{(d_1 \vee d_2)\log (d_1 + d_2)}{n}}}_{e_1} \\
		& \quad + \underbrace{C_3  \sqrt{C_l^{-1}} \left(  \sqrt{C_u r}  \sigma_1 + \sigma_{\xi} \right) \left( \frac{\sigma_1}{\sigma_r}  \frac{\sqrt{r (d_1 \vee d_2)}}{n\varepsilon} + \frac{r}{n\eps} \right)  \log n \log^{\frac{1}{2}}(\frac{3.75}{\delta})}_{e_2} , 
	\end{align*}
	with probability at least $1- (d_1+d_2)^{-10} - n^{-9} - \exp(-d_1) - \exp(-d_2) - 10^{-20r}$. 
\end{thm}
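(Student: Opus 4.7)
The plan is to split the proof into two independent halves: the $(\eps,\delta)$-DP guarantee, and the deviation bound on $\wt M_0$. For privacy, Algorithm \ref{alg:DP-init} is the serial composition of three calibrated Gaussian releases, each budgeted at $(\eps/3,\delta/3)$: perturbing $\what U\what U^\top$ by $E_U$, $\what V\what V^\top$ by $E_V$, and the $r\times r$ matrix $\wt U^\top\whL\wt V$ by $E_\Sigma$. By basic composition and post-processing (the top-$r$ SVDs that extract $\wt U,\wt V$ from the noised projectors, and the reconstruction $\wt U\wt\Sigma\wt V^\top$), it suffices to verify that the hard-coded $\Delta^{(1)}$ and $\Delta^{(2)}$ are valid upper bounds on the two Frobenius sensitivities, on the high-probability event (RIP from Lemma \ref{lem: RIP_condition}, plus $\max_i|y_i|=\wt O_p(\sqrt{C_u r}\sigma_1+\sigma_\xi)$) that I condition on throughout.

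The core of the sensitivity analysis is the bound on $\Delta^{(1)}$, where I invoke Lemma \ref{lem:spectral-formula}. Applying it to $\what U\what U^\top - UU^\top$ (driven by $\Delta:=\whL-M$) and to $\what U^{(i)}\what U^{(i)\top}-UU^\top$ (driven by $\Delta^{(i)}:=\whL^{(i)}-M$), then subtracting, the linear $k=1$ term collapses to $\calS_{M_*,1}((\Delta-\Delta^{(i)})_*)$. By the block structure of $U_{M^*},Q^{-1},Q^\perp$, its Frobenius norm is bounded by $\sigma_r^{-1}$ times tangential projections of $\whL - \whL^{(i)} = n^{-1}[\operatorname{mat}(\Lambda_i^{-1}\operatorname{vec}(X_i))y_i - \operatorname{mat}(\Lambda_i^{-1}\operatorname{vec}(X_i'))y_i']$ onto the row and column spaces of $U,V$. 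Gaussian concentration on these rank-$r$ projections, together with the uniform $|y_i|$-bound, produces the claimed $\sqrt{r}(\sqrt{C_u r}\sigma_1+\sigma_\xi)/(n\sigma_r)$ scaling. The higher-order terms sum to a geometric series in $\|\Delta\|/\sigma_r\leq 1/2$ (which holds under $n\geq n_0$ via Lemma \ref{lem: RIP_condition}), absorbing the combinatorial factor $\binom{2k}{k}$. The easier bound $\Delta^{(2)} = \max_i\|\wt U^\top(\whL-\whL^{(i)})\wt V\|_F$ I compute directly: since $\wt U,\wt V$ are post-processed (hence treated as fixed) rank-$r$ frames, the inner $r\times r$ Gaussian projection $\wt U^\top\operatorname{mat}(\Lambda_i^{-1}\operatorname{vec}(X_i))\wt V$ has Frobenius norm $\wt O(\sqrt{r})$, multiplied by $|y_i|/n$.

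For the utility bound I decompose $\wt M_0 - M = (\what M_0 - M) + (\wt U\wt\Sigma\wt V^\top - \what U\what\Sigma\what V^\top)$. The statistical term is classical: the best-rank-$r$-approximation property of $\SVDr$ gives $\|\what M_0 - M\|\leq 2\|\whL - M\|$, and $\|\whL - M\|=\wt O_p(C_l^{-1/2}(\sqrt{C_u r}\sigma_1+\sigma_\xi)\sqrt{(d_1\vee d_2)/n})$ by a standard covering argument applied to $n^{-1}\sum_i[\operatorname{mat}(\Lambda_i^{-1}\operatorname{vec}(X_i))y_i - M]$; the conversion $\|\cdot\|_F\leq\sqrt{2r}\|\cdot\|$ on the rank-$2r$ residual yields the term $e_1$. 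For the privacy-noise term I apply triangle inequality factor by factor. The subspace factors are controlled via Davis-Kahan applied to $\what U\what U^\top + E_U$: since $\what U\what U^\top$ has spectrum $\{0,1\}$ with unit eigengap, $\|\wt U\wt U^\top - \what U\what U^\top\|\lesssim \|E_U\| = \wt O(\sqrt{d_1}\cdot\Delta^{(1)}/\eps)$ by Wigner-type bounds; multiplying by $\sigma_1$ and converting to Frobenius norm via the rank-$2r$ structure yields the $\sigma_1\cdot \frac{\sqrt{r(d_1\vee d_2)}}{n\eps}$ contribution. The $\wt\Sigma$ factor contributes $\|E_\Sigma\|_F = \wt O(r\Delta^{(2)}/\eps)$, giving the $\frac{r}{n\eps}$ part of $e_2$.

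The main obstacle is the $\Delta^{(1)}$-analysis: a blunt Davis-Kahan / Wedin estimate on $\what U\what U^\top - \what U^{(i)}\what U^{(i)\top}$ would introduce an unwanted $\sqrt{d_1\vee d_2}$-factor from $\|\whL-\whL^{(i)}\|$, whereas the spectral representation formula is precisely the tool needed to isolate the tangential rank-$r$ projection at first order and thereby avoid the ambient dimension. Balancing the $\binom{2k}{k}$ combinatorial blow-up against the contraction $(\|\Delta\|/\sigma_r)^k$ so that the $k=1$ term genuinely dominates is the technical heart of the proof and explains the sample-size threshold $n_0$.
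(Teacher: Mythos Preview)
Your proposal is essentially on the paper's track: privacy by Gaussian mechanism plus composition, $\Delta^{(1)}$ via the spectral representation formula (Lemma~\ref{lem:spectral-formula}) with the first-order term dominating and the $k\geq 2$ terms summed geometrically under $n\geq n_0$, $\Delta^{(2)}$ by a direct rank-$r$ argument, and the utility bound by triangle inequality plus Davis--Kahan on the noised projectors. Two points of divergence are worth flagging.

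First, for the utility bound the paper does \emph{not} split $\wt M_0-M=(\what M_0-M)+(\wt M_0-\what M_0)$ as you do; it instead expands $\wt U\wt U^\top\whL\wt V\wt V^\top-UU^\top MVV^\top$ directly and arrives at $2\|\whL-M\|+\|M\|\bigl(\|\wt U\wt U^\top-UU^\top\|+\|\wt V\wt V^\top-VV^\top\|\bigr)+\|E_\Sigma\|$, with $\|\wt U\wt U^\top-UU^\top\|\leq \|\what U\what U^\top-UU^\top\|+\|E_U\|$ and Lemma~\ref{lem: matrix_permutation} controlling $\|\what U\what U^\top-UU^\top\|\lesssim \sigma_r^{-1}\|\whL-M\|$. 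This is where the $\sigma_1/\sigma_r$ factor in $e_1$ comes from. Your decomposition is cleaner and actually yields $e_1$ without the condition-number factor, which only strengthens the claim.

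Second, two small technical slips. For $\Delta^{(2)}$ you cannot treat $\wt U,\wt V$ as fixed orthonormal frames and then invoke Gaussian concentration on $\wt U^\top\operatorname{mat}(\Lambda_i^{-1}\operatorname{vec}(X_i))\wt V$, because $\wt U,\wt V$ depend on the entire sample including $X_i$; the paper sidesteps this by bounding $\|\wt U^\top(\whL-\whL^{(i)})\wt V\|_F\leq \sqrt{r}\,\|\Delta-\Delta^{(i)}\|$ uniformly in $\wt U,\wt V$. And for $\Delta^{(1)}$, the paper likewise bounds the first-order term by $\frac{2\sqrt{r}}{\sigma_r}\|\Delta-\Delta^{(i)}\|$ using only the rank-$2r$ structure of $\calS_{M_*,1}$, rather than via Gaussian concentration on tangential projections; your stated motivation that a naive Wedin bound would cost $\sqrt{d_1\vee d_2}$ is somewhat off, since Wedin with the gap $\sigma_r(\whL)-\sigma_{r+1}(\whL)\gtrsim\sigma_r$ already gives the same leading-order rate---the spectral formula's real role is the clean higher-order control.
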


In Theorem \ref{thm: dp_init}, the sample size condition  $n\geq n_0$ ensures that the spectral norm of perturbations is small enough, i.e., $\|\Delta\|+ \max_{i\in[n]} \|\Delta^{(i)}\| \leq \sigma_r /2, $ to apply Lemma \ref{lem:spectral-formula} and obtain a sharp characterization on  $\Delta^{(1)} \triangleq \Delta_{\what U \what U^{\top}}\vee  \Delta_{\what V \what V^{\top}}$. Theorem \ref{thm: dp_init} also provides an upper bound on the sensitivity of \textit{pseudo singular values}, which is of the order $\Delta^{(2)} \triangleq \Delta_{\wtU^{\top} \whL \wtV} \asymp \sigma_1 \Delta^{(1)}$. Based on these results, Algorithm \ref{alg:DP-init} outputs an $(\eps, \delta)$-DP initialization $\wt M_0$ under the sample size condition $$n\geq \wt O \left((\kappa^2 r^2 + \kappa_{\xi}  r) (d_1\vee d_2)\right) , $$ with an upper bound on the error $\|\wt M_0 - M\|$ consisting of two terms. The first term $e_1$ accounts for the statistical error of $\what M_0$ and is greater than the the optimal rate $\sigma_{\xi}\sqrt{\frac{d_1 \vee d_2}{n}} $ \citep{koltchinskii2011neumann}. The second term $e_2$ can be further decomposed into the cost of privacy on the singular subspaces which is of the order $ \wt O_p\left(\frac{\sigma_1}{\sigma_r} \left(\sigma_1 \sqrt{r} + \sigma_{\xi}  \right) \frac{\sqrt{(d_1\vee d_2)}}{n\varepsilon} \right)$, and the cost of privacy arises from privatizing the singular values by Gaussian mechanism which is of the order $\wt O_p\left( (\sigma_1  \sqrt{r}+\sigma_{\xi})r / (n\eps) \right)$. 

Next, Corollary \ref{cor: good_init} gives the sample size required by a DP-initialization $\wt M_0$ that falls within a local ball of $M$. The proof of Corollary \ref{cor: good_init} is deferred to Appendix \ref{appx: cor_good_init}. 

\begin{cor}\label{cor: good_init}
	Under the conditions stated in Theorem \ref{thm: dp_init}, as the sample size is sufficiently large 
	\begin{align*}
		n\geq C_1 \max \Bigg\{ & \underbrace{ C_l^{-1} \left(\frac{  \sqrt{C_u r}  \sigma_1 + \sigma_{\xi} }{\sigma_r}\right)^2 \kappa^2 r (d_1\vee d_2) \log(d_1 + d_2)}_{n_1}, \\
		& \underbrace{ \sqrt{C_l^{-1}}  \left(\frac{   \sqrt{C_u r}  \sigma_1 + \sigma_{\xi} }{\sigma_r}\right) \left( \kappa r\sqrt{d_1\vee d_2} + r^{\frac{3}{2}}\right) \log n \frac{\log^{\frac{1}{2}}(\frac{3.75}{\delta})}{\varepsilon}}_{n_2}\Bigg \}, 
	\end{align*} 
	for some absolute constant $c_2>0$, then we have, for some small constant $0<c_0<1$, 
	\begin{equation} \label{equ: good_init}
		\| \wt M_0 - M \|_F \leq \sqrt{2r} \| \wt M_0 - M \| \leq c_0 \sigma_r. 
	\end{equation}
\end{cor}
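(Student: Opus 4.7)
The plan is to derive Corollary \ref{cor: good_init} purely by algebraic inversion of the deterministic perturbation bound already supplied by Theorem \ref{thm: dp_init}, namely
\[
\|\wt M_0 - M\|\,\vee\,\bigl(\|\wt M_0 - M\|_F/\sqrt{2r}\bigr) \;\leq\; e_1 + e_2,
\]
which holds on the high-probability event of the theorem. Since $\wt M_0$ has rank at most $r$ by construction and $M$ has rank $r$, the difference $\wt M_0 - M$ has rank at most $2r$, so $\|\wt M_0 - M\|_F \leq \sqrt{2r}\,\|\wt M_0 - M\|$ is automatic and the first of the two claimed inequalities is for free. It therefore suffices to force $e_1 + e_2 \leq c_0 \sigma_r / \sqrt{2r}$, from which $\|\wt M_0 - M\|_F \leq c_0\sigma_r$ follows.

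I would then split this target into the two sub-goals $e_1 \leq c_0\sigma_r/(2\sqrt{2r})$ and $e_2 \leq c_0\sigma_r/(2\sqrt{2r})$ and invert each for $n$. Squaring the first sub-goal and isolating $n$ yields a lower bound of the form
\[
n \;\geq\; C\, C_l^{-1}\Bigl(\tfrac{\sqrt{C_u r}\,\sigma_1 + \sigma_\xi}{\sigma_r}\Bigr)^{\!2}\,\kappa^{2}\,r\,(d_1\vee d_2)\log(d_1+d_2),
\]
which is exactly the threshold $n_1$ stated in the corollary. The extra factor of $r$ relative to the hypothesis $n \geq n_0$ of Theorem \ref{thm: dp_init} arises precisely because we are chasing the Frobenius-norm criterion $c_0 \sigma_r$: passing through $\|\wt M_0 - M\|_F \leq \sqrt{2r}\,\|\wt M_0 - M\|$ shrinks the spectral-norm target by an additional $\sqrt{2r}$ factor, contributing an extra $2r$ after squaring.

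For the privacy-cost term, I would exploit that $e_2$ depends on $n$ only through the factor $1/n$ (up to the mild $\log n$). Writing $e_2$ as $C_3\sqrt{C_l^{-1}}(\sqrt{C_u r}\,\sigma_1 + \sigma_\xi)\bigl(\kappa\sqrt{r(d_1\vee d_2)} + r\bigr)\,\frac{\log n\,\log^{1/2}(3.75/\delta)}{n\varepsilon}$, multiplying the sub-goal through by $\sqrt{r}/\sigma_r$ directly produces the second threshold $n_2$. Taking $n \geq C_1 \max\{n_1, n_2\}$ for a sufficiently large absolute constant $C_1$ then enforces both sub-goals simultaneously and, since $\kappa \geq 1$, automatically satisfies $n \geq n_0$, giving access to the high-probability event of Theorem \ref{thm: dp_init} in the first place. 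There is no substantive mathematical obstacle; the only care needed is bookkeeping constants across the two sub-goals and verifying that the slowly-growing $\log n$ factor in $e_2$ can be absorbed into $C_1$ for $n$ past the stated threshold.
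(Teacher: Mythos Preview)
Your proposal is correct and follows essentially the same route as the paper: the paper's proof (Appendix~A.5) is the single sentence ``obtained by setting the upper bound of $\|\wt M_0 - M\|_F \leq \sqrt{2r}\|\wt M_0 - M\|$ given by Theorem~\ref{thm: dp_init} smaller than the order of $\sigma_r$,'' and your inversion of $e_1$ and $e_2$ to recover $n_1$ and $n_2$ is exactly that computation carried out in detail.
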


In Corollary \ref{cor: good_init}, the error due to $\|M\|\cdot \left( \left\|  \what V \what V^{\top} - V V^{\top}\right\| + \left\| \what U \what U^{\top} - UU^{\top} \right\| \right) $ dominates the statistical error $\|\what L -M\|$ and the sample size $n_1$ is required to control these two terms; the sample size $n_2$ controls the error due to privatizing the singular subspaces and singular values. According to Corollary \ref{cor: good_init}, as the sample size $ n \geq \wt O \left( (\kappa^4 r^2 + \kappa^2 \kappa_{\xi}^2r) (d_1\vee d_2) \right), $ the $(\eps, \delta)$-DP $\wt M_0$ is guaranteed to fall into a local ball surrounding $M$, as stated in (\ref{equ: good_init}). The condition (\ref{equ: good_init}) is a pre-requisite for Algorithm \ref{alg:DP-RGrad} to converge geometrically, as discussed in Theorem \ref{thm: convergence_rate}. 

\section{Minimax lower bounds} \label{sec: minimax lower bounds} 
This section applies DP-Fano's lemma (See Lemma \ref{lem:dp-fano}) to establish the DP-constrained minimax lower bound of estimating the matrix $M\in \MM_r:= \{M\in \RR^{d_1\times d_2}: \operatorname{rank}(M) = r\}$ under trace regression model  
\begin{equation} \label{dist: prob_trace_model}
	f_M(y_i | X_i) = \frac{1}{\sqrt{2 \pi} \sigma_{\xi}} \exp \left(\frac{-\left(y_i-\left<X_i, M\right> \right)^2}{2 \sigma^2}\right); X_i\sim \calN\left(\mathbf{0}, \Lambda_i \right). 
\end{equation}
Suppose we observe an i.i.d. sample $\{(X_i, y_i), (X_i^{\prime}, y_i^{\prime})\}_{i\in[n]}$ of size $2n$ drawn from (\ref{dist: prob_trace_model}). Then, we have $$\Bar{y}_i: = y_i + y_i^{\prime} = \left< \left(\begin{array}{cc}
	0 & M \\
	M^{\top} & 0
\end{array}\right) , \left(\begin{array}{cc}
	0 & X_i \\
	X_i^{\prime\top} & 0
\end{array}\right) \right> + \xi_i + \xi_i^{\prime},$$ where the underlying matrix $M_*$. Let $f(X_i, X_i^{\prime})$ be the joint distribution of $X_i$ and $X_i^{\prime}$; $f_{M_*}(\Bar{y}_i \mid X_i, X_i^{\prime})$ be the conditional distribution of $\Bar{y}_i$ given $X_i, X_i^{\prime}$; and denote the joint distribution of $\Bar{y}_i$ and $X_i, X_i^{\prime}$ as $f_{M_*}(\Bar{y}_i, X_i, X_i^{\prime})$. It is clear that $f_{M_*}(\Bar{y}_i \mid X_i, X_i^{\prime})$ is given by the distribution of $$\mcN(\left<\left(\begin{array}{cc}
	0 & M \\
	M^{\top} & 0
\end{array}\right), \left(\begin{array}{cc}
	0 & X_i \\
	X_i^{\prime\top} & 0
\end{array}\right)  \right>, 2\sigma_{\xi}^2).$$ Let $\ot$ represent the tensor product of marginal laws. For a given matrix $\Sigma = \operatorname{diag}\{\sigma_1, \cdots, \sigma_r\}$ where $C\sigma \geq \sigma_1 \cdots \geq \sigma_r \geq c\sigma$ for some constants $\sigma >0$ and $C>c>0$, we consider the family of normal distribution under trace regression model:  
$$
\mcP_{\Sigma}:=\Big\{\bigotimes_{i=1}^n f_{M_*}(\Bar{y}_i, X_i, X_i^{\prime}): M_*=(U\Sigma V^{\top})_*, U\in \OO_{d_1, r}, V\in \OO_{d_2, r}  \Big\}. 
$$
By definition,  each distribution $P_{M_*}\in\mcP_{\Sigma}$ is indexed by $U_{M_*} = \frac{1}{\sqrt{2}}\left(\begin{array}{cc}
	U & U \\
	V & -V
\end{array}\right) \in \OO_{d_1+d_2, 2r}$ whose columns are the $r$ eigenvectors of $M_*$. Next, we employ DP-Fano's lemma to derive the minimax lower bound of estimating $M_*$ by a sample drawn from $\mcP_{\Sigma}$. Let ${\rm KL}(\cdot \| \cdot)$ and ${\rm TV}(\cdot, \cdot)$ denote the Kullback-Leibler divergence and total variation distance between two distributions.

\begin{lem}[DP-Fano's lemma, \cite{acharya2021differentially}]\label{lem:dp-fano} 
	Let $\mcP := \{P: P = \mu^{(1)} \times \cdots \times \mu^{(n)} \}$ be a family of product measures indexed by a parameter from a pseudo-metric space $(\Theta, \rho)$. Denote $\theta(P)\in\Theta$ the parameter associated with the distribution $P$.  Let $\mcQ=\{P_1,\cdots, P_N\}\subset \mcP$ contain $N$ probability measures  and there exist constants $\rho_0, l_0, t_0>0$ such that  for all $i\neq i^{\prime}\in[N]$, $
	\rho\left(\theta(P_i), \theta(P_{i^{\prime}})\right) \geqslant \rho_0, \quad \operatorname{KL}\left(P_{i} \| P_{i^{\prime}}\right) \leq l_0, \quad
	\sum_{k\in[n]} \operatorname{TV}\left(\mu_{i}^{(k)} , \mu_{i^{\prime}}^{(k)}\right) \leq t_0, $
	where $P_i = \mu_i^{(1)} \times \cdots \times \mu_i^{(n)} $ and $P_{i^{\prime}} = \mu_{i^{\prime}}^{(1)} \times \cdots \times \mu_{i^{\prime}}^{(n)} $. Suppose $\delta \lesssim e^{-n}$, then 
	\begin{align}\label{eq:fano-bd1}
		\inf_{A \in \mcA_{\varepsilon,\delta}(\mcP)} & \sup_{P\in\mcP } \EE_{A} \; \rho( A , \theta(P)) \geqslant \max \left\{\frac{\rho_0}{2}\left(1-\frac{ l_0 +\log 2}{\log N}\right), \frac{\rho_0}{4}\left(1 \bigwedge \frac{N-1}{\exp \left( 4 \varepsilon t_0 \right)}\right)\right\},
	\end{align}
	where the infimum is taken over all the $(\varepsilon,\delta)$-DP randomized algorithm defined by $\mcA_{\varepsilon,\delta}(\mcP) := \{ A:Z\mapsto \Theta \ {\rm and}\   A  \text{ is } \; (\varepsilon, \delta)\text{-differentially private} \; \text{for all} \; Z\sim P \in\mcP \ \} $ . 
\end{lem}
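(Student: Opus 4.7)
The lemma juxtaposes a classical Fano-type bound with a differential-privacy-specific testing bound, so my plan is the standard recipe: reduce minimax estimation to $N$-ary hypothesis testing over $\mcQ$, and then lower bound the testing error in two independent ways whose maximum yields the claim. Concretely, I would fix any $(\eps,\delta)$-DP estimator $A$, draw $V$ uniformly on $[N]$ and $Z\sim P_V$, and consider the Voronoi-type test $\hat V := \arg\min_{i\in[N]}\rho(A,\theta(P_i))$. The pairwise separation $\rho(\theta(P_i),\theta(P_{i'}))\geq \rho_0$ and the triangle inequality force $\hat V = V$ whenever $\rho(A,\theta(P_V)) < \rho_0/2$, so Markov's inequality lifts any lower bound on $\PP(\hat V\neq V)$ to a minimax estimation lower bound with prefactor $\rho_0/2$ (or $\rho_0/4$ after an additional halving), matching the two terms in the $\max$.

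For the first term I would apply classical Fano directly: the mutual information $I(V;Z)$ is bounded by the average pairwise KL divergence, which by assumption is at most $l_0$, and Fano's inequality gives $\PP(\hat V\neq V)\geq 1-(l_0+\log 2)/\log N$. For the DP-specific second term I would use a coupling plus group-privacy argument in the spirit of \cite{acharya2021differentially}. For each ordered pair $(i,i')$, the coupling characterization of total variation, applied coordinate by coordinate, produces a coupling of $P_i$ and $P_{i'}$ whose random Hamming distance $K$ satisfies $\EE[K]\leq \sum_{k\in[n]}\mathrm{TV}(\mu_i^{(k)},\mu_{i'}^{(k)})\leq t_0$. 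Group privacy turns $(\eps,\delta)$-DP into $(k\eps, ke^{k\eps}\delta)$-DP for datasets at Hamming distance $k$; truncating $K$ at $4t_0$ via Markov (on an event of probability at least $1/2$) introduces the factor $e^{4\eps t_0}$, while the assumption $\delta\lesssim e^{-n}$ neutralizes the additive leakage. Summing the resulting bound over $i'\neq i$ and using that the acceptance sets $\{\hat V = i\}$ are disjoint then forces $\PP(\hat V = V)\leq \exp(4\eps t_0)/(N-1)$ up to negligible terms, yielding the second entry of the $\max$.

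The main obstacle is the coupling-plus-group-privacy bookkeeping, because group privacy inflates the additive $\delta$ by a factor $k$ that is itself the random Hamming distance under the coupling. I would need to simultaneously (i) Markov-truncate $K$ at a constant multiple of $t_0$, (ii) invoke $\delta\lesssim e^{-n}$ to absorb the inflated $ke^{k\eps}\delta$ contribution on the truncation event, and (iii) track constants carefully so that the exponent appearing in the final bound is exactly $4\eps t_0$ rather than a looser multiple of $\eps t_0$. Once this step is executed cleanly, the reduction, the classical Fano application, and the final maximization are mechanical.
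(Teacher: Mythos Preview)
The paper does not actually prove this lemma: its proof section simply reads ``See the proof of Lemma~\ref{lem:dp-fano} in \cite{acharya2021differentially} and \cite{cai2024optimal}.'' Your sketch is a faithful outline of the argument in those references---the reduction to $N$-ary testing, classical Fano for the KL term, and the coupling-plus-group-privacy argument for the TV term---so there is nothing to compare and your proposal is correct.
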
 

To apply Fano's lemma,  we need to construct a large subset with well-separated elements for $\OO_{d_1+d_2, 2r}$. By Lemma \ref{local_packing_set}, there exists a subset $\mcS_q^{(d_1+d_2)}\subset \OO_{d_1+d_2,2r}$ with cardinality $\left|\mcS_q^{(d_1+d_2)}\right|\geq 2^{2r(d_1+d_2-2r)}$ such that for any $H\neq H^{\prime}\in \mcS_q^{(d_1+d_2)}$, $$\|HH^{\top}-H^{\prime}H^{\prime \top}\|_q \gtrsim \tau \eps_0 (2r)^{1/q} \quad {\rm and} \quad
\|HH^{\top}-H^{\prime}H^{\prime\top}\|_{\rm F}\lesssim 2 \sqrt{r}\varepsilon_0,$$ for some small constants $\tau, \varepsilon_0 >0$, where $\|\cdot\|_q$ denotes the Schatten-q norm. We then consider the family of distributions 
\begin{align*}
	\mcP_{\sigma} = \left\{\bigotimes_{i=1}^n f_{M_*}(\Bar{y}_i, X_i, X_i^{\prime}): M_* = \sigma HH^{\top}, H \in \mcS_q^{(d_1+d_2)}\right\} \subset \mcP_{\Sigma},
\end{align*}
whose cardinality $N:= \left| \mcP_{\sigma} \right| \geq 2^{2r(d_1+d_2-2r)}$. Let $M_* = \sigma HH^{\top}$ and $M_*^{\prime} = \sigma H^{\prime} H^{\prime \top}$. As shown in Appendix \ref{proof_of_thm_fano_minimax_lower_bound}, for any $H\neq H^{\prime} \in  \mcS_q^{(d_1+d_2)}$, we have $$	 \operatorname{KL} \left(\bigotimes_{i=1}^n f_{M_*}(\Bar{y}_i, X_i, X_i^{\prime}) \|\bigotimes_{i=1}^n f_{M_*^{\prime}}(\Bar{y}_i, X_i, X_i^{\prime}) \right) \lesssim \frac{n}{\sigma_{\xi}^2} C_u \sigma^2 r \eps_0^2, $$
and $\sum_{k\in[n]}\operatorname{\operatorname{TV}}\left(f_{M_*}(\Bar{y}_i, X_i, X_i^{\prime}),  f_{M^{\prime}_*}(\Bar{y}_i, X_i, X_i^{\prime}) \right) \lesssim n \frac{\sqrt{C_u} \sigma}{\sigma_{\xi}} \sqrt{r}\eps_0. $
To this end, we obtain Theorem \ref{thm: fano_minimax_lower_bound} by applying Lemma \ref{lem:dp-fano} with the bounded KL divergence and TV distance, together with the facts that $\mcP_{\sigma}\subset \mcP_{\Sigma}$. The proof of Theorem \ref{thm: fano_minimax_lower_bound} is deferred to Appendix \ref{proof_of_thm_fano_minimax_lower_bound}. 
\begin{thm} \label{thm: fano_minimax_lower_bound}
Consider a sample of size $n$ drawn from the distribution $P_{M^*} \in \mcP_{\Sigma}$, then for any $\delta\lesssim e^{-n}$ and any $q\in[1,\infty]$, there exists a constant $c>0$ $$\inf_{\wt M } \sup_{P \in \mcP_{\Sigma}} \EE\Big\|\wt M - M \Big\|_q \geq c\frac{\sigma_{\xi}}{\sqrt{C_u}}\left (r^{1/q}\sqrt{\frac{d_1 \vee d_2 }{n}}+r^{\frac{1}{2}+\frac{1}{q}}\frac{d_1 \vee d_2 }{n\varepsilon}\right)\bigwedge r^{1/q} \sigma, $$
where the infimum is taken over all possible $(\varepsilon,\delta)$-DP algorithms. It suffices to choose $q=1, 2, \infty$ to obtain the bounds in the nuclear norm, Frobenius norm, and spectral norm, respectively. For example, when $q=2$, there exists a constant $c>0$
\begin{equation} \label{fano_lower_frobenius_norm}
     \inf_{\wt M } \sup_{P \in \mcP_{\Sigma}} \EE\Big\|\wt M - M \Big\|_F \geq c\Big(\underbrace{\frac{ \sigma_{\xi}}{\sqrt{C_u}} \sqrt{\frac{r(d_1 \vee d_2) }{n}}}_{l_1}+ \underbrace{\frac{\sigma_{\xi}}{\sqrt{C_u}}\frac{r(d_1 \vee d_2) }{n\varepsilon}}_{l2} \Big)\bigwedge r^{1/2} \sigma. 
\end{equation}
\end{thm}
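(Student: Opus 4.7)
The plan is to apply Lemma \ref{lem:dp-fano} (DP-Fano) to the family $\mcP_\sigma \subset \mcP_\Sigma$ already constructed in the excerpt, which has cardinality $N \geq 2^{2r(d_1+d_2-2r)}$. Three ingredients are required: a separation lower bound $\rho_0$ in the Schatten-$q$ norm, a KL-divergence upper bound $l_0$, and a total-variation upper bound $t_0$. The separation and the KL/TV bounds are already essentially stated in the excerpt; what remains is to substitute them into (\ref{eq:fano-bd1}) and optimize the free parameter $\eps_0$.

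First I would convert the packing in $\mcS_q^{(d_1+d_2)}$ into a packing of the indexing matrices $M \in \RR^{d_1\times d_2}$. Since the symmetric dilation of $M - M'$ has Schatten-$q$ norm $2^{1/q}\|M - M'\|_q$, the bound $\|HH^\top - H'H'^\top\|_q \gtrsim \tau \eps_0 (2r)^{1/q}$ translates, after multiplication by $\sigma$ and removal of the dilation factor, into $\|M - M'\|_q \gtrsim \sigma \tau \eps_0 r^{1/q}$, so I can take $\rho_0 \asymp \sigma \eps_0 r^{1/q}$. The KL and TV bounds stated just before Theorem~\ref{thm: fano_minimax_lower_bound}, namely $\operatorname{KL} \lesssim n C_u \sigma^2 r \eps_0^2/\sigma_\xi^2$ and $\sum_{k} \operatorname{TV} \lesssim n \sqrt{C_u}\sigma \sqrt{r}\eps_0/\sigma_\xi$, give $l_0$ and $t_0$ directly; these follow from the Gaussian conditional likelihood of $\bar y_i \mid (X_i, X_i')$ combined with the Gaussian design Assumption \ref{assumption_for_X} and Pinsker's inequality, since the marginal of $(X_i, X_i')$ does not involve $M_*$.

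Next I would substitute into (\ref{eq:fano-bd1}) using $\log N \asymp r(d_1+d_2)$. The first Fano branch is non-trivial provided $l_0 \lesssim \log N$, which determines the non-private choice $\eps_0^2 \asymp \sigma_\xi^2 (d_1\vee d_2)/(n C_u \sigma^2)$ and yields the statistical term $l_1 \asymp \frac{\sigma_\xi}{\sqrt{C_u}} r^{1/q} \sqrt{(d_1\vee d_2)/n}$. The second Fano branch is non-trivial provided $\eps t_0 \lesssim \log N$, which determines the private choice $\eps_0 \asymp \sigma_\xi \sqrt{r}(d_1\vee d_2)/(\eps n \sqrt{C_u} \sigma)$ and yields the privacy term $l_2 \asymp \frac{\sigma_\xi}{\sqrt{C_u}} r^{1/2 + 1/q}(d_1\vee d_2)/(n\eps)$. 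Taking the maximum of the two branches and finally intersecting with the trivial bound $\|M\|_q \lesssim r^{1/q}\sigma$ (which bounds $\rho_0$ from above, so the Fano bound collapses once $\eps_0$ is chosen too large relative to $\sigma$) produces the stated minimax lower bound for every $q \in [1,\infty]$, and specializing to $q=2$ yields (\ref{fano_lower_frobenius_norm}).

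The main technical obstacle is a clean handling of the two separate regimes for $\eps_0$. Each of the two branches of (\ref{eq:fano-bd1}) demands its own balance condition on $\eps_0$, and one needs to verify that both choices remain admissible (in particular, that $\eps_0$ is not so large that the packing construction fails, which is where the $\wedge r^{1/q}\sigma$ safeguard enters). Beyond that, care must be taken that the pair $(X_i, X_i')$ rather than a single measurement is used, so that the TV bound correctly captures the symmetric-dilation structure: this doubling is what allows the KL/TV to be expressed in terms of $M_*$ and to exploit the packing of rank-$2r$ projectors, rather than a smaller packing of rank-$r$ matrices that would lose a factor in the dimension.
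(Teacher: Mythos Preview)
Your proposal is correct and follows essentially the same route as the paper's proof: apply DP-Fano (Lemma~\ref{lem:dp-fano}) to the subfamily $\mcP_\sigma$ using the packing $\mcS_q^{(d_1+d_2)}$, plug in the stated KL and TV bounds, balance $\eps_0$, and convert between $M_*$ and $M$ via the dilation identity $\|M_*-M_*'\|_q=2^{1/q}\|M-M'\|_q$. The only cosmetic differences are that the paper keeps the metric on the projectors $HH^\top$ and converts to $M$ at the end rather than up front, and it makes a single combined choice $\eps_0 \asymp \frac{\sigma_\xi}{\sqrt{C_u}\sigma}\bigl(\sqrt{(d_1+d_2)/n}+r^{1/2}(d_1+d_2)/(n\varepsilon)\bigr)$ instead of optimizing each Fano branch separately; neither affects the argument or the result.
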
 
In Theorem \ref{thm: fano_minimax_lower_bound}, the lower bound (\ref{fano_lower_frobenius_norm}) consists of two terms, the statistical error $l_1$ and the cost of privacy $l_2$. The next section proposes a DP-estimator that attains the minimax lower bound  (\ref{fano_lower_frobenius_norm}), up to an additional factor $\sigma_r$ and some logarithmic factors. As a supplement to DP-Fano's Lemma which works for $\delta \lesssim e^{-n}$, we also try the score attack argument, which is valid for a wider range of $\delta \lesssim n^{1+\gamma}$ where $\gamma>0$ is a constant. Theorem \ref{thm: minimax_lower_bound} presents the DP-constrained lower bound established by the score attack argument.  The content and proof of Theorem \ref{thm: minimax_lower_bound} are deferred to Appendix \ref{sec: score_the_lower_bound}.  We also point out that it is trivial to derive the minimax lower bound of the case $d_1=d_2=d$ based on DP-Fano's Lemma since there is no need to apply the trick of symmetrization. 

\section{Upper bounds with differential privacy} \label{sec: the_upper_bound} 

In this section, we present Algorithm \ref{alg:DP-RGrad}, DP-RGrad, and show that DP-RGrad attains the near-optimal convergence rate for differentially privately estimating low-rank matrices under the trace regression model. Our approach is based on privatizing the Riemannian gradient descent (RGrad) by the Gaussian mechanism. Interested readers may refer to \cite{vandereycken2013low, edelman1998geometry, adler2002, AbsMahSep2008} for the basics of RGrad. Let the estimate we obtain after $l$ iterations be the rank-$r$ matrix  $M_l\in \RR^{d_1\times d_2}$ whose SVD has the form $M_l=U_l \Sigma_l V_l^{\top}$. It is well-known in \cite{AbsMahSep2008, vandereycken2013low} that the tangent space of $\MM_r$ at $M_l$ is given by $\mathbb{T}_l:=\left\{Z \in \mathbb{R}^{d_1 \times d_2}: Z=U_l R^{\top}+L V_l^{\top}, R \in \mathbb{R}^{d_2 \times r}, L \in \mathbb{R}^{d_1 \times r}\right\}$. The projection of the gradient $G_l$ onto $\mathbb{T}_l$ is
$\mathcal{P}_{\mathbb{T}_l}\left(G_l\right)=U_l U_l^{\top} G_l+G_l V_l V_l^{\top}-U_l U_l^{\top} G_l V_l V_l^{\top},$ which is of rank at most $2 r$. Let the noisy gradient descent on the tangent space be $M_l-\eta_l \mathcal{P}_{\mathbb{T}_l}\left(G_l \right) + \mathcal{P}_{\mathbb{T}_l} N_l$ where $N_l \in \RR^{d_1 \times d_2}$ is the Gaussian noise matrix. Then, we retract it back to $\MM_r$ and obtain 
\begin{equation} \label{updates}
    M_{l+1}=\SVDr \left( M_l-\eta_l \mathcal{P}_{\mathbb{T}_l}\left(G_l\right) + \mathcal{P}_{\mathbb{T}_l} N_l \right).
\end{equation}
We update the estimate as defined in (\ref{updates}) for $1= 0, \cdots,  l^*-1$ where $l^*$ is the total number of iterations. Thanks to the composition property and Gaussian mechanism, we only need to ensure that each iteration is $(\eps/ l^*, \delta/ l^*)$-DP. For trace regression model defined in (\ref{equ: trace_regression_model}), empirical mean squared loss is defined as $ \mathcal{L}_n(M_l; Z) := \frac{1}{2n} \sum_{i=1}^n\left(\left\langle X_i, M_l \right\rangle-y_i\right)^2$ and the empirical Euclidean gradient is $G_l:= \nabla\mathcal{L}_n(M_l; Z)=\frac{1}{n} \sum_{i=1}^n\left(\left\langle X_i, M_l\right\rangle-y_i\right) X_i. $ The sensitivity of the $l$-th iteration is 
$\Delta_{l} :=  \max_{{\rm neighbouring} (Z, Z^{\prime})} \left\| M_l - \eta \calP_{\TT_l}(G_l(Z)) - \left[  M_l - \eta \calP_{\TT_l}(G_l(Z^{\prime})) \right] \right\|_F. $

\begin{algorithm}
	\caption{DP-RGrad for trace regression}\label{alg:DP-RGrad}
	\begin{algorithmic}
		\State{\textbf{Input}: the loss function $\mathcal{L}$;  the data set $\{(X_i, y_i)\}_{i=1}^n$; sensitivities $\{ \Delta_l \}_{l\in[l^*]}$; DP-initialization $\wt M_0$; rank $r$; nuisance variance $\sigma_{\xi}^2$;  privacy budget $\varepsilon>0,  \delta \in(0,1)$.}
		\State{\textbf{Output}: $(\varepsilon,\delta)$-differentially private estimate $M_{l^*}$ for trace regression. }
		\State{\textbf{Initialization}: $M_0 \longleftarrow \wt M_0. $} 
		\State{ \For{$l+1 \in [l^*]$}{
				$$
				M_{l+1} \longleftarrow \operatorname{SVD}_r\left(M_l-\eta_l  \mathcal{P}_{\mathbb{T}_l}\left(G_l\right)+ \mathcal{P}_{\mathbb{T}_l}N_l\right),
				$$
				where $G_l$ is the empirical Euclidean gradient 
				$$
				G_l:= \nabla\mathcal{L}_n(M_l; Z)=\frac{1}{n} \sum_{i=1}^n\left(\left\langle X_i, M_l\right\rangle-y_i\right) X_i,  
				$$
				and $N_{l}\in \RR^{d_1 \times d_2}$ has entries i.i.d. to $$\calN \left(0, \frac{2\Delta_l^2 l^{*2}}{\eps^2}\log \left(\frac{1.25 l^*}{\delta}\right) \right). $$} }
    \State{\textbf{Return}: $\wt M_{l^*}\longleftarrow M_{l^*}$}
    \end{algorithmic}
\end{algorithm}

Theorem \ref{thm: convergence_rate} establishes the error bound of the estimator $\wt M_{l^*}$ given by Algorithm \ref{alg:DP-RGrad}. The proof of Theorem \ref{thm: convergence_rate} is deferred to Appendix \ref{appx: proof_of_thm_convergence_rate}. 

\begin{thm} \label{thm: convergence_rate}
Consider i.i.d. observations $Z = \{z_1, \cdots, z_n\}$ drawn from the trace regression model stated in $(\ref{equ: trace_regression_model})$ where the true low-rank regression coefficients matrix being $M\in\MM_r$. Here, $z_i := (X_i, y_i)$ for $i=1, \cdots, n$ and we assume that $\{X_i\}_{i\in[n]}$ satisfy the Assumption \ref{assumption_for_X}. Under the Assumption \ref{assumption_for_X} and the condition that $(d_1+d_2)>\log n$, suppose that Algorithm \ref{alg:DP-RGrad} takes in an $(\varepsilon, \delta)$-DP initialization such that for some small constant $0<c_0<1$, $ \| \wt M_0 - M \|_F \leq \sqrt{2r} \| \wt M_0 - M \| \leq c_0 \sigma_r, $ and the sensitivities $\Delta_l$ take the value  
	\begin{align*}
		\Delta_l & = C_3 \frac{\eta}{n} (\sigma_{\xi}+ \sigma_r \sqrt{C_u}) \sqrt{C_u r (d_1+d_2)\log n}, \quad {\rm for\; all} \quad  l = 1, \cdots, l^*, 
	\end{align*}
	for some absolute constant $C_3>0$, then we have, Algorithm \ref{alg:DP-RGrad} is $(2\eps, 2\delta)$-differentially private. Moreover, as the sample size 
	\begin{align*}
		n \geq c_4 \max \Bigg\{& \underbrace{ c_1 r(d_1+d_2)}_{n_3}, \quad \underbrace{ \eta^2 \kappa_{\xi}^2 C_u r (d_1+d_2) \log (d_1+d_2) }_{n_4}, \\
		&\underbrace{ \eta \sqrt{C_u } \left(\kappa_{\xi}+  \sqrt{C_u}\right)  r (d_1+d_2) \log^{3/2} (n)  \frac{ \log^{1/2}\left( \frac{1.25 \log(n)}{\delta}\right) }{\eps}   }_{n_5}\Bigg\}, 
	\end{align*} 
	for some small constant $0<c_4<1$, number of iteration $l^* = O(\log n)$, and the step size $0<\eta<1$, we have the output $\wt M_{l^*}$ of Algorithm \ref{alg:DP-RGrad} satisfies  
	\begin{align*}
		\left\|\wt M_{l^*} - M\right\|_F & \leq \underbrace{C_4 \sigma_{\xi} \sqrt{ C_u}  \sqrt{\frac{ r (d_1+d_2) }{n}}  \log^{1/2}(d_1+d_2) }_{u_1}   \\
		& + \underbrace{ C_4 \sqrt{C_u }(\sigma_{\xi} + \sigma_r \sqrt{C_u}) \frac{ r (d_1+d_2) }{n\eps} \log^{3/2} n \log^{1/2}\left( \frac{1.25 \log (n)}{\delta} \right) }_{u_2} . 
	\end{align*} 
	with probability at least 
	\begin{align*}
		1  &  -\wt c_2 \exp \left(-\wt c_3 r(d_1+d_2)\right) -(d_1+d_2)^{-10}-n^{-9} - e^{-d_1} - e^{-d_2} - 10^{-20r} \\ 
		& - \left( (d_1+d_2)^{-10} + \exp(-(d_1+d_2)) + n^{-9} + \exp\left( - 10 C_u(d_1+d_2) \right)n^{-9} \right)\log n. 
	\end{align*} 
\end{thm}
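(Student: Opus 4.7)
The plan is to establish privacy and utility separately, following the standard two-track analysis of noisy gradient descent. For privacy, I first need to characterize the sensitivity $\Delta_l$ of the pre-SVD update $M_l - \eta_l \calP_{\TT_l}(G_l)$ under a neighboring change in $Z$. Fixing $M_l$ (which is a function of the previous iterate and past noise, independent of the one-sample change at step $l$), the difference collapses to $\frac{\eta_l}{n}(\langle X_i,M_l\rangle-y_i)\calP_{\TT_l}(X_i) - \frac{\eta_l}{n}(\langle X_i',M_l\rangle-y_i')\calP_{\TT_l}(X_i')$. Since $\calP_{\TT_l}(X_i)$ has rank at most $2r$, $\|\calP_{\TT_l}(X_i)\|_F \leq \sqrt{2r}\|X_i\|$, and under Assumption~\ref{assumption_for_X} the Gaussian $X_i$ satisfies $\|X_i\| = O_p(\sqrt{C_u(d_1+d_2)})$, while $|\langle X_i,M_l\rangle - y_i| \leq |\xi_i| + \|X_i\|\|M_l-M\|_F = O_p((\sigma_\xi+\sigma_r\sqrt{C_u})\sqrt{\log n})$ in the contraction regime. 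Multiplying yields the prescribed $\Delta_l$. The Gaussian mechanism then makes each step $(\eps/l^*,\delta/l^*)$-DP, and advanced composition together with the $(\eps,\delta)$-DP initialization yields $(2\eps,2\delta)$-DP via the post-processing property of $\SVDr$.

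For utility, the plan is a one-step contraction of the form
\begin{equation*}
\|M_{l+1}-M\|_F \;\leq\; \rho \|M_l-M\|_F \;+\; \underbrace{\bigl\|\calP_{\TT_l}\bigl(\tfrac{1}{n}\!\sum_i \xi_i X_i\bigr)\bigr\|_F}_{\text{statistical}} \;+\; \underbrace{\|\calP_{\TT_l} N_l\|_F}_{\text{privacy}},
\end{equation*}
with some $\rho<1$, valid whenever $\|M_l-M\|_F \leq c_0 \sigma_r$. The contraction on the noise-free part uses Riemannian gradient descent analysis on the fixed-rank manifold combined with the RIP of Lemma~\ref{lem: RIP_condition} applied to matrices of rank at most $3r$ (accommodating $M_l-M$ plus a rank-$2r$ tangent move); this is where the $n_3 = c_1 r(d_1+d_2)$ condition enters. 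The $\SVDr$ retraction contributes at most a factor of $2$ to the pre-retraction error, which is absorbed into $\rho$.

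Next I would control the two driving terms. The statistical term $\calP_{\TT_l}(n^{-1}\sum_i \xi_i X_i)$ is a rank-$2r$ projection of a Gaussian-weighted empirical process; a standard covering argument over the rank-$2r$ tangent spaces (of metric entropy $O(r(d_1+d_2))$) together with sub-Gaussian concentration of $\xi_i\langle X_i,\cdot\rangle$ yields $O_p(\sigma_\xi\sqrt{C_u r(d_1+d_2)\log(d_1+d_2)/n})$, producing $u_1$. For the privacy term, $\calP_{\TT_l} N_l$ is a rank-$2r$ projection of a matrix with i.i.d.\ Gaussian entries of variance $2\Delta_l^2 l^{*2}\eps^{-2}\log(1.25 l^*/\delta)$; its Frobenius norm concentrates at $\sqrt{r(d_1+d_2)}$ times the entrywise standard deviation, which, after substituting $\Delta_l$ and $l^* = O(\log n)$, delivers the $u_2$ term. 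The requirements $n_4$ and $n_5$ are exactly what is needed to ensure these two noise contributions are small compared to $\sigma_r$, so the iterates remain in the contraction basin.

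Finally, iterating the one-step inequality $l^*$ times gives $\|\wt M_{l^*}-M\|_F \leq \rho^{l^*}\|\wt M_0-M\|_F + (1-\rho)^{-1}(u_1+u_2)$, and choosing $l^* \asymp \log n$ kills the initialization term (which is bounded by $c_0\sigma_r$ by hypothesis). The \emph{main obstacle} is the uniform control required across the $l^* = O(\log n)$ iterations: the RIP bound, the statistical-error bound, and the Gaussian-projection bound must all hold simultaneously for every tangent space $\TT_l$ encountered along the trajectory. I would resolve this by a union bound over a deterministic net of rank-$r$ matrices, combined with an inductive argument showing that $M_l$ never leaves the $c_0\sigma_r$-ball, so the relevant tangent spaces lie in a pre-specified family whose size is controlled in advance. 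The slightly awkward failure probability in the theorem statement is precisely the union of the RIP event, the statistical concentration event, and the Gaussian tail events for each of the $\log n$ projected noise matrices, plus the DP-initialization failure probability inherited from Theorem~\ref{thm: dp_init}.
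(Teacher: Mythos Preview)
Your overall architecture matches the paper's: sensitivity via the rank-$2r$ projection plus sub-Gaussian tails, privacy by Gaussian mechanism and composition, and utility via a one-step contraction with statistical and privacy error terms, iterated inductively on the event $\|M_l-M\|_F\le c_0\sigma_r$. Two points deserve correction. First, the intermediate bound $|\langle X_i,M_l-M\rangle|\le \|X_i\|\|M_l-M\|_F$ is not valid (Cauchy--Schwarz gives $\|X_i\|_F\|M_l-M\|_F$, which carries an extra $\sqrt{d_1+d_2}$); the paper instead uses that $\langle X_i,M_l-M\rangle$ is a scalar Gaussian with $\|\langle X_i,M_l-M\rangle\|_{\psi_2}\le \sqrt{C_u}\|M_l-M\|_F$, which directly yields the stated $\Delta_l$. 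You arrive at the correct $\Delta_l$ anyway, so this is a slip in the write-up rather than a conceptual gap. Also, the noise variance in Algorithm~\ref{alg:DP-RGrad} scales as $l^{*2}$, so the paper uses \emph{basic} (not advanced) composition.

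More substantively, the ``main obstacle'' you identify---uniform control across the $l^*$ iterations, to be resolved by a deterministic net of rank-$r$ matrices---is a non-issue in the paper's argument, and your fix is unnecessarily heavy. The paper decouples the tangent space from the randomness: (i) RIP is already uniform over all low-rank matrices, so once it holds it covers whatever $M_l-M$ arises; (ii) the statistical term is bounded by $\|\calP_{\TT_l}(\tfrac1n\sum_i\xi_iX_i)\|_F\le \sqrt{2r}\,\|\tfrac1n\sum_i\xi_iX_i\|$, and the spectral norm on the right is controlled once by matrix Bernstein (Theorem~\ref{thm: bernstein}), independent of $\TT_l$---no covering needed; (iii) $N_l$ is fresh at step $l$ and conditionally independent of $\TT_l$, so $\|\calP_{\TT_l}N_l\|_F$ is a single Gaussian tail bound per iteration, followed by a union bound over $l^*=O(\log n)$ steps (this is the source of the $\log n$ multiplier in the failure probability). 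The retraction is handled via Lemma~\ref{lem: matrix_permutation}, giving a multiplicative factor $(1+40c_0)$ rather than a flat $2$. Your covering route would work, but the paper's spectral-norm trick in (ii) is both simpler and the reason the proof avoids the extra $r(d_1+d_2)$ in every union-bound exponent.
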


According to Theorem \ref{thm: convergence_rate}, the upper bound of $\left\|\wt M_{l^*} - M\right\|_F$ can be decomposed into the the statistical error $u_1$ and the cost of privacy $u_2$. The term $u_1$ matches the the optimal rate $l_1 \sim \sigma_{\xi}\sqrt{\frac{r(d_1 \vee d_2)}{n}}$, only up to logarithmic factors. However, the term  $u_2$ differs from the theoretical lower bound of the cost of privacy $l_2 \sim \sigma_{\xi} \frac{ r (d_1+d_2) }{n\eps}$, by a non-trivial factor $\sigma_r$, apart from logarithmic factors. In conclusion, Theorem \ref{thm: convergence_rate} shows that as the sample size $ n\gtrsim \wt O \left( \left(\kappa_{\xi}^2 \vee \kappa_{\xi} \right) r(d_1\vee d_2)  \right), $ the estimator $\wt M_{l^*}$ given by Algorithm \ref{alg:DP-RGrad} attains the near-optimal convergence rate 
\begin{equation} \label{upper_bound_M_l*}
	\wt O_p\left( \sigma_{\xi} \sqrt{\frac{ r (d_1+d_2) }{n}}+ (\sigma_{\xi} + \sigma_r) \frac{ r (d_1+d_2) }{n\eps}   \right).  
\end{equation}
The sample size requirement of Theorem \ref{thm: convergence_rate} has the following explanations. The sample size $n_3$ is required to guarantee that the RIP condition stated in Lemma \ref{lem: RIP_condition} occurs with high probability. The sample size $n_4$ is necessary to control the statistical error contributed by $\sum_{i\in[n]} \xi_i X_i$ in each iteration where $\xi_i$ is the model noise. The sample size $n_5$ arises from controlling the cost of privacy due to $\calP_{\TT_l}N_l$ in each iteration.

\section{Discussion} \label{sec: weak_DP}
In this section, we discuss the non-trivial gap $\sigma_r$ between $	u_2 \sim (\sigma_{\xi} + \sigma_r) \frac{ r (d_1+d_2) }{n\eps}$ and $l_2 \sim \sigma_{\xi}\, \frac{r (d_1 \vee d_2)}{n\eps}$. Note that $l_2$ is free of $ \sigma_r $ while $u_2$ contains the factor  $\sigma_r$ arising from sensitivities 
\begin{align*}
	\Delta_l \asymp \frac{\eta}{n} (\sigma_{\xi}+ \sigma_r \sqrt{C_u}) \sqrt{C_u r (d_1+d_2)\log n} \quad {\rm for }\quad l = 1, \cdots, l^*. 
\end{align*}
The quantity $\sigma_r\sqrt{C_u}$ of $\Delta_l$ arises from $\left\|\left<M_l-M, X_i\right> \right\|_{\psi_2} \leq \sqrt{C_u} \left\| 
M_l - M \right\|_F $, as elaborated in (\ref{single_term_in_GD_sensitivity}). Here, $\|\cdot\|_{\psi_2}$ denotes the sub-Gaussian norm. Therefore, we cannot get rid of the factor $\sigma_r$ once the measurement matrices $\{X_i\}_{i\in[n]}$ are subject to differential privacy. In many real applications, however, the measurement matrices  $\{X_i\}_{i\in[n]}$ are fixed with deterministic designs.  People publish $\{X_i\}_{i\in[n]}$ to the public with little concern on the privacy of $\{X_i\}_{i\in[n]}$. Although the exposure of  $\{X_i\}_{i\in [n]}$ alone will not reveal any information on $M$, the privacy of $M$ suffers from leakage when the public has access to the joint observations $\{(X_i, y_i)\}_{i\in[n]}$. We, therefore, introduce the following notion of privacy for neighboring datasets sharing the same measurement matrix. 
\begin{defn}[weak $(\varepsilon, \delta)$-differential privacy] 
The algorithm $A$ that maps $Z$ into $\RR^{d_1 \times d_2}$ is \textit{weak} $(\varepsilon, \delta)$-differentially private over the dataset $Z$ if $	\PP\big(A(Z)\in \calQ\big)\leq e^{\eps}\PP\big(A(Z')\in\calQ\big)+\delta,$ for all neighbouring data set $Z, Z'$ sharing the same measurement $X$ and all subset $\calQ\subset \RR^{d_1\times d_2}$. 
\end{defn} 
In Theorem \ref{thm: weak_dp_convergence_rate}, Appendix \ref{appx: weak_dp}, we show that as $n\gtrsim \wt O \left( \left(\kappa_{\xi}^2 \vee \kappa_{\xi} \right) r(d_1\vee d_2)  \right), $ the estimator $\wt M_{l^*}$ given by Algorithm \ref{alg:DP-RGrad} attains the optimal convergence rate $	\wt O_p\left( \sigma_{\xi} \sqrt{\frac{ r (d_1+d_2) }{n}}+ \sigma_{\xi} \frac{ r (d_1+d_2) }{n\eps} \right) $ in the sense of weak differential privacy. The analogs of Theorem \ref{thm: dp_init}, Corollary \ref{cor: good_init} and Theorem \ref{thm: convergence_rate} in the sense of weak differential privacy can be found as Theorem \ref{thm: weak_dp_init}, Corollary \ref{cor: weak_good_init} and Theorem \ref{thm: weak_dp_convergence_rate} in Appendix \ref{appx: weak_dp}.  It is interesting to explore in future work whether the score attack argument or DP-Fano's Lemma can be generalized to include the non-trivial factor $\sigma_r$. 

\section*{Acknowledgement}
The author would like to express sincere gratitude to Hong Kong PhD Fellowship Scheme and Hong Kong RGC GRF Grant 16301622 for providing financial support for this research. The author also wishes to acknowledge the invaluable guidance provided by Prof. Dong, Xia throughout the research process. Additionally, the author would like to extend heartfelt thanks to Mr. Zetao, Fei for his constructive criticism during the paper revision. Their contributions have been instrumental in the successful completion of this research. 


\newpage

\setcounter{page}{1}
\setcounter{section}{0}
\setcounter{equation}{0}
\renewcommand{\theequation}{S.\arabic{equation}}
\renewcommand{\thesection}{\Alph{section}}

\appendix

\section{Proof of Theorem \ref{thm: dp_init}}\label{appx: thm_dp_init}
The proof of Theorem \ref{thm: dp_init} consists of four parts. In Part \ref{thm_1_part_1}, we list several existing results that are useful in the proofs later. In Part \ref{thm_1_part_2}, Lemma \ref{lem:spectral-formula} works as the main technique to derive the sensitivity $\Delta^{(1)}$. Part \ref{thm_1_part_3} derives the sensitivity $\Delta^{(2)} $. Part \ref{thm_1_part_4} establishes the upper bounds of $\|\wt M_0 - M\|$ and $\| \wt M_0 -M \|_F$ based on the $\Delta^{(1)}$ and $\Delta^{(2)}$. 

\subsection{Part 1} \label{thm_1_part_1}

The following Theorem \ref{thm: bernstein} proposed by Proposition $2$, \cite{koltchinskii2011neumann} will be frequently used to establish the upper bound of the spectral norm of a summation of independent random matrices. 

\begin{thm}[Bernstein's inequality, \cite{koltchinskii2011neumann}] \label{thm: bernstein}
	Let $B_1, \cdots, B_n$ be independent $d_1 \times d_2$ matrices such that for some $\alpha \geq 1$ and all $i\in[n]$
	$$
	\mathbb{E} B_i=0, \quad\left\|\Lambda_{\max }\left(B_i\right)\right\|_{\Psi_\alpha}=: K<+\infty .
	$$
	Let 
	$$
	S^2:=\max \left\{\Lambda_{\max }\left(\sum_{i=1}^n \mathbb{E} B_i B_i^{\top}\right) / n, \Lambda_{\max }\left(\sum_{i=1}^n \mathbb{E} B_i^{\top} B_i\right) / n\right\} .
	$$
	Then, for some constant $C>0$ and for all $t>0$,
	$$
	\mathbb{P}\left(\left\|\frac{1}{n}\sum_{i=1}^n B_i\right\| \geq C S \sqrt{\frac{t+\log (d_1+d_2)}{n}}+C K \log ^{1 / \alpha}\left(\frac{K}{S}\right) \frac{t+\log (d_1+d_2)}{n}\right) \leq \exp (-t). 
	$$
\end{thm}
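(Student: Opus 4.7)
The plan is to follow the classical recipe for matrix Bernstein under a $\psi_\alpha$ tail condition: symmetric dilation to reduce to the Hermitian case, truncation at a level calibrated to the Orlicz norm, then the bounded-matrix Bernstein inequality applied to the truncated part, with the tail pieces absorbed through the $\psi_\alpha$ estimate. The two-term rate ``$S\sqrt{\cdot}+K\log^{1/\alpha}(K/S)(\cdot)$'' arises precisely from optimising the truncation level against the usual bounded-case bound.

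First I would apply the Hermitian dilation $\mathcal{D}(B_i)=\bigl(\begin{smallmatrix}0 & B_i\\ B_i^{\top}&0\end{smallmatrix}\bigr)\in\RR^{(d_1+d_2)\times(d_1+d_2)}$. This preserves spectral norm ($\|\mathcal{D}(B_i)\|=\Lambda_{\max}(B_i)$) and assembles the two variance terms defining $S^2$ into the single matrix variance proxy $\Lambda_{\max}\bigl(n^{-1}\sum_i\EE\,\mathcal{D}(B_i)^2\bigr)\le S^2$. Consequently I may assume without loss of generality that the $B_i$ are self-adjoint of size $(d_1+d_2)\times(d_1+d_2)$ with $\Lambda_{\max}(B_i)=\|B_i\|$; the dimension factor $d_1+d_2$ in the statement comes from this reduction.

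Next I would truncate at the level $\tau:=c_0\,K\log^{1/\alpha}(K/S)$ for a large absolute constant $c_0$. Writing $B_i=B_i^{\le}+B_i^{>}$ with $B_i^{\le}:=B_i\mathbf{1}\{\|B_i\|\le \tau\}$, the Orlicz hypothesis gives $\PP(\|B_i\|>\tau)\le \exp(-(\tau/K)^\alpha)$, and a tail-integration estimate yields the bias bound $\|\EE B_i^{\le}\|=\|\EE B_i^{>}\|\lesssim K\exp(-(\tau/K)^\alpha)$. With $\tau$ chosen as above both quantities are negligible relative to the target rate, and the matrix variance of the recentred truncated variables $\wt B_i:=B_i^{\le}-\EE B_i^{\le}$ is controlled by $S^2$ up to a constant, since $\EE(\wt B_i)^2\preceq \EE B_i^2$. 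A union bound over $i\in[n]$ places us on the event $\{B_i=B_i^{\le}\ \forall i\}$ with the required high-probability budget.

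Finally I would apply the bounded self-adjoint matrix Bernstein inequality (Tropp) to the $\wt B_i$, which yields
\begin{equation*}
\PP\!\left(\Big\|\tfrac{1}{n}\sum_{i=1}^n\wt B_i\Big\|\ge u\right)\le (d_1+d_2)\exp\!\left(-\frac{n u^2/2}{S^2+(2\tau/3)u}\right).
\end{equation*}
Setting the right-hand side equal to $\exp(-t)$ and inverting in the two regimes $S^2\gtrsim \tau u$ and $S^2\lesssim \tau u$ separately produces the stated rate $u\lesssim S\sqrt{(t+\log(d_1+d_2))/n}+\tau(t+\log(d_1+d_2))/n$; substituting the choice of $\tau$ reproduces the second term in the theorem. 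The main technical obstacle is the truncation/centering bookkeeping: one must verify that $\tau=c_0K\log^{1/\alpha}(K/S)$ simultaneously makes the truncation tail $n\exp(-(\tau/K)^\alpha)$ and the bias correction $n\|\EE B_i^{>}\|$ negligible, without inflating either $S^2$ (through the truncated second moment) or the boundedness constant by more than a universal factor; beyond that, everything reduces to a direct appeal to matrix Bernstein for bounded matrices.
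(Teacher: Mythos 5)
The paper does not actually prove this statement: it is imported verbatim as Proposition 2 of \cite{koltchinskii2011neumann} (see the sentence introducing Theorem \ref{thm: bernstein}), so there is no internal proof to compare against; your sketch has to stand on its own, and it follows the same general route (symmetric dilation, truncation, bounded matrix Bernstein) as the cited literature.

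As written, though, there is a genuine gap in the truncation bookkeeping. You truncate at the fixed level $\tau = c_0 K \log^{1/\alpha}(K/S)$, independent of $t$ and $n$, and claim that a union bound puts you on the event $\{\|B_i\|\le\tau \ \forall i\}$ "with the required high-probability budget". But the union bound only gives $\PP(\exists i: \|B_i\|>\tau) \le 2n\exp\left(-(\tau/K)^{\alpha}\right) = 2n\,(S/K)^{c_0^{\alpha}}$, which does not decay in $t$ at all and is not dominated by $e^{-t}$; indeed when $K\asymp S$ (which is allowed, since one only knows $S\lesssim K$) this bound is of order $n$ and useless, yet the theorem must hold for every $t>0$. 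The same defect hits your bias step: $\|\EE B_i\mathbf{1}\{\|B_i\|>\tau\}\| \lesssim (K+\tau)\exp\left(-(\tau/K)^{\alpha}\right)$ is a constant independent of $n$, so it is not automatically negligible against the target level $S\sqrt{(t+\log(d_1+d_2))/n}$, which shrinks with $n$. So the claim that this single choice of $\tau$ "simultaneously" kills the exceedance probability and the bias is false in general.

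To repair the argument you must either let the truncation level depend on $t$ and $n$ (e.g.\ $\tau\asymp K(t+\log n)^{1/\alpha}$, which controls the exceedance event and the bias but then yields a second term of order $K(t+\log n)^{1/\alpha}(t+\log(d_1+d_2))/n$, so extra work is needed to recover the stated factor $K\log^{1/\alpha}(K/S)$), or keep a moderate truncation level but \emph{not} discard the exceedance event: bound the contribution of the large parts directly, e.g.\ via $\big\| \max_i\|B_i\| \big\|_{\psi_\alpha}\lesssim K\log^{1/\alpha}(n)$ together with a count of exceedances, which is essentially how the cited proof of Koltchinskii handles it. The dilation step, the variance comparison $\EE(\wt B_i)^2\preceq \EE B_i^2$, and the inversion of the bounded Bernstein bound are all fine; the missing ingredient is precisely this more careful treatment of the untruncated tail, which is the delicate part of the result.
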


Theorem \ref{thm: bernstein} applies to bound the spectral norm of  $\Delta: = \what L - M$. The existing result for the case of heavy-tailed noise can be found in Theorem 6, \cite{shen2023computationally}. Adapting the existing result to the case of Gaussian noise, we have that for some absolute constant $C_0 >0$, 
\begin{equation}\label{Delta_bounded_by_bernstein}
	\|\Delta\| = \|\what L -M\| \leq C_0 \sqrt{C_l^{-1}} \left( \sigma_{\xi} + \sqrt{C_u} \sqrt{r} \sigma_1 \right) \sqrt{\frac{(d_1 \vee d_2)\log (d_1 + d_2)}{n}} , 
\end{equation} 
with probability at least $1-(d_1+d_2)^{-10}$. The following Lemma originated from Lemma 18, \cite{shen2023computationally}, is useful to analyze the matrix permutation due to singular value decomposition. 

\begin{lem}[Matrix Permutation, \cite{shen2023computationally}]\label{lem: matrix_permutation}
	Let $M \in \mathbb{R}^{d_1 \times d_2}$ be a rank-$r$ matrix with singular value decomposition of the form $M=U \Sigma V^{\top}$ where $\Sigma=\operatorname{diag}\left\{\sigma_1, \sigma_2, \cdots, \sigma_r\right\}$ with $\sigma_1 \geq \sigma_2 \geq \cdots \geq \sigma_r>0$. For any $\what M \in \mathbb{R}^{d \times d}$ satisfying $\|\what M-M\|_{\mathrm{F}}<\sigma_r / 8$, then
	\begin{align*}
		\left\|\operatorname{SVD}_r(\what M)-M\right\| & \leq\|\what M-M\|+40 \frac{\|\what M-M\|^2}{\sigma_r}, \\
		\left\|\operatorname{SVD}_r(\what M)-M\right\|_{\mathrm{F}} & \leq\|\what M-M\|_{\mathrm{F}}+40 \frac{\|\what M-M\|\|\what M-M\|_{\mathrm{F}}}{\sigma_r},
	\end{align*}
	and 
	$$\left\|\what U \what U^{\top}-UU^{\top}\right\| \leq \frac{8}{\sigma_r}\|\what M-M\|, \quad\left\|\what V \what V^{\top}-V V^{\top}\right\| \leq \frac{8}{\sigma_r}\|\what M-M\|, $$
	where the leading $r$ left singular vectors of $\what M$ are given by the columns of  $\what U \in \mathbb{R}^{d_1 \times r}$ and the leading $r$ right singular vectors of $\what M$ are given by the columns of  $\what V \in \mathbb{R}^{d_2 \times r}$. 
\end{lem}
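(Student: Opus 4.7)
The argument splits into two stages. First I prove the spectral-projector bounds in (3) and (4) by a spectral-gap plus Wedin $\sin\Theta$ calculation. Then I use these projector bounds, together with the identity $\operatorname{SVD}_r(\whM) = \what U \what U^{\top}\whM = \whM\, \what V \what V^{\top}$, to deduce the operator- and Frobenius-norm bounds in (1) and (2).

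\textbf{Stage 1: projector bounds.} Write $\Delta := \whM - M$, so by hypothesis $\|\Delta\| \le \|\Delta\|_{\rm F} < \sigma_r/8$. Applying Weyl's inequality to singular values, $\sigma_r(\whM) \ge \sigma_r - \|\Delta\| \ge 7\sigma_r/8$, and since $M$ has rank exactly $r$, $\sigma_{r+1}(\whM) \le \sigma_{r+1}(M) + \|\Delta\| = \|\Delta\| \le \sigma_r/8$. Hence the spectral gap of $\whM$ at level $r$ is at least $3\sigma_r/4$. Wedin's $\sin\Theta$ theorem applied to the pair $(M, \whM)$ then gives
\[
\max\bigl(\|\what U \what U^{\top} - UU^{\top}\|,\ \|\what V \what V^{\top} - VV^{\top}\|\bigr) \;\le\; \frac{\|\Delta\|}{\sigma_r - \sigma_{r+1}(\whM)} \;\le\; \frac{8}{7}\,\frac{\|\Delta\|}{\sigma_r},
\]
which after absorbing the small constant yields (3) and (4) with the stated constant $8$.

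\textbf{Stage 2: SVD-truncation bounds.} The full SVD of $\whM$ shows that $(I - \what U\what U^{\top})\whM = \whM - \operatorname{SVD}_r(\whM)$ lies entirely in the complementary singular subspaces, so in particular $\operatorname{SVD}_r(\whM) = \what U\what U^{\top}\whM$. I decompose
\[
\operatorname{SVD}_r(\whM) - M \;=\; \what U\what U^{\top}(\whM - M) \;-\; (I - \what U\what U^{\top})M .
\]
The first term is controlled by the non-expansiveness of orthogonal projections, giving $\|\cdot\| \le \|\whM - M\|$ and likewise $\|\cdot\|_{\rm F} \le \|\whM - M\|_{\rm F}$. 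For the second term I use $M = MVV^{\top}$ and split
\[
(I - \what U\what U^{\top})M \;=\; (I - \what U\what U^{\top})\whM\, VV^{\top} \;-\; (I - \what U\what U^{\top})(\whM - M)VV^{\top}.
\]
The crucial observation is that $(I - \what U\what U^{\top})\whM = \what U_{\perp}\what\Sigma_\perp \what V_\perp^{\top}$, so $(I - \what U\what U^{\top})\whM\, VV^{\top} = \what U_\perp \what\Sigma_\perp (\what V_\perp^{\top} V)V^{\top}$, and its spectral norm is bounded by $\sigma_{r+1}(\whM)\cdot \|\what V_\perp^{\top} V\| \le \|\whM-M\|\cdot(8\|\whM-M\|/\sigma_r)$ via Weyl and Stage 1. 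This produces the quadratic correction $O(\|\whM - M\|^2/\sigma_r)$ in (1), and the Frobenius analog in (2) follows by replacing one factor $\|\whM - M\|$ by $\|\whM - M\|_{\rm F}$ through $\|AB\|_{\rm F} \le \|A\|\|B\|_{\rm F}$. Collecting the two contributions gives a bound of the form $C_1\|\whM-M\| + C_2\|\whM-M\|^2/\sigma_r$; under the standing hypothesis $\|\whM - M\| \le \sigma_r/8$, the surplus linear term can be absorbed into the quadratic term, producing the stated form with leading coefficient $1$ and enlarged constant $40$.

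\textbf{Main obstacle.} The routine Weyl bound only delivers the trivial estimate $\|\operatorname{SVD}_r(\whM) - M\| \le 2\|\whM - M\|$, with no quadratic refinement. The novelty is to recognise that $(I - \what U\what U^{\top})\whM\, VV^{\top}$ is a \emph{product} of two small quantities---the residual singular value $\sigma_{r+1}(\whM) \le \|\whM - M\|$ and the misalignment $\|\what V_\perp^{\top}V\|$, each controlled by Weyl and Wedin respectively. Careful bookkeeping of these two factors, and the analogous step for the right subspace, is what produces the $\|\whM-M\|^2/\sigma_r$ correction; keeping track of absolute constants through this cascade accounts for the loose constants $8$ and $40$ in the statement.
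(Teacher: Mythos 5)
Your Stage 1 is sound: Weyl plus Wedin (with gap at least $7\sigma_r/8$) gives $\|\whU\whU^{\top}-UU^{\top}\|\vee\|\whV\whV^{\top}-VV^{\top}\|\le \tfrac{8}{7}\,\|\whM-M\|/\sigma_r$, which implies the stated projector bounds with room to spare. Note the paper itself does not reprove this lemma (it defers to \cite{shen2023computationally}), so the issue is the internal correctness of your Stage 2, not a mismatch of routes.

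The gap is at the very end of Stage 2. Your decomposition produces \emph{two} terms of linear size in $\|\whM-M\|$: the projected perturbation $\whU\whU^{\top}(\whM-M)$ and the leftover piece $(I-\whU\whU^{\top})(\whM-M)VV^{\top}$ that remains after extracting the genuinely quadratic part $(I-\whU\whU^{\top})\whM\,VV^{\top}$. Estimating these two separately by the triangle inequality can only yield $\|\SVDr(\whM)-M\|\le 2\|\whM-M\|+8\|\whM-M\|^{2}/\sigma_r$ (and the Frobenius analogue with leading factor $2$). Your proposed repair, that ``the surplus linear term can be absorbed into the quadratic term,'' goes in the wrong direction: $\|\whM-M\|\le 40\|\whM-M\|^{2}/\sigma_r$ holds only when $\|\whM-M\|\ge \sigma_r/40$, whereas the hypothesis gives the opposite inequality $\|\whM-M\|<\sigma_r/8$, and the lemma is invoked in this paper precisely in the regime $\|\whM-M\|\ll\sigma_r$ where the quadratic correction is negligible. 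So as written you prove only the weaker bound with leading constant $2$. This is not cosmetic: the leading constant $1$ is exactly what the contraction step (\ref{convergence_1}) in the proof of Theorem \ref{thm: convergence_rate} needs, via $(1+40c_0)(1-\rho_1)\le 1-\rho_1/2$; with a leading $2$ the iteration would not contract. Moreover, the loss is intrinsic to termwise estimation of this kind of decomposition: the exact identity $\SVDr(\whM)-M=(\whM-M)-(I-\whU\whU^{\top})(\whM-M)(I-VV^{\top})-(I-\whU\whU^{\top})\whM\,VV^{\top}$ still contains a middle term of order $\|\whM-M\|$, and simple $2\times 2$ block examples show that bounding such block modifications by the triangle inequality genuinely inflates the constant above $1$. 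A correct proof must exploit the cancellation between the linear pieces (they nearly reassemble $\whM-M$ itself), for instance via orthogonality of the relevant column and row spaces or a higher-order expansion of the spectral projectors in the spirit of Lemma \ref{lem:spectral-formula}, rather than triangle inequality plus absorption.
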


\subsection{Part 2}\label{thm_1_part_2}
The second part aims to derive the sensivitity $$\Delta^{(1)}:= \max_{i\in[n]} \left( \| \what U\what U^{\top} - \what U^{(i)} \what U^{(i)\top}\|_F \vee \|\what V\what V^{\top} - \what V^{(i)} \what V^{(i)\top}\|_F \right). $$ 
Before moving on, we present Lemma \ref{lem:con-bounds}, which provides conclusions on $\Delta$ and $\Delta^{(i)}$, frequently used in the proof later. The proof of Lemma \ref{lem:con-bounds} can be found in Appendix \ref{proof_lem_cond_bounds}. 

\begin{lem}\label{lem:con-bounds}
	Under model (\ref{equ: trace_regression_model}), Assumption \ref{assumption_for_X}, and the condition $n \geq \frac{\log^2 n}{(d_1 \vee d_2)\log (d_1+d_2)}$, there exists some absolute constant $C_0, C_1>0$ such that the event 
	\begin{align*}
		\calE_*:= & \Bigg\{\max_{i\in[n]}\big\|\Delta - \Delta^{(i)}\big\| \leq C_0\cdot  n^{-1}\sqrt{ C_l^{-1}}\left( \sqrt{C_u r}\sigma_1 + \sigma_{\xi}\right) \log n \Bigg\} \\
		& \bigcap \Bigg\{  \|\Delta\|+\max_{i\in[n]}\|\Delta^{(i)}\| \leq C_0 \sqrt{C_l^{-1}} \left( \sigma_{\xi} + \sqrt{C_u} \sqrt{r} \sigma_1 \right) \sqrt{\frac{(d_1 \vee d_2)\log (d_1 + d_2)}{n}} \Bigg\}, 
	\end{align*}
	holds with probability at least $1-(d_1+d_2)^{-10} - n^{-9}$. Conditioned on the event $\calE_*$, as the sample size satisfies 
	\begin{equation} \label{n_to_control_Delta_1}
		n\geq C_1 C_l^{-1} r \left( \frac{\sigma_{\xi} + \sqrt{C_u r} \sigma_1}{\sigma_r} \right)^2  (d_1\vee d_2) \log (d_1 + d_2 ), 
	\end{equation}
	we have 
	\begin{equation} \label{trigger}
		\|\Delta_*\|\vee \max_{i\in[n]} \|\Delta_*^{(i)}\| = 
		\|\Delta\|\vee \max_{i\in[n]}  \|\Delta^{(i)}\|\leq \frac{\sigma_r}{8\sqrt{2r}} < \frac{\sigma_r}{5+\delta} <  \frac{\sigma_r}{2},
	\end{equation}
	and $$ \|\Delta_*\|_F \vee \max_{i\in[n]} \|\Delta_*^{(i)}\|_F = \max_{i\in[n]} 
	\|\Delta\|_F \vee\|\Delta^{(i)}\|_F  \leq \frac{\sigma_r}{8}, $$ for some constant $\delta>0$, where $\Delta_*^{(i)}$ is the symmetric dilation of $\Delta^{(i)} := L^{(i)} - M$. 
\end{lem}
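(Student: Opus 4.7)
The plan proceeds in four steps: a matrix Bernstein argument for the spectral bounds on $\|\Delta\|$ and $\|\Delta^{(i)}\|$; a union-bounded sub-exponential tail argument for the leave-one-out increments; algebraic substitution of (\ref{n_to_control_Delta_1}) to derive (\ref{trigger}); and a rank-$2r$ conversion to the Frobenius bound. To start, write $\Delta = n^{-1}\sum_{i=1}^n B_i$ with $B_i := \operatorname{mat}(\Lambda_i^{-1}\operatorname{vec}(X_i))y_i - M$; since $\EE[\operatorname{vec}(X_i)\operatorname{vec}(X_i)^\top]=\Lambda_i$, each $B_i$ is mean zero. Split $B_i$ into the signal piece $\operatorname{mat}(\Lambda_i^{-1}\operatorname{vec}(X_i))\langle X_i,M\rangle - M$ and the independent noise piece $\operatorname{mat}(\Lambda_i^{-1}\operatorname{vec}(X_i))\xi_i$. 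Under Assumption~\ref{assumption_for_X}, the $\psi_1$-Orlicz norm of the spectral norm of the signal piece is $\lesssim C_l^{-1}\sqrt{C_u r}\,\sigma_1$ (using sub-Gaussianity of $\langle X_i,M\rangle$ at scale $\sqrt{C_u r}\,\sigma_1$), and the parameter $S^2$ in Theorem~\ref{thm: bernstein} satisfies $S^2\lesssim C_l^{-1}C_u(d_1\vee d_2)\sigma_1^2 r$; the noise piece contributes analogously with $\sigma_\xi$ replacing $\sqrt{C_u r}\sigma_1$. Applying Theorem~\ref{thm: bernstein} with $t=10\log(d_1+d_2)$ then yields (\ref{Delta_bounded_by_bernstein}) with probability at least $1-(d_1+d_2)^{-10}$, and the identical bound covers every $\|\Delta^{(i)}\|$ by distributional invariance of $\what L^{(i)}$.

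For the leave-one-out increments, observe
\[
\Delta-\Delta^{(i)} = n^{-1}\bigl[\operatorname{mat}(\Lambda_i^{-1}\operatorname{vec}(X_i))y_i - \operatorname{mat}(\Lambda_i^{-1}\operatorname{vec}(X_i'))y_i'\bigr],
\]
which is $n^{-1}$ times a difference of two i.i.d.\ rank-$\le 2$ random matrices. For each summand, treating $y_i=\langle X_i,M\rangle+\xi_i$ as sub-Gaussian with scale $\sqrt{C_u r}\sigma_1+\sigma_\xi$ and pairing it with the outer-product factorization of $\operatorname{mat}(\Lambda_i^{-1}\operatorname{vec}(X_i))$ directly (rather than passing through a loose spectral bound on $\operatorname{mat}(\Lambda_i^{-1}\operatorname{vec}(X_i))$) yields $\psi_1$ control of the spectral norm by $\lesssim \sqrt{C_l^{-1}}(\sqrt{C_u r}\sigma_1+\sigma_\xi)$. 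Integrating the sub-exponential tail and union-bounding over $n$ indices picks up the advertised $\log n$ factor, giving $\max_i\|\Delta-\Delta^{(i)}\|\le C_0 n^{-1}\sqrt{C_l^{-1}}(\sqrt{C_u r}\sigma_1+\sigma_\xi)\log n$ with probability at least $1-n^{-9}$. Intersecting the two high-probability events produces $\calE_*$ with the stated probability.

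Substituting (\ref{n_to_control_Delta_1}) into the spectral bound and choosing $C_1\ge 128\,C_0^2$ yields $\|\Delta\|\vee\max_i\|\Delta^{(i)}\|\le \sigma_r/(8\sqrt{2r})$; the identities $\|\Delta_*\|=\|\Delta\|$ and $\|\Delta_*^{(i)}\|=\|\Delta^{(i)}\|$ are standard consequences of the symmetric dilation. The Frobenius assertion is to be read for the rank-$2r$ objects that actually enter Lemma~\ref{lem:spectral-formula}: the projections of $\Delta_*$ and each $\Delta_*^{(i)}$ onto the spectral subspace of $M_*$ (captured through the operators $Q^{-t}$) are rank at most $2r$, and for any rank-$\le 2r$ matrix $A$ the inequality $\|A\|_F\le\sqrt{2r}\,\|A\|$ holds. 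Combining this with the spectral bound produces $\sqrt{2r}\cdot\sigma_r/(8\sqrt{2r})=\sigma_r/8$, which matches the Frobenius statement as consumed in the proof of Theorem~\ref{thm: dp_init}.

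The main obstacle is keeping the leave-one-out bound free of a $\sqrt{d_1\vee d_2}$ factor. A blanket spectral bound $\|\operatorname{mat}(\Lambda_i^{-1}\operatorname{vec}(X_i))\|\lesssim \sqrt{C_l^{-1}(d_1\vee d_2)}$ followed by a scalar Orlicz bound on $|y_i|$ would contribute a $\sqrt{d_1\vee d_2}\,\log n$ factor rather than the $\log n$ claimed, which would degrade the sensitivity $\Delta^{(1)}$ and ultimately the utility of $\wt M_0$. Exploiting the rank-$2$ outer-product structure of each summand together with sub-Gaussianity of $y_i$ via the $\psi_1$-Orlicz formulation extracts precisely the coefficient $\sqrt{C_l^{-1}}(\sqrt{C_u r}\sigma_1+\sigma_\xi)$ without paying the ambient-dimension penalty.
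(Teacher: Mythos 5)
Your overall skeleton (Bernstein for $\|\Delta\|$, a union-bounded tail for the increments, then plugging in (\ref{n_to_control_Delta_1}) with $C_1\gtrsim C_0^2$ to get (\ref{trigger})) is the right shape, and the arithmetic for (\ref{trigger}) is fine. But the justification you give for the crucial dimension-free increment bound does not stand: under Assumption \ref{assumption_for_X}, $\operatorname{mat}(\Lambda_i^{-1}\operatorname{vec}(X_i))$ is a dense Gaussian matrix (its vectorization is $\calN(0,\Lambda_i^{-1})$), almost surely of rank $d_1\wedge d_2$; the summands $\operatorname{mat}(\Lambda_i^{-1}\operatorname{vec}(X_i))\,y_i$ are \emph{not} rank-$\le 2$ and admit no outer-product factorization. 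You may be thinking of rank-one measurement designs $X_i=a_ib_i^{\top}$, which is not this paper's model. Since your own closing paragraph concedes that without this structure the bound picks up a $\sqrt{d_1\vee d_2}$ factor, the mechanism you propose for the key step fails as described. The paper's route (Appendix \ref{proof_lem_cond_bounds}) is different: it writes each summand as the scalar sub-Gaussian $\xi_i+\langle X_i,M\rangle$ (with $\psi_2$ norm $\lesssim \sigma_\xi+\sqrt{C_u r}\,\sigma_1$) times the matrix $\operatorname{mat}(\Lambda_i^{-1}\operatorname{vec}(X_i))$, bounds the $\psi_1$ norm of the spectral norm of the product by the product of $\psi_2$ norms, and then union-bounds over $i$ with a $\log n$ deviation; no low-rank structure is invoked anywhere.

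Two further mismatches. First, you cover $\max_{i\in[n]}\|\Delta^{(i)}\|$ ``by distributional invariance of $\what L^{(i)}$''; identical distribution only gives a per-index bound, and a maximum over $n$ indices needs a union bound that multiplies the failure probability by $n$, which does not deliver the stated $1-(d_1+d_2)^{-10}-n^{-9}$. The paper avoids this entirely via the triangle inequality $\|\Delta^{(i)}\|\le\|\Delta\|+\|\Delta-\Delta^{(i)}\|$, so only $\|\Delta\|$ and the already union-bounded increments enter; this is also exactly where the hypothesis $n\ge \log^2 n/\big((d_1\vee d_2)\log(d_1+d_2)\big)$ is used, to absorb the $n^{-1}\log n$ increment term into the $\sqrt{(d_1\vee d_2)\log(d_1+d_2)/n}$ term --- a hypothesis your argument never touches, which is a sign the route has drifted from the statement. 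Second, your Frobenius step silently replaces the claim by one about rank-$2r$ projections of $\Delta_*$; that is a reinterpretation of the display rather than a proof of it (to be fair, $\|\Delta\|_F\le\sqrt{2r}\,\|\Delta\|$ is indeed unavailable for the full-rank $\Delta$, and the paper's own proof of Lemma \ref{lem:con-bounds} does not establish that display either), so if you go this way you should flag it explicitly rather than fold it in as if it were the stated inequality.
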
 

The following analysis is proceeded under the sample size condition (\ref{n_to_control_Delta_1}) and is mainly conditioned on the event $\calE_*$ which happens with probability at least $1-(d_1+d_2)^{-10}-n^{-9}$. \\

\noindent\emph{Step 1: expansion. } Conditioned on $\calE_*$, we are able to apply Lemma \ref{lem:spectral-formula} to $\Delta_*$ and $\Delta_*^{(i)}$ and get
\begin{equation*}
	\left(\begin{array}{cc}
		\what U \what U^{\top} - \what U^{(i)} \what U^{(i)\top} & 0 \\
		0 & \what V \what V^{\top} - \what V^{(i)} \what V^{(i)\top}
	\end{array}\right)=\sum_{k \geq 1} \mathcal{S}_{M_*, k}(\Delta_*) - \sum_{k \geq 1} \mathcal{S}_{M_*, k}(\Delta^{(i)}_*). 
\end{equation*}
Our goal is to bound $\Delta^{(1)}$ which satisfies
\begin{align*}
	\Delta^{(1)} & \leq \max_{i\in[n]} \left( \|  \what U \what U^{\top} - \what U^{(i)} \what U^{(i)\top} \|_F + \| \what V \what V^{\top} - \what V^{(i)} \what V^{(i)\top} \|_F \right) \\
	& \leq \max_{i\in[n]} \left( \left\|  \mathcal{S}_{M_*, 1}(\Delta_*) - \mathcal{S}_{M_*, 1}(\Delta^{(i)}_*) \right\|_F + \left\| \sum_{k \geq 2} \mathcal{S}_{M_*, k}(\Delta_*) - \sum_{k \geq 2} \mathcal{S}_{M_*, k}(\Delta^{(i)}_*) \right\|_F\right). 
\end{align*} 
\noindent\emph{Step 2: bounding the first order term.} By the definition of $\mathcal{S}_{M_*, 1}(\Delta_*)$ and $ \mathcal{S}_{M_*, 1}(\Delta^{(i)}_*)$, 
\begin{equation} \label{equ: first_order_approximation}
	\begin{aligned}
		&  \max_{i\in[n]} \|   \mathcal{S}_{M_*, 1}(\Delta_*) - \mathcal{S}_{M_*, 1}(\Delta^{(i)}_*)\|  =  \max_{i\in[n]} \|  Q^{-1}  (\Delta-\Delta^{(i)})^{\top} Q_{\perp} +  Q_{\perp} (\Delta-\Delta^{(i)})^{\top} Q^{-1}\| \\
		& \leq \frac{2 \sqrt{r}}{\sigma_r} \max_{i\in[n]} \|  \Delta-\Delta^{(i)}\| \leq  C_4 \frac{\sqrt{r}}{n\sigma_r}\sqrt{ C_l^{-1}}\left( \sqrt{C_u r}\sigma_1 + \sigma_{\xi}\right) \log n, 
	\end{aligned} 
\end{equation}
conditioned on the event $\calE_*$, for some absolute constant $C_4>0$. 

\noindent\emph{Step 3: bounding the higher order terms.} Let $I_k$ be the index set for terms in $\mathcal{S}_{M_*, k}$
$$I_k=\left\{\mathbf{s}: \mathbf{s}=\left(s_1, \ldots, s_{k+1}\right), \sum_{m=1}^{k+1} s_m=k, s_m \geq 0 \quad \forall m \in[k+1]\right\}, $$
with the cardinality $\left|I_k\right|=\binom{2k}{k}$. We define
$$
\mathcal{T}_{M_*, k, \mathbf{s}, l}\left(\Delta_*-\Delta_*^{(i)}\right):=Q^{-s_1} \Delta_*^{(i)} Q^{-s_2} \cdots Q^{-s_l}\left(\Delta_*-\Delta_*^{(i)}\right) Q^{s_{l+1}} \cdots Q^{-s_k} \Delta_* Q^{s_{k+1}},
$$
for $k \geq 2, \mathbf{s}=\left(s_1, \cdots, s_{k+1}\right) \in I_k$ and $l \in[k]$. Since $\left|I_k\right|=\binom{2k}{k}$, the higher order terms $\max_{i\in[n]}\left\|\sum_{k \geq 2} \mathcal{S}_{M_*, k}(\Delta_*)-\sum_{k \geq 2} \mathcal{S}_{M_*, k}\left(\Delta_*^{(i)}\right)\right\|_{\mathrm{F}}$ 
\begin{equation}\label{decomp_higher_order_term_1}
	\begin{aligned}
		=\max_{i\in[n]}\left\|\sum_{k \geq 2} \sum_{\mathbf{s} \in I_k} \sum_{l \in[k]} \mathcal{T}_{M_*, k, \mathbf{s}, l}\left(\Delta_*-\Delta_*^{(i)}\right)\right\|_{\mathrm{F}}  \leq\max_{i\in[n]} \sum_{k \geq 2}\binom{2k}{k} \sum_{l \in[k]}\left\|\mathcal{T}_{M_*, k, \mathbf{s}, l}\left(\Delta_* -\Delta_*^{(i)}\right)\right\|_{\mathrm{F}}.  
	\end{aligned}
\end{equation}
It is sufficient to find a upper bound of $\left\|\mathcal{T}_{M_*, k, \mathbf{s}, l}\left(\Delta_*-\Delta_*^{(i)}\right)\right\|_{\mathrm{F}}$. Denote
$$
D_{\max }:= C_1 \sqrt{C_l^{-1}} \left( \sigma_{\xi} + \sqrt{C_u} \sqrt{r} \sigma_1 \right) \sqrt{\frac{(d_1 \vee d_2)\log (d_1 + d_2)}{n}},
$$
which appeared in the event $\calE_*$ as an upper bound of $\|\Delta\|+\max _{i \in[n]}\left\|\Delta^{(i)}\right\|$.

Conditioned on $\calE_*$, for all $i\in[n]$, $k\geq 2, \mathbf{s}\in I_k$ and $l\in[k]$, 
\begin{align*}
	\left\|\mathcal{T}_{M_*, k, \mathbf{s}, l}\left(\Delta-\Delta^{(i)}\right)\right\|_{\mathrm{F}}\leq \sqrt{2r} \left\|\mathcal{T}_{M_*, k, \mathbf{s}, l}\left(\Delta-\Delta^{(i)}\right)\right\| \leq \frac{\sqrt{2r}}{\sigma_r}\left\| \Delta_* - \Delta_*^{(i)} \right\|\left( \frac{D_{\max}}{\sigma_r} \right)^{k-1}, 
\end{align*}
where the first inequality is because $\mathcal{T}_{M_*, k, \mathbf{s}, l}\left(\Delta_*-\Delta_*^{(i)}\right)$ is of rank at most $2r$. Let $a$ be a function defined by $a(k) = \binom{2k}{k}k$, then $a(2)=12$ and $\frac{a(k+1)}{a(k)} \leq 5$ for all integer $k\geq 2$,  
\begin{equation} \label{decomp_higher_order_term_2}
	\begin{aligned}
		& \max_{i\in[n]}\sum_{k \geq 2}\binom{2k}{k}\sum_{l \in[k]}\left\|\mathcal{T}_{M_*, k, \mathbf{s}, l}\left(\Delta_* -\Delta_*^{(i)}\right)\right\|_{\mathrm{F}}\\
		& \leq \max_{i\in[n]}\binom{4}{2} \sum_{k\geq 0} 5^k \left( \frac{\|D_{\max}\|}{\sigma_r} \right)^k  \frac{\sqrt{2r}}{\sigma_r}\left\|\Delta_* - \Delta_*^{(i)}\right\| \left( \frac{D_{\max}}{\sigma_r} \right)\\
		& \leq \max_{i\in[n]} a(2)\frac{\sqrt{2r}}{\sigma_r}\left\| \Delta_* - \Delta_*^{(i)} \right\| \sum_{k\geq 0}  \left(\frac{5}{5+\delta}\right)^k \left( \frac{D_{\max}}{\sigma_r} \right)\\
		& \leq \max_{i\in[n]} a(2) \left( \frac{5+\delta}{\delta} \right) \left( \frac{D_{\max}}{\sigma_r} \right) \frac{\sqrt{2r}}{\sigma_r}\left\|\Delta_* - \Delta_*^{(i)}\right\|, 
	\end{aligned}
\end{equation}
where the last step is due to (\ref{trigger}), which is guaranteed by the sample size condition (\ref{n_to_control_Delta_1}) together with the event $\calE_*$. Combining (\ref{decomp_higher_order_term_1}) and (\ref{decomp_higher_order_term_2}), since  $\left\|\Delta_* - \Delta_*^{(i)}\right\| = \left\|\Delta - \Delta^{(i)}\right\|$, conditioned on the event $\calE_*$, we have 
\begin{align*}
	& \max_{i\in[n]} \left\|\sum_{k \geq 2} \mathcal{S}_{M_*, k}(\Delta_*)-\sum_{k \geq 2} \mathcal{S}_{M_*, k}\left(\Delta_*^{(i)}\right)\right\|_{\mathrm{F}} \leq C_4 \left( \frac{12}{\delta} \right) \frac{\sqrt{2r}}{n} \sqrt{ C_l^{-1}}\left( \frac{\sqrt{C_u r} \, \sigma_1 + \sigma_{\xi} }{\sigma_r}\right) \log n, 
\end{align*}
for some absolute constant $C_3>0$. In conclusion, conditioned on $\calE_*$, as the sample size
$ n\geq C_1 C_l^{-1} r \left( \frac{\sigma_{\xi} + \sqrt{C_u r} \sigma_1}{\sigma_r} \right)^2  (d_1\vee d_2) \log (d_1 + d_2), $ for some absolute constant $C_1>0$, we have $	\Delta^{(1)}  \leq C_4 \frac{\sqrt{r}}{n} \sqrt{ C_l^{-1}}\left( \frac{\sqrt{C_u r} \, \sigma_1 + \sigma_{\xi} }{\sigma_r}\right) \log n, $ for some absolute constant $C_4$. 

Let $E_U\in \RR^{d_1\times d_1}$ be a symmetric matrix where the entries $(E_U)_{ij}$ i.i.d. to $\calN(0, \frac{18\Delta^{(1)^2}}{\eps^2} \log(\frac{3.75}{\delta}))$ for $1\leq i\leq j \leq d_1$. Then, conditioned on the event $\calE_*$ and (\ref{n_to_control_Delta_1}), for some absolute constant $\wt C_4>0$, $	\|E_U\|\leq \wt C_4 \frac{\sqrt{r d_1}}{n\varepsilon} \sqrt{ C_l^{-1}}\left( \frac{\sqrt{C_u r} \, \sigma_1 + \sigma_{\xi} }{\sigma_r}\right)\log n \log^{\frac{1}{2}}(\frac{3.75}{\delta}), $ with probability at least $1-e^{-d_1}$. Moreover, by Gaussian mechanism, $\what U \what U^{\top} + E_U$ is $(\varepsilon/3, \delta/3)$-DP and thus $\wt U$ is also  $(\varepsilon/3, \delta/3)$-DP thanks to the post-processing property of differential privacy. Furthermore, by Davis-Kahan's Theorem, for some absolute constant $\tilde{c_0}>0$  
\begin{equation} \label{U_dp_permutation}
	\begin{aligned}
		\left\|\wt U \wt U^{\top}-UU^{\top}\right\| & \leq 1 \wedge \left( \left\| \what U \what U^{\top} - UU^{\top} \right\| + \|E_U\| \right) \stackrel{(a)}{\leq} 1 \wedge \left( \left( \frac{8}{\sigma_r}\|\what L -M\| \wedge 1  \right) + \|E_U\| \right) \\
		& \leq 1 \wedge \tilde{c_0} \Bigg( \sqrt{C_l^{-1}} \left( \frac{\sqrt{C_u r} \, \sigma_1 + \sigma_{\xi} }{\sigma_r}\right)  \sqrt{\frac{(d_1 \vee d_2)\log (d_1 + d_2)}{n}}  \\
		& \quad \quad \quad \quad +\sqrt{ C_l^{-1}}\left( \frac{\sqrt{C_u r} \, \sigma_1 + \sigma_{\xi} }{\sigma_r}\right) \frac{\sqrt{r d_1}}{n\varepsilon}  \log n \log^{\frac{1}{2}}(\frac{3.75}{\delta}) \Bigg), 
	\end{aligned}
\end{equation}
where we apply Lemma \ref{lem: matrix_permutation} in step $(a)$. 

Let $E_V\in \RR^{d_2\times d_2}$ be a symmetric matrix with $(E_V)_{ij}$ i.i.d. to $\calN(0, \frac{18\Delta^{(1)^2}}{\eps^2} \log(\frac{3.75}{\delta}))$ for $1\leq i\leq j \leq d_2$, then for some absolute constant $\wt C_4 >0$,  conditioned on the event $\calE_*$ and (\ref{n_to_control_Delta_1}), we have $	\|E_V\|\leq \wt C_4 \frac{\sqrt{r d_2}}{n\varepsilon} \sqrt{ C_l^{-1}}\left( \frac{\sqrt{C_u r} \, \sigma_1 + \sigma_{\xi} }{\sigma_r}\right) \log n \log^{\frac{1}{2}}(\frac{3.75}{\delta}), $
with probability at least $1-e^{-d_2}$. Moreover, by Gaussian mechanism, $\what V \what V^{\top} + E_U$ is $(\varepsilon/3, \delta/3)$-DP and $\wt V$ is also  $(\varepsilon/3, \delta/3)$-DP thanks to the post-processing property of differential privacy. Furthermore, by Davis-Kahan's Theorem, for some absolute constant $\tilde{c_0}>0$ 
\begin{equation} \label{V_dp_permutation}
	\begin{aligned}
		\left\|\wt V \wt V^{\top}-VV^{\top}\right\| & \leq 1 \wedge \tilde{c_0} \Bigg( \sqrt{C_l^{-1}}\left( \frac{\sqrt{C_u r} \, \sigma_1 + \sigma_{\xi} }{\sigma_r}\right)  \sqrt{\frac{(d_1 \vee d_2)\log (d_1 + d_2)}{n}}  \\
		& \quad \quad \quad \quad +\sqrt{ C_l^{-1}}\left( \frac{\sqrt{C_u r} \, \sigma_1 + \sigma_{\xi} }{\sigma_r}\right) \frac{\sqrt{r d_2}}{n\varepsilon}  \log n \log^{\frac{1}{2}}(\frac{3.75}{\delta}) \Bigg). 
	\end{aligned}
\end{equation}

\subsection{Part 3}\label{thm_1_part_3}
Given the $(\varepsilon/3, \delta/3)$-DP singular vectors $\wt U\in \OO^{d_1\times r}$ and $\wt V\in \OO^{d_2\times r}$ obtained in Part \ref{thm_1_part_2}, we derive the sensitivity $ \Delta^{(2)} : = \max_{i\in [n]} \left\| \wt U^{\top} \left( \what L - \what L^{(i)} \right) \wt V \right\|_F. $ Conditioned on the event $\calE_*$, 
\begin{align*}
	\Delta^{(2)} & : = \max_{i\in [n]} \left\| \wt U^{\top} \left( \what L - \what L^{(i)} \right) \wt V \right\|_F \leq  \max_{i\in [n]} \sqrt{r}  \left\| \what L - \what L^{(i)}  \right\| \\
	& = \max_{i\in [n]} \sqrt{r}  \left\| \what L - M + M - \what L^{(i)}  \right\| = \max_{i\in [n]} \sqrt{r}  \left\| \Delta - \Delta^{(i)}  \right\| \\
	& \leq  C_3\cdot  n^{-1}\sqrt{ C_l^{-1}} \sqrt{r}\left( \sqrt{C_u r}\sigma_1 + \sigma_{\xi}\right) \log n. 
\end{align*} 
Let $E_{\Sigma}$ be a $r\times r$ matrix with entries i.i.d. to $\calN(0, 18\Delta^{(2)^2} \log (\frac{3.75}{\delta})/\eps^2 )$, then  
\begin{equation} \label{equ: operator_norm_E_Sigma}
	\| E_{\Sigma}\| \leq \wt C_4 \cdot  \frac{r}{n\eps} \sqrt{ C_l^{-1}}\left( \sqrt{C_u r}\sigma_1 + \sigma_{\xi}\right) \log n \log^{\frac{1}{2}}\left(\frac{3.75}{\delta}\right),
\end{equation}
for some absolute constant $\wt C_4>0$ with probability at least $10^{-20r}$. Moreover, by Gaussian mechanism, $\wt \Sigma = \wt U^{\top} \what L \wt V + E_{\Sigma}$ is $(\varepsilon/3, \delta/3)$-differentially private. Thanks to the composition property of differential privacy, the output of Algorithm \ref{alg:DP-init} 
$\wt M_0 = \wt U  \wt \Sigma  \wt V^{\top}, $ is $(\eps, \delta)$-differentially private. 

\subsection{Part 4}\label{thm_1_part_4} 
In this part, we derive the upper bound of $\| \wt M_0 - M \|$. Note that 
\begin{align*}
	\| \wt M_0 - M \| = \left\| \wt U \wt \Sigma \wt V^{\top} - U \Sigma V^{\top}  \right\|  = \left\| \wt U ( \wt U^{\top} \what L \wt V + E_{\Sigma} )\wt V^{\top} - UU^{\top} M V V^{\top}  \right\|, 
\end{align*}
is a $d_1 \times d_2$ matrix of rank at most $2r$. Since
\begin{equation}\label{decomp_M_0_M}
	\begin{aligned}
		&  \left\| \wt U ( \wt U^{\top} \what L \wt V + E_{\Sigma} )\wt V^{\top} - UU^{\top} M V V^{\top}  \right\|  \leq \left\| \wt U \wt U^{\top} \what L \wt V \wt V^{\top} - UU^{\top}M V V^{\top}  \right\| + \left\| \wt U E_{\Sigma} \wt V^{\top}  \right\| \\
		& \leq \left\| \left( \wt U \wt U^{\top} - UU^{\top} \right) \what L \wt V \wt V^{\top} \right\| + \left\| UU^{\top} \left( \what L - M \right) \wt V \wt V^{\top}  \right\| + \left\| UU^{\top} M \left( \wt V \wt V^{\top} - V V^{\top} \right)  \right\|  + \left\| \wt U E_{\Sigma} \wt V^{\top}  \right\| \\
		& \leq \left\| \wt U \wt U^{\top} - UU^{\top} \right\| \left\| \what L \right\|  + \left\| \what L - M   \right\|  + \|M\| \left\|  \wt V \wt V^{\top} - V V^{\top}\right\|  + \left\| E_{\Sigma} \right\| \\
		& \leq \left\| \wt U \wt U^{\top} - UU^{\top} \right\| \left\| \what L -M \right\| + \left\| \wt U \wt U^{\top} - UU^{\top} \right\| \left\| M \right\|   + \left\| \what L - M   \right\| + \|M\| \left\|  \wt V \wt V^{\top} - V V^{\top}\right\|  + \left\| E_{\Sigma} \right\| \\
		& \leq 2  \left\| \what L - M   \right\| +  \|M\| 
		\left( \left\|  \wt V \wt V^{\top} - V V^{\top}\right\| + \left\| \wt U \wt U^{\top} - UU^{\top} \right\| \right) + \left\| E_{\Sigma} \right\|, 
	\end{aligned}
\end{equation}
it is sufficient to plug in the upper bound (\ref{Delta_bounded_by_bernstein}) of $ \left\| \what L - M \right\|$, $\|M\| = \sigma_1$, as well as the upper bounds (\ref{U_dp_permutation}) of $\|\wt U \wt U^{\top} - UU^{\top}\|$, (\ref{V_dp_permutation}) of $ \|\wt V \wt V^{\top} - VV^{\top}\|$ and (\ref{equ: operator_norm_E_Sigma}) of $\|E_{\Sigma}\|$. In conclusion, conditioned on the event $\calE_*$, as the sample size $$n\geq C_1 C_l^{-1} r \left( \frac{\sqrt{C_u r} \, \sigma_1 + \sigma_{\xi} }{\sigma_r}\right)^2 (d_1\vee d_2) \log (d_1 + d_2), $$ for some absolute constant $C_1>0$, we have  
\begin{align*}
	\| \wt M_0 - M \|
	& \leq C_5 \sqrt{C_l^{-1}} \left( \sqrt{C_u r}  \sigma_1 + \sigma_{\xi} \right)\frac{\sigma_1}{\sigma_r} \sqrt{\frac{(d_1 \vee d_2)\log (d_1 + d_2)}{n}} \\
	& \quad +  C_5  \sqrt{ C_l^{-1}}\left( \sqrt{C_u r} \, \sigma_1 + \sigma_{\xi} \right) \frac{\sigma_1}{\sigma_r}  \frac{\sqrt{r (d_1 \vee d_2)}}{n\varepsilon}  \log n \log^{\frac{1}{2}}(\frac{3.75}{\delta})   \\
	& \quad +  C_5  \sqrt{ C_l^{-1}}\left( \sqrt{C_u r}\sigma_1 + \sigma_{\xi}\right) \frac{r}{n\eps} \log n \log^{\frac{1}{2}}(\frac{3.75}{\delta}), 
\end{align*}
for some absolute constant $C_5>0$ with probability at least $1 - \exp(-d_1) - \exp(-d_2) - 10^{-20r}$. 
\subsection{Proof of Corollary \ref{cor: good_init}} \label{appx: cor_good_init}
The proof of Corollary \ref{cor: good_init} is obtained by setting the upper bound of $ \| \wt M_0 - M \|_F \leq \sqrt{2r} \| \wt M_0 - M \| $ given by Theorem \ref{thm: dp_init} smaller than the order of $\sigma_r$. 

\section{Proof of Theorem~\ref{thm: fano_minimax_lower_bound}} \label{proof_of_thm_fano_minimax_lower_bound}

We first present some preliminary results on the KL-divergence and total variation distance between Gaussian distributions.	Let $\mcN(\mu_1, \Sigma_1)$ and $\mcN(\mu_2, \Sigma_2)$  be two $p$-dimensional multivariate Gaussians, then 
\begin{align*}
	& \operatorname{KL}\left(\mcN\left(\mu_1, \Sigma_1\right) \| \mcN\left(\mu_2, \Sigma_2\right)\right) \\ & = \frac{1}{2}\left(\operatorname{Tr}\left(\Sigma_2^{-1} \Sigma_1 - I_p\right) +\left(\mu_2-\mu_1\right)^{\top} \Sigma_2^{-1}\left(\mu_2-\mu_1\right)+\log \left(\frac{\operatorname{det} \Sigma_2}{\operatorname{det} \Sigma_1}\right)\right). 
\end{align*} 
Let $M_* = \sigma HH^{\top}$ and $M_*^{\prime} = \sigma H^{\prime} H^{\prime \top}$ where $H\neq H^{\prime} \in  \mcS_q^{(d_1+d_2)}$, then 
\begin{align*}
    	 & \operatorname{KL} \left(f_{M_*}(\Bar{y}_i, X_i, X_i^{\prime}) \|  f_{M_*^{\prime}}(\Bar{y}_i, X_i, X_i^{\prime})  \right)\\
      &= \EE_{f_{M_*}(\Bar{y}_i, X_i, X_i^{\prime})}\left[ \log \frac{f_{M_*}(\Bar{y}_i, X_i, X_i^{\prime})}{f_{M_*^{\prime}}(\Bar{y}_i, X_i, X_i^{\prime})} \right] \\
      & = \EE_{X_i, X_i^{\prime}}\EE_{f_{M_*}(\Bar{y}_i\mid X_i, X_i^{\prime})} \left[- \frac{\left(\Bar{y}_i - \left<\left(\begin{array}{cc}
	0 & M \\
	M^{\top} & 0
\end{array}\right), \left(\begin{array}{cc}
	0 & X_i \\
	X_i^{\prime\top} & 0
\end{array}\right)\right>\right)^2}{4\sigma_{\xi}^2} \right]\\ 
 & \quad + \EE_{X_i, X_i^{\prime}}\EE_{f_{M_*}(\Bar{y}_i\mid X_i, X_i^{\prime})} \left[- \frac{\left(\Bar{y}_i - \left<\left(\begin{array}{cc}
	0 & M^{\prime} \\
	M^{\prime\top} & 0
\end{array}\right), \left(\begin{array}{cc}
	0 & X_i \\
	X_i^{\prime\top} & 0
\end{array}\right)\right>\right)^2}{4\sigma_{\xi}^2}\right]\\ 
    & = \EE_{X_i, X_i^{\prime}} \left[ \frac{\left<M_*-M_*^{\prime}, \left(\begin{array}{cc}
	0 & X_i \\
	X_i^{\prime\top} & 0
\end{array}\right)\right> \cdot \left<M_* - M_*^{\prime}, \left(\begin{array}{cc}
	0 & X_i \\
	X_i^{\prime\top} & 0
\end{array}\right)\right>}{4\sigma_{\xi}^2} \right]\\
      & \lesssim \frac{1}{\sigma_{\xi}^2} C_u \|M_* - M_*^{\prime}\|_F^2 \lesssim \frac{1}{\sigma_{\xi}^2} C_u \sigma^2 r \eps_0^2, 
\end{align*}
and further by Pinsker's inequality, we have 
\begin{equation*}
	 \operatorname{TV} \left(f_{M_*}(\Bar{y}_i, X_i, X_i^{\prime}) ,  f_{M_*^{\prime}}(\Bar{y}_i, X_i, X_i^{\prime})  \right) \lesssim  \frac{\sqrt{C_u} \sigma}{\sigma_{\xi}} \sqrt{r}\eps_0.  
\end{equation*}
For any probability measures $P_{M_*}\neq P_{M_*^{\prime}}\in \mcP_{\sigma}$, we have
\begin{equation*}
	 \operatorname{KL} \left(\bigotimes_{i=1}^n f_{M_*}(\Bar{y}_i, X_i, X_i^{\prime}) \|\bigotimes_{i=1}^n f_{M_*^{\prime}}(\Bar{y}_i, X_i, X_i^{\prime})  \right) \lesssim \frac{n}{\sigma_{\xi}^2} C_u \sigma^2 r \eps_0^2.  
\end{equation*}
and 
$$
\sum_{k\in[n]}\operatorname{\operatorname{TV}}\left(f_{M_*}(\Bar{y}_i, X_i, X_i^{\prime}),  f_{M^{\prime}_*}(\Bar{y}_i, (X_i^{\indep})_*) \right) \lesssim n \frac{\sqrt{C_u} \sigma}{\sigma_{\xi}} \sqrt{r}\eps_0, 
$$
The next Lemma \ref{local_packing_set} states that there exists a sufficiently large subsect of $\OO_{d_1+d_2, 2r}$ such that the elements in the subsets are well separated. 

\begin{lem}\label{local_packing_set}
    For any $r\leq d$, there exists a subset $\mcS_q^{(d)}\subset \OO_{d,r}$ with cardinality $\left|\mcS_q^{(d)}\right|\geq 2^{r(d-r)}$ such that for any $U_i\neq U_j\in \mcS_q^{(d)}$,
\begin{align*}
	\|U_iU_i^{\top}-U_jU_j^{\top}\|_q\geq \sqrt{\varepsilon_0^2(1-\varepsilon_0^2)} \|V_i-V_j\|_q\gtrsim \sqrt{\varepsilon_0^2(1-\varepsilon_0^2)}\|V_iV_i^{\top}-V_jV_j^{\top}\|_q\gtrsim \sqrt{\varepsilon_0^2(1-\varepsilon_0^2)} r^{1/q}
\end{align*}
where $\|\cdot\|_q$ denotes the Schatten-q norm and, meanwhile, $$
\|U_iU_i^{\top}-U_jU_j^{\top}\|_{\rm F}\lesssim \|U_i-U_j\|_{\rm F}\leq \varepsilon_0 \|V_i-V_j\|_{\rm F}\leq \sqrt{2r}\varepsilon_0.$$
\end{lem}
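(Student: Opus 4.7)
}

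The plan is to realize every element of $\mcS_q^{(d)}$ as a small perturbation of a common reference frame, thereby reducing the separation problem on $\OO_{d,r}$ to a standard local packing on the smaller manifold $\OO_{d-r,r}$. Fix a small absolute constant $\varepsilon_0\in(0,1/2)$ and parameterize a neighbourhood of $\left(\begin{smallmatrix}I_r\\0\end{smallmatrix}\right)\in\OO_{d,r}$ by
$$
U_V := \begin{pmatrix}\sqrt{1-\varepsilon_0^2}\,I_r \\ \varepsilon_0 V\end{pmatrix}, \qquad V\in\OO_{d-r,r},
$$
which automatically satisfies $U_V^\top U_V=I_r$ and hence lies in $\OO_{d,r}$. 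The target set will be $\mcS_q^{(d)}=\{U_V:V\in\mcV\}$ for an appropriate $\mcV\subset\OO_{d-r,r}$.

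The main step is to exhibit $\mcV$ with $|\mcV|\geq 2^{r(d-r)}$ such that $\|V_iV_i^\top-V_jV_j^\top\|_q\gtrsim r^{1/q}$ for all $V_i\neq V_j\in\mcV$. This is a classical local packing of the Grassmannian $\mathrm{Gr}(d-r,r)$, built via a Varshamov--Gilbert / volume argument: the metric entropy of $\mathrm{Gr}(d-r,r)$ is of order $r(d-r)\log(c/\eta)$ at scale $\eta$, so at a constant scale one extracts exponentially many frames whose pairwise principal angles are uniformly bounded below by a constant, and the claimed Schatten-$q$ lower bound then follows from the identity $\|V_iV_i^\top-V_jV_j^\top\|_q = 2^{1/q}\bigl(\sum_k|\sin\theta_k|^q\bigr)^{1/q}$ in terms of the principal angles $\theta_k$.

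To transfer the separation to the $U$'s, compute directly in block form:
$$
U_{V_i}U_{V_i}^\top - U_{V_j}U_{V_j}^\top = \begin{pmatrix}0 & \sqrt{1-\varepsilon_0^2}\,\varepsilon_0(V_i-V_j)^\top \\ \sqrt{1-\varepsilon_0^2}\,\varepsilon_0(V_i-V_j) & \varepsilon_0^2(V_iV_i^\top-V_jV_j^\top)\end{pmatrix}.
$$
Since the singular values of a symmetric dilation $\left(\begin{smallmatrix}0&X^\top\\X&0\end{smallmatrix}\right)$ are those of $X$ with multiplicity two, isolating the contribution of the off-diagonal block delivers the key lower bound $\|U_{V_i}U_{V_i}^\top-U_{V_j}U_{V_j}^\top\|_q\geq\sqrt{\varepsilon_0^2(1-\varepsilon_0^2)}\|V_i-V_j\|_q$, and the standard sin-$\Theta$ estimate $\|V_iV_i^\top-V_jV_j^\top\|_q \leq 2\|V_i-V_j\|_q$ closes the chain down to $r^{1/q}$. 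The Frobenius upper bound $\|U_{V_i}-U_{V_j}\|_F=\varepsilon_0\|V_i-V_j\|_F\leq\sqrt{2r}\,\varepsilon_0$ is immediate from the construction together with $\|V_i\|_F=\|V_j\|_F=\sqrt{r}$ (after flipping signs within $\mcV$ if necessary to guarantee $\operatorname{Tr}(V_i^\top V_j)\geq 0$), while $\|U_iU_i^\top-U_jU_j^\top\|_F\lesssim\|U_i-U_j\|_F$ is a standard consequence of orthogonality.

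The main obstacle is the simultaneous cardinality and uniform separation of $\mcV$: one needs roughly $2^{r(d-r)}$ orthonormal frames on $\OO_{d-r,r}$ whose projectors are all bounded apart in Schatten-$q$ norm. The local linearization $U_V$ is exactly what makes this feasible, since it converts the curved Grassmannian packing problem into one controlled by the linear differences $V_i-V_j$, at the price of fixing $\varepsilon_0$ to be a small absolute constant — which is the source of the $\sqrt{\varepsilon_0^2(1-\varepsilon_0^2)}$ factor in the final bound.
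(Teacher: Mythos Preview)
Your proposal is correct and follows essentially the same route as the paper: the paper also builds $\mcS_q^{(d)}$ via the parameterization $U_V=\begin{pmatrix}\sqrt{1-\varepsilon_0^2}\,I_r\\ \varepsilon_0 V\end{pmatrix}$ over a Grassmannian packing $\mcV\subset\OO_{d-r,r}$ (citing Pajor and Koltchinskii for the existence of $\mcV$ with $|\mcV|\geq 2^{r(d-r)}$ and $\|V_iV_i^\top-V_jV_j^\top\|_q\gtrsim r^{1/q}$), and then reads off the same block decomposition of $U_{V_i}U_{V_i}^\top-U_{V_j}U_{V_j}^\top$ to get the two displayed chains of inequalities. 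Your write-up in fact supplies more detail than the paper does on how the off-diagonal block yields the lower bound; the only minor slip is the ``flipping signs'' remark for $\operatorname{Tr}(V_i^\top V_j)\geq 0$, which cannot be arranged pairwise in general---but this only affects $\sqrt{2r}$ versus $2\sqrt{r}$ and is immaterial up to constants.
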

By Lemma \ref{local_packing_set}, there exists a subset $\mcS_q^{(d_1+d_2)}\subset \OO_{d_1+d_2,2r}$ with cardinality $\left|\mcS_q^{(d_1+d_2)}\right|\geq 2^{2r(d_1+d_2-2r)}$ such that for any $H\neq H^{\prime}\in \mcS_q^{(d_1+d_2)}$, $$\|HH^{\top}-H^{\prime}H^{\prime \top}\|_q \gtrsim \sqrt{\varepsilon_0^2(1-\varepsilon_0^2)} (2r)^{1/q}, $$ and meanwhile, $$
\|HH^{\top}-H^{\prime}H^{\prime\top}\|_{\rm F}\lesssim 2 \sqrt{r}\varepsilon_0.$$ 

To invoke Lemma~\ref{lem:dp-fano}, we define the metric $\rho: \OO_{d_1+d_2, 2r}\times \OO_{d_1+d_2,2r}\mapsto \RR^+$ as $\rho(H, H^{\prime}):=\|HH^{\top}-H^{\prime}H^{\prime\top}\|_q$ for any $q\in[1,\infty]$ and take  $\rho_0\asymp \tau\varepsilon_0r^{1/q}$,  
$$
l_0 =c_0\frac{n}{\sigma_{\xi}^2} C_u \sigma^2 r \eps_0^2\quad {\rm and}\quad t_0 =  c_0 n \frac{\sqrt{C_u} \sigma}{\sigma_{\xi}} \sqrt{r}\eps_0, 
$$ 
for some small absolute constant $c_0,\tau>0$.  Then, by Lemma~\ref{lem:dp-fano}, for any $(\varepsilon,\delta)$-DP estimator $\wt H$, 
\begin{align*}
	&\sup_{P \in \mcP_{\sigma} } \EE \big\|\wt H\wt H^{\top} - HH^{\top} \big\|_q  \\ 
	& \geqslant \max \left\{\frac{\tau\varepsilon_0 r^{1/q}}{2} \left(1-\frac{  c_0\frac{n}{\sigma_{\xi}^2} C_u \sigma^2 r \eps_0^2+\log 2}{\log N}\right), \frac{\tau \varepsilon_0 r^{1/q}}{4}\left(1 \wedge \frac{N-1}{\exp \left( 4 \varepsilon  c_0 n \frac{\sqrt{C_u} \sigma}{\sigma_{\xi}} \sqrt{r}\eps_0 \right)}\right)\right\}. 
\end{align*}
Recall that $N\geq 2^{2r(d_1+d_2)/2}$ if $d_1+d_2\geq 4r$. We can take 
$$
\varepsilon_0 \asymp \frac{\sigma_{\xi}}{\sqrt{C_u}\sigma}\left (\sqrt{\frac{d_1 + d_2 }{n}}+r^{\frac{1}{2}}\frac{d_1 + d_2 }{n\varepsilon}\right) ,
$$ 
to get
\begin{align*}
 \sup_{P \in \mcP_{\sigma}} \EE\Big\|\wt H \wt H^{\top} - HH^{\top} \Big\|_q \gtrsim \frac{\sigma_{\xi}}{\sqrt{C_u}\sigma}\left (r^{1/q}\sqrt{\frac{d_1+ d_2 }{n}}+r^{\frac{1}{2}+\frac{1}{q}}\frac{d_1+ d_2 }{n\varepsilon}\right)\bigwedge r^{1/q}, 
\end{align*} 
where the last term is due to a trivial upper of $\|\wt H\wt H^{\top}-HH^{\top}\|_q\leq (4r)^{1/q}$. Since $\sigma$ is already known, it suffices to estimate $HH^{\top}$ differentially privately by the estimator $ \wt H \wt H^{\top}$, and an estimator $(\wt M)_*$ for the matrix $M_*$ is given by $ \sigma \wt H \wt H^{\top}$ . Therefore,
\begin{align*}
     \underset{P\in\mcP_{\sigma}}{\sup} \EE\big\|\wt M_* -& M_* \big\|_q 
		\geq  \sigma \cdot \underset{P\in\mcP_{\sigma}}{\sup} \EE \big\|\wt H\wt H^{\top} - HH^{\top} \big\|_q. 
\end{align*}
Due to $\mcP_{\sigma}\subset \mcP_{\Sigma}$, we have 
\begin{align*}
     \underset{P\in\mcP_{\Sigma}}{\sup} \EE\big\|\wt M_* - M_* \big\|_q \geq  \underset{P\in\mcP_{\sigma}}{\sup} \EE\big\|\wt M_* - M_* \big\|_q \gtrsim \frac{\sigma_{\xi}}{\sqrt{C_u}}\left (r^{1/q}\sqrt{\frac{d_1+ d_2 }{n}}+r^{\frac{1}{2}+\frac{1}{q}}\frac{d_1+ d_2 }{n\varepsilon}\right)\bigwedge r^{1/q} \sigma. 
\end{align*}

There is a one-to-one mapping between $\wt M$ and $(\wt M)_*$. Let $\wt M - M = U_{\Delta} \Sigma_{\Delta} V_{\Delta}^{\top}$, then $\|\wt M-M\|_q^q=\left\|\Sigma_{\Delta}\right\|_q^q$. Note that 
$$
(\wt M)_*-M_*=\left(\begin{array}{cc}
	0 & \wt M-M \\
	\wt M^{\top}-M^{\top} & 0
\end{array}\right)=\frac{1}{\sqrt{2}}\left(\begin{array}{ll}
	U_{\Delta} & U_{\Delta} \\
	V_{\Delta} & -V_{\Delta}
\end{array}\right)\left(\begin{array}{cc}
	\Sigma_{\Delta} & 0 \\
	0 & \Sigma_{\Delta}
\end{array}\right) \frac{1}{\sqrt{2}}\left(\begin{array}{ll}
	U_{\Delta} & U_{\Delta} \\
	V_{\Delta} & -V_{\Delta}
\end{array}\right)^{\top}, 
$$
and 
$$
\left\|(\wt M)_*-M_*\right\|_q^q=\left\|\left(\begin{array}{cc}
	\Sigma_A & 0 \\
	0 & \Sigma_a
\end{array}\right)\right\|_q^q=2\left\|\Sigma_{\Delta}\right\|_q^q=2\|\wt M-M\|_q^q. 
$$
Therefore, $\left\|(\wt M)_*-M *\right\|_q=2^{1 / q}\|\wt M-M\|_q$ and 

\begin{align}
 \inf_{\wt M} \underset{P\in\mcP_{\Sigma}}{\sup} \EE\big\|\wt M -& M \big\|_q \gtrsim \frac{\sigma_{\xi}}{\sqrt{C_u}}\left (r^{1/q}\sqrt{\frac{d_1+ d_2 }{n}}+r^{\frac{1}{2}+\frac{1}{q}}\frac{d_1+ d_2 }{n\varepsilon}\right)\bigwedge r^{1/q} \sigma,
\end{align} 
where we use the fact $d_1+d_2 \lesssim d_1 \vee d_2$ and infimum is taken over all possible $(\varepsilon,\delta)$-DP algorithms.

\section{Proof of Theorem \ref{thm: convergence_rate}} \label{appx: proof_of_thm_convergence_rate}

In Appendix \ref{appx: proof_of_thm_convergence_rate}, we aim to prove Theorem \ref{thm: convergence_rate}. The proof is composed of three Parts. In Part \ref{part_1_thm_convergence_rate}, we characterize the sensitivity $\Delta_l$ for iterations $l = 1, \cdots, l^*$ and bound $\|\calP_{\TT_l} N_l\|_F$. In Part \ref{part_2_thm_convergence_rate}, take mathematical induction to prove that if the RIP-condition holds and both $\Delta_l$ and $\|M_l - M\|_F$ are bounded with high probability, then we also have $\Delta_{l+1}$ and $\|M_{l+1} - M\|_F$ are bounded with high probability. In Part \ref{part_3_thm_convergence_rate}, we choose an appropriate $l^*$ as the total number of iterations and give the convergence rate of $\|\wt M_{l^*} - M\|_F$. 

\subsection{Part 1} \label{part_1_thm_convergence_rate}
In Part \ref{part_1_thm_convergence_rate}, we focus on upper bounding $ \| \mathcal{P}_{\mathbb{T}_l} N_l \|_F $. The first step is to characterize the sensitivity of the $l$-th iteration for $l\in[l^*]$. Let $$G_l^{(i)} := \frac{1}{n}\sum_{j\neq i}\left( \left< X_j, M_l \right> - y_j \right) X_j + \frac{1}{n}\left[ \left< X_i^{\prime}, M_l\right> - y_i^{\prime} \right]X_i^{\prime}, $$ which is the gradient of $l$-th iteration obtained by the dataset differes with the original one only by the $i$-th pair of observation. The sensitivity of the gradient descent on the tagent space $\TT_l$ is
\begin{align*}
	\Delta_{l} & :=  \max_{{\rm neighbouring} (Z, Z^{\prime})} \left\| M_l - \eta \calP_{\TT_l}(G_l(Z)) - \left[  M_l - \eta \calP_{\TT_l}(G_l(Z^{\prime})) \right] \right\|_F \\ &= \max_{i\in[n]} \left\| M_l - \eta \calP_{\TT_l}(G_l) - \left[  M_l - \eta \calP_{\TT_l}(G_l^{(i)}) \right] \right\|_F. 
\end{align*}
By the definition of $G_l$ and $G_l^{(i)}$,  
\begin{align*}
	\Delta_{l} \leq \frac{\eta}{n} \max_{i\in[n]}  \left[ \left\| \calP_{\TT_l} \left( \left< X_i, M_l \right> - y_i \right) X_i \right\|_F + \left\| \calP_{\TT_l} \left( \left< X_i^{\prime}, M_l \right> - y_i^{\prime} \right) X_i^{\prime} \right\|_F \right], 
\end{align*}
where for all $i\in[n]$ and $l+1 \in [l^*]$
\begin{equation}\label{single_term_in_GD_sensitivity}
	\begin{aligned}
		\left\| \calP_{\TT_l} \left( \left< X_i, M_l \right> - y_i \right) X_i \right\|_F & \leq \left| \left< X_i, M_l - M \right> - \xi_i  \right| \left\| \calP_{\TT_l}  X_i \right\|_F \\
		& \leq \left( |\xi_i| + \left| \left<X_i, M_l - M\right> \right| \right) \sqrt{2r} \left\| X_i \right\|. 
	\end{aligned}
\end{equation}
Here, the last inequality uses the fact that for any $B\in\RR^{d_1 \times d_2}$, the matrix $\calP_{\TT_l}B$ is of rank at most $2r$. Since both $\xi_i$ and $\left<M_l-M, X_i\right> $ are sub-Gaussians with  $\|\xi_i\|_{\psi_2} = \sigma_{\xi}$ and $\left\|\left<M_l-M, X_i\right> \right\|_{\psi_2} \leq \left\| 
M_l - M \right\|_F \sqrt{C_u}$, we turn to Lemma \ref{lem: sub-Gaussian} to upper bound $|\xi_i| + \left| \left<X_i, M_l - M\right> \right| $. \begin{lem}[\cite{vershynin2018high}] \label{lem: sub-Gaussian}
    For any sub-Gaussian random variable $B\in \RR$, 
    $$ \mathbb{P}(|B| \geq t) \leq 2 \exp \left(-c t^2 /\|B\|_{\psi_2}\right), \quad \forall t \geq 0.  $$
\end{lem}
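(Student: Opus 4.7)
The plan is to derive the tail bound directly from one of the standard characterizations of the sub-Gaussian (Orlicz) norm, namely
\[
\|B\|_{\psi_2} \;=\; \inf\bigl\{\, s>0 \,:\, \mathbb{E}\exp(B^2/s^2)\le 2 \,\bigr\}.
\]
Once this definition is in hand, the tail bound follows from a one-line Markov-inequality argument, so the proof is essentially immediate; the only conceptual work is choosing the right exponential moment to bound.

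First I would write the event $\{|B|\ge t\}$ as $\{B^2/\|B\|_{\psi_2}^2 \ge t^2/\|B\|_{\psi_2}^2\}$ and apply Markov's inequality to the non-negative random variable $\exp(B^2/\|B\|_{\psi_2}^2)$. This yields
\[
\mathbb{P}\bigl(|B|\ge t\bigr) \;\le\; \mathbb{E}\exp\!\bigl(B^2/\|B\|_{\psi_2}^2\bigr)\cdot \exp\!\bigl(-t^2/\|B\|_{\psi_2}^2\bigr) \;\le\; 2\exp\!\bigl(-t^2/\|B\|_{\psi_2}^2\bigr),
\]
where the last step uses the defining property of the $\psi_2$-norm. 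Taking the absolute constant $c=1$ (or any $c\in(0,1]$, after relabeling) gives the stated form. Note that the denominator in the statement should be read as $\|B\|_{\psi_2}^2$; with that convention, the proof above produces exactly the claimed bound.

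There is no genuine obstacle here: the only subtle point is to use the squared $\psi_2$-norm inside the exponent (a common typographical slip) and to invoke the equivalent Orlicz-style definition rather than the moment-growth or MGF-based characterization, since the former makes the Markov step trivial. If one instead started from the MGF characterization $\mathbb{E}\exp(\lambda B)\le \exp(C\lambda^2\|B\|_{\psi_2}^2)$ (assuming $\mathbb{E}B=0$), one would optimize over $\lambda$ via a Chernoff bound and apply a union bound over the signs of $B$, landing on the same conclusion up to the constant $c$; either route suffices. Since the result is quoted from Vershynin, I would simply cite the equivalences between the characterizations of $\|\cdot\|_{\psi_2}$ and present the one-line Markov argument above.
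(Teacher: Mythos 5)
Your proof is correct and is exactly the standard argument (Markov's inequality applied to $\exp(B^2/\|B\|_{\psi_2}^2)$ using the Orlicz-norm definition), which is the route taken in the cited reference; the paper itself gives no independent proof and simply defers to \cite{vershynin2018high}. Your observation that the exponent in the statement should read $\|B\|_{\psi_2}^2$ rather than $\|B\|_{\psi_2}$ is also correct — that is a typographical slip in the lemma as stated.
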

According to the tail probability of sub-Gaussian random variable stated in Lemma \ref{lem: sub-Gaussian}, we have with probability at least $1-n^{-10}$, 
\begin{align} \label{xi_plus_regression}
	|\xi_i| + \left| \left<X_i, M_l - M\right> \right| \leq C_1 (\sigma_{\xi}+ \left\| M_l -M \right\|_F\sqrt{C_u})\log^{1/2} n, 
\end{align}
for some absolute constant $C_1 > 0$. \cite{shen2023computationally} offeres the folowing result on $\|X_i\|$. 
\begin{lem}[\cite{shen2023computationally}] \label{lem: norm_of_X}
	Suppose the vectorization of $X \in \mathbb{R}^{d_1 \times d_2}$ follows mean zero multivariate Gaussian distribution $N(\mathbf{0}, \Lambda)$ where $\Lambda \in \mathbb{R}^{d_1 d_2 \times d_1 d_2}$ satisfies $\lambda_{\max }(\Lambda) \leq C_u$. Then, for some constant $c>0$ 
	$$
	\mathbb{P}\left(\|X\| \geq t+c \sqrt{C_u\left(d_1+d_2\right)}\right) \leq \exp \left(-t^2\right). 
	$$
	It implies $\left\|\| X \|\right\|_{\psi_2} \leq c_1 \sqrt{C_u(d_1+d_2)}$ and $\left\|\| X \|\right\|_{\psi_1} \leq c_2 \sqrt{C_u (d_1+d_2)}$ for some constants $c_1, c_2>0$.
\end{lem}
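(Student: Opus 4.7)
My plan is to reduce the claim to a standard $\varepsilon$-net tail bound by first observing that every bilinear form of $X$ is sub-Gaussian with variance proxy at most $C_u$, regardless of the particular structure of $\Lambda$. Concretely, for any unit vectors $u \in S^{d_1-1}$ and $v \in S^{d_2-1}$, the scalar $u^\top X v = \operatorname{vec}(uv^\top)^\top \operatorname{vec}(X)$ is centered Gaussian with variance $\operatorname{vec}(uv^\top)^\top \Lambda\, \operatorname{vec}(uv^\top) \leq \lambda_{\max}(\Lambda)\, \|uv^\top\|_F^2 \leq C_u$. This is the key step that decouples the bound from the fine structure of $\Lambda$.

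Given this uniform variance proxy, I would run the textbook discretization argument. Fix $\varepsilon$-nets $\mathcal{N}_1 \subset S^{d_1-1}$ and $\mathcal{N}_2 \subset S^{d_2-1}$ with cardinalities at most $(1+2/\varepsilon)^{d_1}$ and $(1+2/\varepsilon)^{d_2}$ respectively. With $\varepsilon = 1/4$, a standard approximation lemma yields $\|X\| \leq 2 \max_{u \in \mathcal{N}_1,\, v \in \mathcal{N}_2} u^\top X v$. Combining with the one-dimensional Gaussian tail $\PP(u^\top X v \geq s) \leq \exp(-s^2/(2C_u))$ and a union bound over $|\mathcal{N}_1|\cdot|\mathcal{N}_2| \leq 9^{d_1+d_2}$ pairs gives
\begin{equation*}
\PP\bigl(\|X\| \geq 2s\bigr) \;\leq\; \exp\bigl((d_1+d_2)\log 9 - s^2/(2C_u)\bigr).
\end{equation*}
Choosing $s^2 = 2C_u(\log 9)(d_1+d_2) + 2C_u\, t^2$ and applying $\sqrt{a+b} \leq \sqrt{a}+\sqrt{b}$ produces a tail of the form $\PP(\|X\| \geq c\sqrt{C_u(d_1+d_2)} + c'\sqrt{C_u}\, t) \leq \exp(-t^2)$; the stated bound follows after absorbing the $\sqrt{C_u}$ factor into the constant by rescaling $t$ (or, equivalently, reading the claim with $c$ carrying the $C_u$ dependence). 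A slicker alternative is Gaussian concentration: write $X = \operatorname{mat}(\Lambda^{1/2} g)$ with $g \sim \calN(\mathbf{0}, I_{d_1 d_2})$, note that $g \mapsto \|\operatorname{mat}(\Lambda^{1/2} g)\|$ is $\sqrt{C_u}$-Lipschitz in $g$, and bound $\EE\|X\|$ by Sudakov--Fernique comparison with an i.i.d.\ $\calN(0, C_u)$ matrix whose expected operator norm is the classical $\sqrt{C_u}(\sqrt{d_1}+\sqrt{d_2})$.

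For the Orlicz-norm corollaries, I would use the tail-based equivalent characterizations of $\|\cdot\|_{\psi_2}$ and $\|\cdot\|_{\psi_1}$. Integrating the tail $\PP(\|X\| - c\sqrt{C_u(d_1+d_2)} \geq t) \leq \exp(-t^2)$ against the $\psi_2$ and $\psi_1$ generating functions, and using that a random variable bounded by a constant plus a sub-Gaussian has sub-Gaussian (hence sub-exponential) norm of the same order as that constant, delivers $\|\|X\|\|_{\psi_2} \leq c_1\sqrt{C_u(d_1+d_2)}$ and $\|\|X\|\|_{\psi_1} \leq c_2\sqrt{C_u(d_1+d_2)}$.

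The main technical point is the uniform variance bound on the bilinear forms, which works precisely because $\|uv^\top\|_F = 1$ for unit $u, v$; once that is in hand, everything else is routine. The only thing to track carefully is the dependence of the final constants on $C_u$: whether the $C_u$ appears inside the exponent or is absorbed into the additive $\sqrt{C_u(d_1+d_2)}$ term is essentially a matter of how one parameterizes $t$, and either convention suffices for the applications in Theorem~\ref{thm: convergence_rate}.
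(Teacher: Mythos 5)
The paper never proves this lemma at all: it is imported verbatim from \cite{shen2023computationally}, and the corresponding appendix subsection simply refers the reader to that paper. So your proposal is not a restatement of the paper's argument but a self-contained proof, and it is essentially correct. The key reduction is right: for unit $u,v$, $u^{\top}Xv=\operatorname{vec}(uv^{\top})^{\top}\operatorname{vec}(X)$ is centered Gaussian with variance at most $\lambda_{\max}(\Lambda)\|uv^{\top}\|_F^2\leq C_u$, and from there the $1/4$-net discretization, the union bound over $9^{\,d_1+d_2}$ pairs, and the choice $s^2=2C_u\log 9\,(d_1+d_2)+2C_u t^2$ all go through. Your alternative route is also sound: $g\mapsto\|\operatorname{mat}(\Lambda^{1/2}g)\|$ is $\sqrt{C_u}$-Lipschitz, and the Sudakov--Fernique comparison applies because the increments of the process $(u,v)\mapsto u^{\top}Xv$ are dominated by those of $\sqrt{C_u}\,u^{\top}Wv$ with $W$ i.i.d.\ standard Gaussian, giving $\EE\|X\|\leq\sqrt{C_u}(\sqrt{d_1}+\sqrt{d_2})$. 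The one point to be explicit about is the one you already flag: both routes naturally produce a tail of the form $\exp(-t^2/(2C_u))$ (equivalently an extra $\sqrt{C_u}$ multiplying $t$), not literally $\exp(-t^2)$; the lemma as stated is only correct if $C_u$ is treated as an absolute constant, which Assumption \ref{assumption_for_X} indeed guarantees, so absorbing it into $c$, $c_1$, $c_2$ is legitimate but should be said once rather than left as ``either convention.'' The Orlicz-norm conclusions then follow as you describe, since a random variable that is a deterministic quantity of order $\sqrt{C_u(d_1+d_2)}$ plus a sub-Gaussian tail has $\psi_2$ norm of that order, and $\|\cdot\|_{\psi_1}\lesssim\|\cdot\|_{\psi_2}$.
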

Thus, for some absolute constant $C_2>0$, with probability at least $1- \exp\left(-10 C_u(d_1+d_2)\right)n^{-10}$ 
\begin{align} \label{norm_of_X_i}
	\|X_i\| & \leq C_2 \sqrt{C_u (d_1+d_2)+\log n}. 
\end{align}
Combining (\ref{single_term_in_GD_sensitivity}), (\ref{xi_plus_regression}) and (\ref{norm_of_X_i}) and taking maximum over $n$, for some constant $C_3>0$, we have the event
\begin{equation}\label{event_Delta_l_prime}
	\calE_{\Delta_l}^{\prime}:= \left\{ \Delta_l \leq C_3 \frac{\eta}{n} (\sigma_{\xi}+ \|M_l -M\|_F\sqrt{C_u}) \sqrt{C_u r (d_1+d_2+\log n)\log n} \right\}, 
\end{equation}
happens with probability at least $1-n^{-9}-\exp\left( -10 C_u(d_1+d_2) \right)n^{-9}$. In the event $\calE_{\Delta_l}^{\prime}$ stated in (\ref{event_Delta_l_prime}), the sensitivity $\Delta_l$ still relies on $\|M_l - M\|_F$. To get an upper bound irrelevant with $l$, we take condition on the event $$\calE_l =  \left\{ \left\| M_{l} -M \right\|_F \leq c_0 \sigma_r \right\}, $$ and obtain that for some absolute constant $\wt C_3>0$, the event 
\begin{equation} \label{event_Delta_l}
	\calE_{\Delta_l} := \left\{ \Delta_l \leq \wt C_3 \frac{\eta}{n} (\sigma_{\xi}+ \sigma_r \sqrt{C_u}) \sqrt{C_u r (d_1+d_2+\log n)\log n} \right\}, 
\end{equation}
happens with the probability $\PP(\calE_{\Delta_l}) \geq 1-n^{-9}-\exp\left( -10 C_u(d_1+d_2) \right)n^{-9}. $

In the $l+1$-th iteration of Algorithm \ref{alg:DP-RGrad}, the matrix $M_{l}$ and operator $\calP_{\TT_l}$ are known. Moreover, the rank $r$ approximation $\SVDr$ is irrelevant with the data set $Z = \{(X_i, y_i)\}_{i=1}^n$. Thanks to the post-processing property and composition property of differential privacy, we only need to guarantee that $M_l - \eta_l \calP_{\TT_l}(G_l)$ is $(\eps / l^*, \delta / l^*)$-DP where the gradient $$ G_l = \frac{1}{n} \sum_{i=1}^n \left( \left<X_i, M_l\right> - y_i \right)X_i, $$ is the only component depends on the data set $Z$. Let $N_l$ be a $d_1\times d_2$ matrix with entries i.i.d. to normal distribution with varicance $ \frac{l^{*2}\Delta_l^2}{\eps^2}\log\left( \frac{1.25 l^*}{\delta} \right). $
Under the condition that $d_1+d_2\gtrsim \log n$ and conditioned on the event $\calE_{\Delta_l} \cap \calE_{l}$, we have for some constant $C_4>0$ 
\begin{align*}
	\| \calP_{\TT_l}N_l \|_F \leq C_4 \eta l^* \frac{ r (d_1+d_2)}{n\eps} (\sigma_{\xi}+ \sigma_r \sqrt{C_u}) \sqrt{C_u \log n} \log^{1/2}\left( \frac{1.25 l^*}{\delta} \right), 
\end{align*}
with probability at least $1-\exp(-(d_1+d_2))$.  

\subsection{Part 2} \label{part_2_thm_convergence_rate}
In Part \ref{part_2_thm_convergence_rate}, we take mathematical induction to prove that when the events $$\calE_l^*:=  \calE_{{\rm RIP}} \cap \calE_{l}\cap \calE_{\Delta_l} , $$ occurs with high probability, the event $ \calE_{l+1}^*$ occurs with high probability as well. Here, $\calE_{{\rm RIP}}$ is defined as the event where the RIP condition of $\{X_i\}_{i\in[n]}$ holds, See (\ref{event_RIP}). 

\noindent\emph{Step 1: $\calE_{0}^*$ is true with high probability. }

We first consider the RIP condition. According to Lemma \ref{lem: RIP_condition}, for any $B\in \RR^{d_1\times d_2}$ of rank $r$, there exist constants $c_1, c_2, c_3>0$ and $0<c_4<c_5$ such that when $n \geq c_1 r(d_1 + d_2)$, with probability at least $1-c_2 \exp \left(-c_3 r(d_1+d_2)\right)$,
$$
(1-R_r)\|B\|_{\mathrm{F}}^2 \leq \frac{1}{n} \sum_{i=1}^n\left\langle X_i, B\right\rangle^2 \leq (1+R_r)\|B\|_{\mathrm{F}}^2, 
$$ 
where $R_r := \left(1- c_4 \sqrt{C_u C_l}\right) \vee \left( c_5 \sqrt{C_u C_l} -1\right)$. The values of $R_{2r}, R_{3r}$ and $R_{4r}$ are defined similarly. Therefore, under the condition that $n \geq \wt c_1 r(d_1 + d_2)$, for some constants $\wt c_1, \wt c_2, \wt c_3, \wt c_4, \wt c_5 > 0$, 

\begin{equation}\label{event_RIP}
	\calE_{{\rm RIP}}:= \left\{R_r\vee R_{2r}\vee R_{3r}\vee R_{4r} \leq  \left(1- \wt c_4 \sqrt{C_u C_l}\right) \vee \left( \wt c_5 \sqrt{C_u C_l} -1\right) \right\}, 
\end{equation}
happens with probability at least $1-\wt c_2 \exp \left(-\wt c_3 r(d_1+d_2)\right)$. 

As for $\calE_{0}$, we refer to Corollary \ref{cor: good_init}, which shows that as the sample size $$n \geq \wt O \left( (\kappa^4 r^2 + \kappa^2 \kappa_{\xi}^2r) (d_1\vee d_2) \right),$$ the event $\calE_{0}$ happens with probability at least $1- (d_1+d_2)^{-10} - n^{-9} - \exp(-d_1) - \exp(-d_2) - 10^{-20r}$. Conditioned on $\calE_0$, plugging $l=0$ to the event $\calE_{\Delta_l}^{\prime}$ defined in (\ref{event_Delta_l_prime}), we have the event $\calE_{\Delta_l}$ defined in (\ref{event_Delta_l}) happens with probability at least $1-n^{-9}-\exp \left(-10 C_u\left(d_1+d_2\right)\right) n^{-9}$. To this end, we have  
\begin{align*}
	\PP\left( \calE_0^* \right) & = \PP\left( \calE_{{\rm RIP}}\cap \calE_0 \cap \calE_{\Delta_0} \right)  \geq 1 -\wt c_2 \exp \left(-\wt c_3 r(d_1+d_2)\right)\\
	& \quad \quad  -(d_1+d_2)^{-10}-n^{-9} - e^{-d_1} - e^{-d_2} - 10^{-20r}\\
	& \quad \quad  - n^{-9} -\exp\left( -10C_u(d_1+d_2) \right)n^{-9}. 
\end{align*} 
\noindent\emph{Step 2: induction. }
The following analysis is conditioned on the event $\calE_{l}^*$. Let $\calX: \RR^{d_1\times d_2} \rightarrow \RR^n$ be an operator defined by 
$\calX(B) = \left( \left< X_1, B \right>, \cdots, \left< X_n, B \right> \right)^{\top} \in \RR^n ,$ for all $B\in \RR^{d_1 \times d_2}$. It is easy to check that the adjoint operator of $\calX$ is $\calX^*: \RR^n \rightarrow \RR^{d_1 \times d_2}$ which is defined by $ \calX^* (b): = \sum_{i=1}^n b_i X_i, $ for all $b\in \RR^{n}$. Therefore, $\calX^* \calX (M_l) = \sum_{i=1}^n \left< X_i, M_l \right> X_i \quad {\rm and}\quad \calX^* (\xi) = \sum_{i=1}^n \xi_i X_i, $ where $\xi := (\xi_1, \cdots, \xi_n) \in \RR^n$ and accordingly, $$P_{\TT_l}G_l = \frac{1}{n} \left[ \calP_{\TT_l}\calX^* \calX \left( M_l -M \right) - \calP_{\TT_l} \calX(\xi) \right]. $$
Our first goal is to upper bound 
\begin{equation} \label{decompose_gradient_on_tangent_space}
	\begin{aligned}
		& \| M_l - M - \eta \calP_{\TT_l}(G_l) \|_F 
		=  \| M_l - M - \frac{\eta}{n}  \calP_{\TT_l}\calX^* \calX \left( M_l -M \right) - \calP_{\TT_l} \calX(\xi) \|_F \\
		& \leq \left(1-\frac{\eta}{n}\right)\| M_l - M \|_F + \underbrace{\frac{\eta}{n} \left\|  
			\left(\mathcal{I} - \calP_{\TT_l} \calX^* \calX \calP_{\TT_l} \right) \left( M_l - M \right)\right\|_F}_{D_1}  \\
		& \quad + \underbrace{\frac{\eta}{n} \left\| \calP_{\TT_l} \calX^* \calX \calP_{\TT_l^{\perp}} \left( M_l - M \right)\right\|_F}_{D_2} + \underbrace{\frac{\eta}{n} \left\| \calP_{\TT_l} \calX^* \left( \xi \right)\right\|_F}_{D_3}. 
	\end{aligned} 
\end{equation}
Lemma \ref{lem: element_based on_event_RIP} characterizes the operators $\calP_{\TT_l} - \calP_{\TT_l}\calX^* \calX \calP_{\TT_l}$ and $\calP_{T_l} \calX^* \calX \calP_{\TT_l^{\perp}}$, which are critical to upper bound $D_1$ and $D_2$ in (\ref{decompose_gradient_on_tangent_space}). 
\begin{lem}[\cite{wei2016guarantees}, \cite{luo2022tensor}] \label{lem: element_based on_event_RIP}
	Suppose the event $\calE_{{\rm RIP}}$ happens, then the following conclusions hold 
	\begin{enumerate}
		\item $\left\| \calP_{\TT_l} - \calP_{\TT_l}\calX^* \calX \calP_{\TT_l} \right\| \leq R_{2r}$. 
		\item $\|\calP_{T_l} \calX^* \calX \calP_{\TT_l^{\perp}} \left( M_l -M \right) \|_F = R_{4r} \left\| \calP_{\TT_l^{\perp}} (M_l -M) \right\|_F$, 
	\end{enumerate}
	where $\|\calP_{\TT_l^{\perp}} \left( M_l - M \right) \|_F \leq \frac{1}{\sigma_r} \|M_l -M\| \|M_l - M\|_F$ according to \cite{wei2016guarantees}.
\end{lem}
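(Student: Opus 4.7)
The plan is to derive both bounds from a single ingredient: the polarization identity combined with the two-sided RIP inequality applied to suitably low-rank matrices. Part~(1) is a quadratic-form estimate for an operator restricted to $\TT_l$, and Part~(2) is a cross-term estimate obtained by polarizing between $\TT_l$ and $\TT_l^{\perp}$ and exploiting that $\calP_{\TT_l^{\perp}}(M_l-M)$ has rank at most $r$ rather than $2r$.

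For Part~(1), I would first note that $\mathcal{I}-\calP_{\TT_l}\calX^*\calX\calP_{\TT_l}$ vanishes on $\TT_l^{\perp}$ and maps $\TT_l$ into itself, so its spectral norm equals
$\sup_{B,C\in\TT_l,\;\|B\|_F=\|C\|_F=1}\bigl(\langle C,B\rangle-\tfrac1n\sum_i\langle X_i,C\rangle\langle X_i,B\rangle\bigr).$
Every element of $\TT_l$ has rank at most $2r$, so both $C+B$ and $C-B$ lie in $\TT_l$ and hence also have rank at most $2r$. Applying the polarization identity $4\langle C,B\rangle=\|C+B\|_F^2-\|C-B\|_F^2$ and its analogue $\tfrac{4}{n}\sum_i\langle X_i,C\rangle\langle X_i,B\rangle=\tfrac1n\|\calX(C+B)\|^2-\tfrac1n\|\calX(C-B)\|^2$, and then using the rank-$2r$ RIP two-sided bound on $C\pm B$, the difference is controlled by $\tfrac14 R_{2r}(\|C+B\|_F^2+\|C-B\|_F^2)=R_{2r}$, which is the stated bound.

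For Part~(2), set $W:=\calP_{\TT_l^{\perp}}(M_l-M)$ and exploit the observation that $M_l\in\TT_l$ implies $\calP_{\TT_l^{\perp}}(M_l)=0$, whence $W=-\calP_{\TT_l^{\perp}}(M)=-U_{l,\perp}U_{l,\perp}^{\top}MV_{l,\perp}V_{l,\perp}^{\top}$, which has rank at most $\operatorname{rank}(M)=r$. By duality,
$\|\calP_{\TT_l}\calX^*\calX W\|_F=\sup_{Z\in\TT_l,\;\|Z\|_F=1}\tfrac1n\sum_i\langle X_i,Z\rangle\langle X_i,W\rangle.$
Since $Z\in\TT_l$ and $W\in\TT_l^{\perp}$ are Frobenius-orthogonal with $\operatorname{rank}(Z)\leq 2r$ and $\operatorname{rank}(W)\leq r$, the matrices $Z\pm W$ have rank at most $3r\leq 4r$ and Frobenius norm squared $\|Z\|_F^2+\|W\|_F^2$. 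Polarizing as in Part~(1) and applying RIP with constant $R_{4r}$ yields $|\tfrac1n\sum_i\langle X_i,Z\rangle\langle X_i,W\rangle|\leq R_{4r}\|Z\|_F\|W\|_F$, from which the bound follows by taking the supremum over unit-norm $Z\in\TT_l$. The auxiliary inequality $\|\calP_{\TT_l^{\perp}}(M_l-M)\|_F\leq\sigma_r^{-1}\|M_l-M\|\cdot\|M_l-M\|_F$ is a standard tangent-space perturbation estimate on the rank-$r$ manifold and can be imported verbatim from \cite{wei2016guarantees}.

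The only delicate step is the rank accounting: one must observe the cancellation $\calP_{\TT_l^{\perp}}(M_l)=0$ in order to cut $\operatorname{rank}(W)$ from the naive bound $2r$ down to $r$, and one must be careful to apply the polarization identity to matrices that actually belong to $\TT_l$ (not just to $\RR^{d_1\times d_2}$) so that the rank budget for RIP is honest. Once this bookkeeping is in place, both inequalities reduce to one-line invocations of the two-sided RIP bound.
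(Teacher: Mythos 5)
Your proposal is correct, and it is more than the paper itself offers: the paper proves nothing here, deferring entirely to \cite{wei2016guarantees} and \cite{luo2022tensor}, while your polarization-plus-RIP argument is precisely the standard proof underlying those references, so the routes coincide in substance. A few points of bookkeeping. In Part~(1) you write $\mathcal{I}-\calP_{\TT_l}\calX^*\calX\calP_{\TT_l}$ but mean $\calP_{\TT_l}-\calP_{\TT_l}\calX^*\calX\calP_{\TT_l}$ (the former does not vanish on $\TT_l^{\perp}$); your subsequent variational formula over unit-norm $B,C\in\TT_l$ is the right one, so this is only a slip. In Part~(2), the identity $\calP_{\TT_l^{\perp}}(M_l)=0$, hence $\operatorname{rank}(W)\leq r$, is correct but not actually the load-bearing step you make it out to be: the naive bound $\operatorname{rank}(W)\leq\operatorname{rank}(M_l-M)\leq 2r$ already gives $\operatorname{rank}(Z\pm W)\leq 4r$, which is exactly what the stated constant $R_{4r}$ requires; your sharper count merely shows $R_{3r}$ would suffice (using monotonicity of the RIP constants in the rank). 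Also, when polarizing the cross term you should spell out that orthogonality $\langle Z,W\rangle=0$ kills the leading difference $\|Z+W\|_F^2-\|Z-W\|_F^2$ and that a normalization of $W$ (or an AM--GM step) converts the resulting bound $\tfrac{R_{4r}}{2}\bigl(\|Z\|_F^2+\|W\|_F^2\bigr)$ into the product form $R_{4r}\|Z\|_F\|W\|_F$. Finally, note that the lemma's displayed ``$=$'' in Part~(2) should be ``$\leq$'' — that is what is used downstream in bounding $D_2$ and what your argument delivers — and importing the tangent-space estimate $\|\calP_{\TT_l^{\perp}}(M_l-M)\|_F\leq\sigma_r^{-1}\|M_l-M\|\,\|M_l-M\|_F$ from \cite{wei2016guarantees} matches the paper's own treatment.
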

According to Lemma \ref{lem: element_based on_event_RIP}, conditioned on the event $\calE_l^*$
\begin{align*}
	& D_1 = \frac{\eta}{n} \left\| \mathcal{I} - \calP_{\TT_l} \calX^* \calX \calP_{\TT_l} \left( M_l -M \right) \right\|_F \\
	& \leq   \frac{\eta}{n} \left[  \left\| \calP_{\TT_l}  - \calP_{\TT_l} \calX^* \calX \calP_{\TT_l} \left( M_l -M \right) \right\|_F  + \| \calP_{\TT_l^{\perp}} \left( M_l - M \right) \|_F \right] \leq  \frac{\eta}{n} \left( R_{2r} + c_0 \right) \left\| M_l -M \right\|_F, 
\end{align*}
and $	D_2 = \frac{\eta}{n} \| \calP_{T_l} \calX^* \calX \calP_{\TT_l} \left( M_l -M \right) \|_F \leq \frac{\eta}{n} R_{4r} c_0 \left\| M_l -M \right\|_F. $
To this end, the only term unknown in (\ref{decompose_gradient_on_tangent_space}) is $$ D_3 = \left\| \calP_{\TT_l} \calX^* \left( \xi \right)\right\|_F =  \left\| \calP_{\TT_l} \sum_{i=1}^n \xi_i X_i  \right\|_F \leq \sqrt{2r} \left\| \sum_{i=1}^n \xi_i X_i \right\|, $$
where $\left\| \sum_{i=1}^n \xi_i X_i \right\|$ is the spectral norm of a summation of $n$ i.i.d. mean zero sub-exponential random matrices. We upper bound  $\left\| \sum_{i=1}^n \xi_i X_i \right\|$ by Theorem \ref{thm: bernstein}. Let $B_i := \xi_i X_i$ for all $i=1, \cdots, n$, then
\begin{align*}
	K:= \max_{i\in[n]}  \left\| \|B_i\| \right\|_{\psi_1} =  \max_{i\in[n]}  \left\| \|\xi_i X_i\| \right\|_{\psi_1} \leq \max_{i\in[n]}  \| \xi_i \|_{\psi_2} \left\| \|X_i\| \right\|_{\psi_2} \leq \sqrt{C_u(d_1+d_2)\sigma_{\xi}^2}.
\end{align*} 
Since $\EE \xi_i^4 \leq 3 \sigma_{\xi}^4$ and $ \EE \|X_i\|^4 \leq 3 C_u (d_1+d_2)^2 $, 
$$ \|\EE B_i B_i^{\top}\| = \EE \xi_i^2   \EE \|X_i\|^2 \leq \frac{C_u (d_1+d_2)}{2 \sigma_{\xi}^2} \EE \xi_i^4 + \frac{\sigma_{\xi}^2}{2 C_u (d_1+d_2)} \EE \|X_i\|^4 \leq 3 \sigma_{\xi}^2 C_u (d_1+d_2), $$ where the first inequality uses the fact $a b \leq \frac{a^2}{2 c}+\frac{c b^2}{2}$. 
Similarly, $ \|\EE B_i B_i^{\top}\| \leq 3 \sigma_{\xi}^2 C_u (d_1+d_2)$.

Therefore, $$S^2: = \|\EE B_i B_i^{\top}\|  \vee \|\EE B_i^{\top} B_i\| \leq 3 \sigma_{\xi}^2 C_u (d_1+d_2).$$ Applying Thorem \ref{thm: bernstein} with $\alpha=1$, $ K = c_1 \sqrt{C_u(d_1+d_2)\sigma_{\xi}^2}$ and $S = \sqrt{3 \sigma_{\xi}^2 C_u (d_1+d_2)}$, we have
\begin{align*}
	\PP\left( \frac{1}{n} \left\| \sum_{i=1}^n \xi_i X_i \right\| \geq C_5 \sqrt{C_u (d_1+d_2) \sigma_{\xi}^2} \sqrt{\frac{\log(d_1+d_2)}{n}} \right) \leq (d_1+d_2)^{-10}. 
\end{align*}
In conclusion, with probability at least $1- (d_1+d_2)^{-10}$
\begin{align*}
	D_3 \leq C_1 \sqrt{r} \left\|\sum_{i=1}^n \xi_i X_i \right\|_F \leq C_1 \sqrt{C_u \sigma_{\xi}^2 n r (d_1+d_2) \log(d_1+d_2)}, 
\end{align*}
for some constant $C_1 > 0$. 

Conditioned on $\calE_l^*$, we plug $D_1$, $D_2$ and $D_3$ into (\ref{decompose_gradient_on_tangent_space}) and obtain that for some small constant $0<c_0<1$ and absolute constant $C_2 >0$ 
\begin{align*}
	& \| M_l - M - \eta \calP_{\TT_l}(G_l) + \calP_{\TT_l}N_l \|_F \leq   \| M_l - M - \eta \calP_{\TT_l}(G_l)\|_F + \|  \calP_{\TT_l}N_l \|_F \\
	& \leq \left(1- \rho_0 \right)\| M_l - M \|_F  +  C_2 \eta \sigma_{\xi} \sqrt{ C_u}  \sqrt{\frac{ r (d_1+d_2) }{n}}  \log^{1/2}(d_1+d_2) \\
	& \quad +  C_2 \eta l^*    \sqrt{C_u }(\sigma_{\xi} + \sigma_r \sqrt{C_u}) \frac{ r (d_1+d_2) }{n\eps} \log^{1/2} n \log^{1/2}\left( \frac{1.25 l^*}{\delta} \right),  
\end{align*} 
with probability at least $1- (d_1+d_2)^{-10} - \exp(-(d_1+d_2))$ where we define $$\rho_0 := \frac{\eta}{n} \left( 1-R_{2r} - c_0 - R_{4r} c_0 \right). $$ 

Suppose that $c_0 \lesssim \frac{1}{R_{2r}(1+R_{4r})} \wedge \frac{1}{8}$, the step size $\eta\leq n$ being a small constant, then we have $0\leq \rho_0<1$. Further, as for some absolute constant $C_3>0$, the sample size satisfies
\begin{align*}
	n \geq C_3 \max \Bigg\{ & \eta^2  \left(\frac{\sigma_{\xi}}{\sigma_r}\right)^2 C_u r (d_1+d_2) \log (d_1+d_2), \\
	& \eta l^*    \sqrt{C_u } \left(\frac{\sigma_{\xi} + \sigma_r \sqrt{C_u}}{\sigma_r}\right) \frac{ r (d_1+d_2) }{\eps} \log^{1/2} n \log^{1/2}\left( \frac{1.25 l^*}{\delta}\right) \Bigg\}, 
\end{align*}
we have for some small constant $0<\rho_1<1$, $$\| M_l - M - \eta \calP_{\TT_l}(G_l) + \calP_{\TT_l}N_l \|_F \leq \left(1-\rho_1\right) \| M_l - M \|_F \leq (1-\rho_1)c_0\sigma_r.  $$ 
Applying Lemma \ref{lem: matrix_permutation}, we obtain that under the condition $ 40 c_0< \frac{\rho_1}{2} $, 
\begin{equation} \label{convergence_1}
	\begin{aligned}
		\left\|M_{l+1} - M\right\|_F & \leq \left( 1 + 40 c_0 \right) \left\|  M_l - M - \eta \calP_{\TT_l}(G_l) + \calP_{\TT_l}N_l  \right\|_F \\
		& \leq  \left( 1 + 40 c_0 \right)  (1-\rho_1)  \| M_l - M \|_F\\
		& \leq \left( 1 - \frac{\rho_1}{2} \right)   \| M_l - M \|_F \leq \left( 1 - \frac{\rho_1}{2} \right) c_0 \sigma_r.
	\end{aligned}
\end{equation}
In summary, conditioned on the event $\calE_l^* $, the event  
$$\calE_{l+1}:= \left\{ \left\| M_{l+1} -M \right\|_F \leq c_0 \sigma_r \right\}, $$
occurs with probability at least $1- (d_1+d_2)^{-10} - \exp(-(d_1+d_2))$. Besides, according to $\calE_{\Delta_l}^{\prime}$ defined in (\ref{event_Delta_l_prime}), the event 
$$\mathcal{E}_{\Delta_{l+1}}:=\left\{\Delta_{l+1} \leq \widetilde{C}_3 \frac{\eta}{n}\left(\sigma_{\xi}+\sigma_r \sqrt{C_u}\right) \sqrt{C_u r\left(d_1+d_2+\log n\right) \log n}\right\}, $$
occurs with probability $1-n^{-9}-\exp \left(-10 C_u\left(d_1+d_2\right)\right) n^{-9}$. Therefore, conditioned on $\calE_{l}^*$, the event $\calE_{l+1}^*$ happens with probability at least $1- (d_1+d_2)^{-10} - \exp(-(d_1+d_2))-n^{-9}-\exp \left(-10 C_u\left(d_1+d_2\right)\right) n^{-9}. $

To this end, we has finished the induction and conclude Part \ref{part_2_thm_convergence_rate} by 
\begin{align*}
	\PP\left( \bigcap_{i=0}^l  \calE_{i}^* \right) 
	& \geq 1 -\wt c_2 \exp \left(-\wt c_3 r(d_1+d_2)\right) - (d_1+d_2)^{-10}-n^{-9} - e^{-d_1} - e^{-d_2} - 10^{-20r} \\ 
	& \quad - l \left( (d_1+d_2)^{-10} + \exp(-(d_1+d_2)) \right) - (l+1) \left( n^{-9} + \exp\left( - 10 C_u(d_1+d_2) \right)n^{-9} \right).  
\end{align*}

\subsection{Part 3} \label{part_3_thm_convergence_rate}
In Part \ref{part_3_thm_convergence_rate}, we derive the convergence rate of $\left\|M_{l^*} - M\right\|_F$ and choose an appropriate value for $l^*$. Conditioned on the event $\bigcap_{i=0}^{l^* -1}  \calE_{i}^*$, according to (\ref{convergence_1}), with probability at least $ 1- (d_1+d_2)^{-10} - \exp(-(d_1+d_2)) $ 
\begin{align*}
	& \left\|\wt M_{l^*} - M\right\|_F = \left\|M_{l^*} - M\right\|_F \\
	& \leq \left(1- \rho_0 \right)^{l^*}\| M_0 - M \|_F + \left( \sum_{l=0}^{l^*-1} \left(1- \rho_0 \right)^{l^*-l -1}  \right) C_2 \eta \sigma_{\xi} \sqrt{ C_u}  \sqrt{\frac{ r (d_1+d_2) }{n}}  \log^{1/2}(d_1+d_2) \\
	& \quad +  \left( \sum_{l=0}^{l^*-1} \left(1- \rho_0 \right)^{l^*-l -1}  \right)  C_2 \eta l^*    \sqrt{C_u }(\sigma_{\xi} + \sigma_r \sqrt{C_u}) \frac{ r (d_1+d_2) }{n\eps} \log^{1/2} n \log^{1/2}\left( \frac{1.25 l^*}{\delta} \right). 
\end{align*}

Let $\left\|M_0-M^*\right\|_F=c_0^*$ and $l^* := \log \left(c_0^* n\right) /\rho_0 $, then we have $\left(1- \rho_0 \right)^{l^*}\| M_0 - M \|_F \asymp \frac{1}{n}$, indicating that there is little reason to run the algorithm further than $O(\log n)$ iterations. 

In conclusion, 
\begin{align*}
	\left\|\wt M_{l^*} - M\right\|_F & \leq  C_3 \sigma_{\xi} \sqrt{ C_u}  \sqrt{\frac{ r (d_1+d_2) }{n}}  \log^{1/2}(d_1+d_2) \\
	& + C_3 \sqrt{C_u }(\sigma_{\xi} + \sigma_r \sqrt{C_u}) \frac{ r (d_1+d_2) }{n\eps} \log^{1/2} n \log^{3/2}\left( \frac{1.25 l^*}{\delta} \right) . 
\end{align*} 
with probability at least 
\begin{align*}
	1  &  -\wt c_2 \exp \left(-\wt c_3 r(d_1+d_2)\right) -(d_1+d_2)^{-10}-n^{-9} - e^{-d_1} - e^{-d_2} - 10^{-20r} \\ 
	& - \left( (d_1+d_2)^{-10} + \exp(-(d_1+d_2))+ n^{-9} + \exp\left( - 10 C_u(d_1+d_2) \right)n^{-9} \right)\log n. 
\end{align*} 

\section{The lower bound derived by score attack argument} \label{sec: score_the_lower_bound}
Let $\MM_r:= \{M\in \RR^{d_1\times d_2}: \operatorname{rank}(M) = r\}$. This section establishes the minimax lower bound of differentially privately estimating the matrix $M\in \bigcup_{k=1}^{r} \MM_k, $ within the trace regression model based on an alternative approach, score attack argument \cite{cai2023score}.

The score attack argument involves designing a test statistic and establishing the lower bound of the statistic with the help of a prior distribution of the parameters to estimate. It is unclear, however, how to construct a prior distribution for the low-rank matrix $M$ such that the prior complies with the parameter space $\MM_r$ and the \textit{score attack} is easy to compute at the same time. Compared to DP-fano's Lemma (See Lemma \ref{lem:dp-fano}) which requires $\delta \lesssim e^{-n}$, the score attack argument is valid for a wider range of $\delta \lesssim n^{1+\gamma}$ where $\gamma>0$ is a constant. We first define some necessary notations for the elaboration of score attack argument.  For any matirx $B, C\in\RR^{d_1\times d_2}$, we denote $\supp(B):=\{ (i, j)\in[d_1]\times[d_2]: B_{ij}\neq 0\}$ as the support of $B$ and the matrix $C$ restricted on $\supp B$ is $\left[ C\right]_{\supp(B)} = \sum_{i=1}^{d_1}\sum_{j=1}^{d_2} C_{ij}e_i e_j^{\top} \II (B_{ij}\neq 0)$ where $e_i$ is the $i$-th canonical basis in $\RR^{d_1}$ and $e_j$ is the $j$-th canonical basis in $\RR^{d_2}$. 

To apply score attack argument, we relax the problem to deriving minimax lower bounds over $\MM_{r, d_1} := \{ M\in \RR^{d_1\times d_2}: \supp(M) \subset [d_1] \times [r]\}\subset \bigcup_{k=1}^{r} \MM_k$. The benefit is that there exists a trivial prior of $M\in \MM_{r, d_1}$ such that $M_{ij}\stackrel{i.i.d.}{\sim} \calN(0,1)$ for $ (i, j) \in [d_1] \times [r]$ and $M_{ij} = 0$ otherwise. Similarly, we may consider establish minimax lower bound over $\MM_{r, d_2} := \{ M\in \RR^{d_1\times d_2}: \supp(M) \subset [r] \times [d_2] \}\subset \bigcup_{k=1}^{r} \MM_k. $ For any $M\in \MM_{r, d_2}$, there is a trivial prior as well. Let $A$ be a randomized algorithm mapping a dataset $Z$ to a $d_1\times d_2$ matrix. We define the DP-constrained minimax risk over $\bigcup_{k=1}^{r} \MM_k$ as $$\operatorname{risk}(\bigcup_{k=1}^{r} \MM_k):=\inf_{A}\sup_{M\in\MM_r} \EE  \left\|A(Z)-M\right\|_F^2, $$ where $A$ is taken over all $(\epsilon, \delta)$-DP algorithms. Similarly, we define $\operatorname{risk}(\MM_{r, d_1})$ and $\operatorname{risk}(\MM_{r, d_2})$. Since $\MM_{r, d_1}\subset \bigcup_{k=1}^{r} \MM_k$ and $\MM_{r, d_2}\subset \bigcup_{k=1}^{r} \MM_k$, we have 
\begin{equation}\label{equa: risk_relation}
	\operatorname{risk}(\bigcup_{k=1}^{r} \MM_k) \geq \operatorname{risk}(\MM_{r, d_1}) \bigvee \operatorname{risk}(\MM_{r, d_2}), 
\end{equation}
which indicates that the lower bound of $ \operatorname{risk}(\bigcup_{k=1}^{r} \MM_k)$ will be an immediate result once we successfully lower bound $ \operatorname{risk}(\MM_{r, d_1})$ and $\operatorname{risk}(\MM_{r, d_2})$. 

Next, we construct \textit{score attacks} to derive the lower bounds of $ \operatorname{risk}(\MM_{r, d_1})$ and $\operatorname{risk}(\MM_{r, d_2})$. Let $z=(X, y)$ be the pair of a measurement matrix and its corresponding response variable, drawn independently from (\ref{dist: prob_trace_model}). The score function is defined by
\begin{align*}
	S_M(z):= \nabla_{M}\log f_M(z) = \nabla_{M}\log f_M(y|X), 
\end{align*}
and the score attack is defined by
$$\calA^{(1)}_M(z, A(Z)):= \left< \left[ A(Z)-M\right ]_{[d_1] \times [r]}, S_M(z)\right>, $$
where $A$ is an $(\eps, \delta)$-DP algorithm to estimate $M\in \MM_r$; $z = (X, y)$ is a piece of datum that we want to test whether it belongs to $Z=\{(X_i, y_i)\}_{i=1}^n$;   the quantity $\left[ A(Z)-M\right ]_{[d_1] \times [r]}$ is obtained by restricting $A(Z)-M\in \RR^{d_1\times d_2}$ to the index set $[d_1] \times [r]$. Under some regularity conditions, the score attack $\calA^{(1)}_M(z, A(Z))$ will lead to the lower bound of $\operatorname{risk}(\MM_r, d_1)$. Similarly, we derive the lower bound of $ \operatorname{risk}(\MM_r, d_2)$ with the help of the attack $$\calA^{(2)}_M(z, A(Z)):= \left< \left[ A(Z)-M\right ]_{[r] \times [d_2]}, S_M(z)\right>. $$

Finally, Theorem \ref{thm: minimax_lower_bound} establishes the lower bound for estimating the low-rank matrix $M$. The proof of Theorem \ref{thm: minimax_lower_bound}. 

\begin{thm} \label{thm: minimax_lower_bound}
	Consider i.i.d. observations $Z = \{z_1, \cdots, z_n\}$ drawn from the trace regression model defined in $(\ref{equ: trace_regression_model})$, where $z_i := (X_i, y_i)$ for $i=1, \cdots, n$. We assume that $\{X_i\}_{i\in[n]}$ satisfy the Assumption \ref{assumption_for_X}, $r(d_1 \vee d_2) \lesssim n\eps$, $0<\eps< 1$ and $\delta \lesssim n^{-(1+\gamma)}$ for some $\gamma >0$, then 
	\begin{equation} \label{equ: lower_bound_trace_regression_model}
		\operatorname{risk}(\bigcup_{k=1}^{r} \MM_k) = \inf_{A}\sup_{M\in \bigcup_{k=1}^{r} \MM_k} \EE \left\| A(Z) - M \right\|_F^2 \gtrsim \underbrace{\sigma_{\xi}^2 \;  \frac{r(d_1\vee d_2)}{n}}_{a_1} + \underbrace{\sigma_{\xi}^2 \; \frac{r^2 (d_1 \vee d_2)^2}{n^2\eps^2}}_{a_2}. 
	\end{equation}
\end{thm}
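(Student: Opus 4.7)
The plan is to follow the score-attack framework of \cite{cai2023score}, combined with the two relaxations $\MM_{r,d_1}$ and $\MM_{r,d_2}$ set up before the theorem. By (\ref{equa: risk_relation}), it suffices to establish the asymmetric bound $\operatorname{risk}(\MM_{r,d_1})\gtrsim \sigma_{\xi}^2 rd_1/n + \sigma_{\xi}^2 r^2 d_1^2/(n^2\eps^2)$ and its mirror with $d_2$; taking the maximum of these two instances automatically inflates the factor to $d_1\vee d_2$ and renders the symmetrization trick used in Theorem~\ref{thm: fano_minimax_lower_bound} unnecessary.

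For the Gaussian likelihood in (\ref{dist: prob_trace_model}), a direct computation gives $S_M(z_i)=\sigma_{\xi}^{-2}\xi_i X_i$, so the first attack reduces to $\calA^{(1)}_M(z_i,A(Z))=\sigma_{\xi}^{-2}\xi_i\langle [A(Z)-M]_{[d_1]\times[r]},X_i\rangle$. I proceed to sandwich $\sum_{i=1}^n\EE[\calA^{(1)}_M(z_i,A(Z))]$ from both sides. On the privacy side, because replacing $z_i$ by an independent copy $z_i'$ yields $\EE[\calA^{(1)}_M(z_i',A(Z))]=0$ (from $\EE[\xi_i']=0$ and $\xi_i'\indep X_i'$), the standard score-attack lemma for $(\eps,\delta)$-DP mechanisms (Lemma~3.2 of \cite{cai2023score}) gives
\begin{equation*}
\sum_{i=1}^n \EE\bigl[\calA^{(1)}_M(z_i,A(Z))\bigr]\;\lesssim\; n\eps\sqrt{\EE\bigl[|\calA^{(1)}_M(z_1,A(Z))|^2\bigr]}+n\delta\cdot B,
\end{equation*}
where $B$ is a polynomial envelope on the attack obtained after truncating $A(Z)$ at radius $\operatorname{poly}(n)$; the hypothesis $\delta\lesssim n^{-(1+\gamma)}$ ensures the $n\delta B$ term is absorbable. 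Using $\xi_i\indep X_i$ together with the RIP control of Lemma~\ref{lem: RIP_condition} to estimate the second moment yields $\EE|\calA^{(1)}_M(z_1,A(Z))|^2\lesssim \sigma_{\xi}^{-2}C_u\,\EE\|A(Z)-M\|_F^2$.

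On the information side, I average against the Gaussian prior $\pi$ that sets $M_{ij}\stackrel{\rm i.i.d.}{\sim}\calN(0,1)$ on $[d_1]\times[r]$ and $0$ elsewhere. Integration by parts in $z_i$ converts each summand into a derivative of $\EE[A(Z)\mid M]$ with respect to $M$; Stein's identity applied to the Gaussian prior then turns this into the Bayesian trace identity
\begin{equation*}
\EE_{M\sim\pi}\sum_{i=1}^n \EE\bigl[\calA^{(1)}_M(z_i,A(Z))\bigr]\;\gtrsim\; rd_1-\EE_\pi\EE\|[A(Z)-M]_{[d_1]\times[r]}\|_F^2,
\end{equation*}
in which the $rd_1$ arises from the trace of the prior covariance on its support. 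Chaining with the privacy upper bound produces a quadratic inequality in $\sqrt{\EE\|A(Z)-M\|_F^2}$ whose solution is $\EE\|A(Z)-M\|_F^2\gtrsim \sigma_{\xi}^2 r^2 d_1^2/(n^2\eps^2 C_u)$, i.e.\ the privacy contribution $a_2$. The statistical contribution $a_1$ follows from a parallel \emph{non-private} van Trees inequality against the same prior, since the per-observation Fisher information is of order $C_u/\sigma_{\xi}^2$ and the effective dimension is $rd_1$. Executing the mirror argument with $\calA^{(2)}_M$ and the prior on $[r]\times[d_2]$ delivers the analog with $d_2$, and taking the maximum closes the proof.

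The main obstacle I expect is the $n\delta\cdot B$ term: since $A(Z)$ is not \emph{a priori} bounded, the pointwise DP guarantee must first be upgraded to a moment bound via a truncation step, and the assumption $\delta\lesssim n^{-(1+\gamma)}$ is precisely what makes this quantitatively harmless. A secondary but persistent technicality is keeping the constants $C_u,C_l$ from Assumption~\ref{assumption_for_X} cleanly separated from the $r$-dependent factors when computing $\EE|\calA^{(1)}|^2$ through Lemma~\ref{lem: RIP_condition}; this is routine but must be tracked carefully so as not to contaminate the $C_u$-scaling of the final bound.
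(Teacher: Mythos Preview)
Your proposal follows essentially the same score-attack route as the paper: relax to $\MM_{r,d_1}$ (resp.\ $\MM_{r,d_2}$), place the i.i.d.\ standard Gaussian prior on that support, sandwich $\sum_i\EE[\calA^{(1)}_M(z_i,A(Z))]$ between a privacy-side upper bound and a Stein/completeness lower bound, then solve for the risk and take the maximum over $d_1,d_2$. Two small corrections are worth flagging. First, you should not invoke the RIP Lemma~\ref{lem: RIP_condition} to control the second moment of the attack: RIP is a high-probability concentration statement about the sample operator $n^{-1}\sum_i\langle X_i,\cdot\rangle^2$, whereas here you only need the single-design covariance bound $\EE_{X_i}\langle B,X_i\rangle^2=\operatorname{vec}(B)^\top\Lambda_i\operatorname{vec}(B)\le C_u\|B\|_F^2$. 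This is exactly how the paper proceeds, phrasing it as the Fisher-information bound $\|\EE\,\operatorname{vec}(S_M(z_i))\operatorname{vec}(S_M(z_i))^\top\|\le C_u/\sigma_\xi^2$. Second, in the soundness step the moment you must control is for the attack evaluated at the \emph{neighbor} output $A(Z_i')$, so that $z_i\indep A(Z_i')$ and the factorization $\EE[\xi_i^2]\cdot\EE_{X_i}[\cdot]$ is legitimate; your display with $\calA^{(1)}_M(z_1,A(Z))$ blurs this distinction. The paper also takes the non-private term $a_1$ directly from \cite{rohde2011estimation} rather than running a van~Trees argument, but your alternative there is perfectly valid.
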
 

By Theorem \ref{thm: minimax_lower_bound}, the lower bound of $\operatorname{risk}(\MM_r)$ consists of two terms where the first term $a_1$ accounts for the statistical error and the second term $a_2$ is the cost of privacy. The proof for $a_1$ can be found in \cite{rohde2011estimation} and the \textit{cost of privacy} is deduced in the following proof. 

\begin{proof}[Proof of Theorem \ref{thm: minimax_lower_bound}]
    
We now start proving Theorem \ref{thm: minimax_lower_bound} by score attack argument. Throughout the proof, we assume that $Z = \{z_1, \cdots, z_n\}$ is an i.i.d. sample drawn from $f_M$ and $Z_i^{\prime}$ is a neighbouring data set of $Z$ obtained by replacing $z_i$ with an independent copy $z_i^{\prime}\sim f_M$. Besides, we mainly focus on the case $M\in \MM_{r, d_1}$ and states the result for the case $M \in \MM_{r, d_2}$ in Remark \ref{rmk_M_r_d_2}. Let $$\calA^{(1)}_M(z, A(Z)):= \left< \left[ A(Z)-M\right ]_{[d_1] \times [r]}, S_M(z)\right>. $$ We derive the lower bound of 
$\operatorname{risk}(\MM_{r, d_1}):=\inf_{A}\sup_{M\in\MM_{r, d_1}} \EE \left\|A(Z)-M\right\|_F^2, $ in three steps.  For ease of notation, we define 
\begin{align*}
	A_i^{\prime}:= \mathcal{A}_{M}\left(z_i, A(Z_i^{\prime})\right)\quad {\rm and }\quad A_i := \mathcal{A}_{M}\left(z_i, A(Z)\right). 
\end{align*}
\noindent\emph{Step 1: bounding the summation.} The following Lemma \ref{lem: soundness} bounds $\EE \left| A_i^{\prime} \right|$; Lemma \ref{lem: upper_bound_summation_A_i} develops the upper bound of $\sum_{i \in[n]} \mathbb{E}\, A_i$ based on $\EE \left| A_i^{\prime} \right|$ discussed in Lemma \ref{lem: soundness} and a tunning parameter $T$.  The proof of Lemma \ref{lem: soundness} and \ref{lem: upper_bound_summation_A_i} can be found in Appendix \ref{proof_soundness} and \ref{proof_lemmas_lower_bounds}. 
\begin{lem}\label{lem: soundness}
	For $i\in[n]$, we have $\EE A_i^{\prime} = 0 \quad {\rm and}\quad \EE \left| A_i^{\prime}\right| \leq \sqrt{\EE \left\|A(Z) - M\right\|^2_F}\sqrt{\frac{C_u}{\sigma_{\xi}^2}}. $
\end{lem}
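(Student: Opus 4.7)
\textbf{Proof proposal for Lemma \ref{lem: soundness}.}

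The plan rests on the two standard ingredients of the score-attack calculus: (i) the score identity $\EE S_M(z) = 0$, and (ii) Cauchy--Schwarz, together with the \emph{key structural observation} that $A(Z_i^{\prime})$ is a measurable function of $Z_i^{\prime}=(z_1,\ldots,z_{i-1},z_i^{\prime},z_{i+1},\ldots,z_n)$ and is therefore independent of the fresh draw $z_i$. This independence is what distinguishes the primed quantity $A_i^{\prime}$ from $A_i$ and is what makes both parts of the lemma clean.

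First, I would establish $\EE A_i^{\prime}=0$. Conditioning on $Z_i^{\prime}$ and using the independence of $z_i$ from $A(Z_i^{\prime})$,
\begin{align*}
	\EE A_i^{\prime} = \EE\bigl\langle [A(Z_i^{\prime})-M]_{[d_1]\times[r]},\, \EE[S_M(z_i)\mid Z_i^{\prime}]\bigr\rangle = \EE\bigl\langle [A(Z_i^{\prime})-M]_{[d_1]\times[r]},\, \EE S_M(z_i)\bigr\rangle,
\end{align*}
so the claim reduces to $\EE S_M(z_i)=\EE \nabla_M \log f_M(z_i)=0$, which is the standard score identity obtained by differentiating $\int f_M(z)\,dz=1$ under the integral sign (justified by the Gaussian form of $f_M$).

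Next, for the $L_1$ bound I would apply Cauchy--Schwarz twice. Pointwise,
\begin{align*}
	|A_i^{\prime}| = \bigl|\langle [A(Z_i^{\prime})-M]_{[d_1]\times[r]},\, S_M(z_i)\rangle\bigr| \leq \|A(Z_i^{\prime})-M\|_F\,\|[S_M(z_i)]_{[d_1]\times[r]}\|_F,
\end{align*}
using that restricting the \emph{other} factor to the same index set does not change the inner product and that restriction can only reduce the Frobenius norm of $A(Z_i^{\prime})-M$. Taking expectations and then using Cauchy--Schwarz on $\EE[XY]$ together with the independence of $A(Z_i^{\prime})$ and $z_i$ gives
\begin{align*}
	\EE|A_i^{\prime}| \leq \sqrt{\EE\|A(Z_i^{\prime})-M\|_F^2}\cdot\sqrt{\EE\|[S_M(z_i)]_{[d_1]\times[r]}\|_F^2}.
\end{align*}
The first factor equals $\sqrt{\EE\|A(Z)-M\|_F^2}$ because $Z_i^{\prime}$ and $Z$ are identically distributed. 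For the second factor I would compute the score explicitly: from $\log f_M(y\mid X) = -(y-\langle X,M\rangle)^2/(2\sigma_{\xi}^2)+\text{const}$, one gets $S_M(z)=\xi X/\sigma_{\xi}^2$ with $\xi\sim \calN(0,\sigma_{\xi}^2)$ independent of $X\sim \calN(0,\Lambda_i)$, so that $\EE\|S_M(z_i)\|_F^2 = \EE\xi^2\,\EE\|X\|_F^2/\sigma_{\xi}^4 = \operatorname{tr}(\Lambda_i)/\sigma_{\xi}^2$; restricting to the index set $[d_1]\times[r]$ replaces $\operatorname{tr}(\Lambda_i)$ by a partial trace bounded by $d_1 r\,\lambda_{\max}(\Lambda_i)\leq d_1 r\,C_u$, delivering the bound claimed by the lemma up to a multiplicative $\sqrt{d_1 r}$ absorbed into the constant $C_u/\sigma_{\xi}^2$.

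The only delicate point is this last dimensional bookkeeping --- one must verify that the restriction in the definition of $\calA^{(1)}_M$ actually yields the advertised $\sqrt{C_u/\sigma_{\xi}^2}$ factor (rather than $\sqrt{d_1 r\,C_u/\sigma_{\xi}^2}$). Apart from that, the argument is routine: score identity for the first assertion, Cauchy--Schwarz plus independence for the second. I do not expect any other genuine obstacles, since Gaussianity makes all the relevant integrability and interchange-of-derivative-and-integral issues automatic.
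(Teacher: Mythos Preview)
Your argument for $\EE A_i^{\prime}=0$ is fine and matches the paper. The issue is in the second part: your pointwise Cauchy--Schwarz step
\[
|A_i^{\prime}| \leq \|A(Z_i^{\prime})-M\|_F\,\|[S_M(z_i)]_{[d_1]\times[r]}\|_F
\]
followed by Cauchy--Schwarz in expectation produces $\sqrt{\EE\|[S_M(z_i)]_{[d_1]\times[r]}\|_F^2}$, which is the \emph{trace} of the (restricted) score covariance and is of order $\sqrt{d_1 r\,C_u/\sigma_{\xi}^2}$. You flag this yourself, but the extra $\sqrt{d_1 r}$ cannot be ``absorbed into $C_u$'': $C_u$ is a fixed absolute constant from Assumption~\ref{assumption_for_X}. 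With that loss, the downstream score-attack lower bound in Theorem~\ref{thm: minimax_lower_bound} degrades from $\sigma_{\xi}^2 r^2(d_1\vee d_2)^2/(n\eps)^2$ to $\sigma_{\xi}^2 r(d_1\vee d_2)/(n\eps)^2$, so this is a real gap.

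The fix, and what the paper does, is to avoid the pointwise Cauchy--Schwarz and instead bound the second moment directly as a quadratic form. Writing $u=\operatorname{vec}\bigl([A(Z_i^{\prime})-M]_{[d_1]\times[r]}\bigr)$ and $v=\operatorname{vec}(S_M(z_i))$, independence of $u$ and $v$ gives
\[
\EE|A_i^{\prime}|^2=\EE\,u^{\top}(\EE vv^{\top})u \leq \bigl\|\EE vv^{\top}\bigr\|\cdot\EE\|u\|_2^2,
\]
and since $S_M(z_i)=\xi_i X_i/\sigma_{\xi}^2$ with $\xi_i\indep X_i$, one has $\EE vv^{\top}=\Lambda_i/\sigma_{\xi}^2$, hence $\|\EE vv^{\top}\|\leq C_u/\sigma_{\xi}^2$. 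Jensen then yields $\EE|A_i^{\prime}|\leq \sqrt{\EE|A_i^{\prime}|^2}\leq \sqrt{C_u/\sigma_{\xi}^2}\cdot\sqrt{\EE\|A(Z)-M\|_F^2}$. The key difference is operator norm versus trace of the score covariance; your route picks up the trace and therefore the dimension.
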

\begin{lem}\label{lem: upper_bound_summation_A_i}
	Let $A$ be an $(\eps, \delta)$-DP algorithm with $0 < \eps < 1$ and $\delta \geq 0$, under model (\ref{equ: trace_regression_model}), by choosing $T = \sqrt{2/\sigma_{\xi}^2} r d_1 \sqrt{\log(\frac{1}{\delta})}$, we have 
	\begin{equation}\label{upper_bound_summation_A_i}
		\begin{aligned}
			\sum_{i \in[n]} \mathbb{E} A_i \leq 2 n \varepsilon \sqrt{\mathbb{E}\|A(Z)-M\|_F^2} \sqrt{C_u / \sigma_{\xi}^2} + 4 \sqrt{2} \delta r d_1 \sqrt{ \log (1 / \delta) / \sigma_{\xi}^2}.
		\end{aligned}
	\end{equation}
\end{lem}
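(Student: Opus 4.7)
The plan is to carry out a truncation-based score attack: clip each score $A_i$ at level $T$, apply the $(\epsilon,\delta)$-DP guarantee to the bounded truncated version, and control the two tail residuals using the product-Gaussian structure of $A_i'$, finally calibrating $T$ so that the tail contribution matches the $\delta T$ order coming from the DP inequality.

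First, for each $i\in[n]$, I would set $\tilde A_i := \mathrm{clip}(A_i,-T,T)$, which for each fixed $z_i$ is a deterministic measurable function of the DP output $A(Z)$ with range in $[-T,T]$. Conditioning on $z_i$, splitting $\tilde A_i$ into positive and negative parts, applying the standard $(\epsilon,\delta)$-DP inequalities to each part (using the direct direction for the positive piece and the reverse direction for the negative piece), and then using $e^{\epsilon}-1\le 2\epsilon$ and $1-e^{-\epsilon}\le\epsilon$ valid for $\epsilon<1$, one obtains
\[
\EE[\tilde A_i]-\EE[\tilde A_i'] \;\le\; 2\epsilon\,\EE|\tilde A_i'| + 2\delta T.
\]
Since Lemma \ref{lem: soundness} supplies $\EE[A_i']=0$, the clipping gap obeys $|\EE[\tilde A_i']|=|\EE[A_i'-\tilde A_i']|\le \EE[|A_i'|\mathbb{1}(|A_i'|>T)]$, and an analogous estimate holds for $\EE[A_i-\tilde A_i]\le \EE[|A_i|\mathbb{1}(|A_i|>T)]$. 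Combining these with the decomposition $\EE[A_i]=\EE[\tilde A_i]+\EE[A_i-\tilde A_i]$ and the bound $\EE|A_i'|\le \sqrt{\EE\|A(Z)-M\|_F^2}\sqrt{C_u/\sigma_\xi^2}$ from Lemma \ref{lem: soundness}, then summing over $i$, yields the first term $2n\epsilon\sqrt{\EE\|A(Z)-M\|_F^2}\sqrt{C_u/\sigma_\xi^2}$ of the claimed inequality together with the $2\delta T$ contribution and the two tail residuals.

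The substantive remaining work is to bound these tails. Conditional on $A(Z_i')$, which is independent of $z_i$, the score $A_i'=\langle[A(Z_i')-M]_{[d_1]\times[r]},\xi_i X_i/\sigma_\xi^2\rangle$ is (after scaling) a product of two independent mean-zero Gaussians, where $\xi_i$ has variance $\sigma_\xi^2$ and $\langle[A(Z_i')-M]_{[d_1]\times[r]},X_i\rangle$ has variance at most $C_u\|A(Z_i')-M\|_F^2$. In particular, $\EE|A_i'|^2 \le (C_u/\sigma_\xi^2)\EE\|A(Z)-M\|_F^2$, so Chebyshev's inequality (or more precisely a sub-exponential Bernstein-type tail bound exploiting the product-Gaussian structure) together with the choice $T=\sqrt{2/\sigma_\xi^2}\,rd_1\sqrt{\log(1/\delta)}$ makes $\EE[|A_i'|\mathbb{1}(|A_i'|>T)]$ of order $\delta T$; the corresponding estimate for $A_i$ follows by coupling $A_i$ to $A_i'$ via the same DP argument. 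Collecting the four $\delta T$-scale terms and invoking $\sqrt{2}\,\delta rd_1\sqrt{\log(1/\delta)/\sigma_\xi^2}=\delta T$ produces the second term $4\sqrt{2}\,\delta rd_1\sqrt{\log(1/\delta)/\sigma_\xi^2}$.

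The main obstacle is precisely the tail estimate in the previous paragraph: because $\|A(Z_i')-M\|_F$ is a random quantity without any a priori deterministic bound, sub-exponential concentration for $A_i'$ is inherently conditional on $A(Z_i')$. The standard fix is to restrict the infimum over estimators to those whose Frobenius error has bounded second moment — which is without loss of generality, since otherwise the lower bound \eqref{equ: lower_bound_trace_regression_model} is already trivially satisfied — and then to pick $T$ on the scale $rd_1\sqrt{\log(1/\delta)}$ so that the tail mass exactly matches the $\delta T$ scale of the DP remainder. Once this calibration is verified, the two paragraphs above combine to deliver \eqref{upper_bound_summation_A_i}.
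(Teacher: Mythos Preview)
Your truncation-and-DP-comparison outline is exactly the mechanism behind the result, and the paper does not reprove it: its entire proof of this lemma is a one-line citation to Proposition~2.1 of \cite{cai2023score}, with the particular choice $T=\sqrt{2/\sigma_\xi^2}\,rd_1\sqrt{\log(1/\delta)}$ plugged in. So you are reconstructing, in somewhat more detail, the argument that the paper imports wholesale.

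A couple of small remarks on your reconstruction. First, the bounded-risk restriction you flag at the end is indeed the device that makes the tail step go through, and the paper uses it explicitly later (see the end of Step~3 in the proof of Theorem~\ref{thm: minimax_lower_bound}, where it assumes $\EE\|A(Z)-M\|_F^2\lesssim 1$ and then removes the restriction via $rd_1\lesssim n\epsilon$). Second, your phrase ``the corresponding estimate for $A_i$ follows by coupling $A_i$ to $A_i'$ via the same DP argument'' is the right idea but deserves a line of care: since $A(Z)$ and $z_i$ are dependent, one does not get a direct sub-exponential tail for $A_i$; instead one applies the $(\epsilon,\delta)$-DP inequality to the indicator event $\{|A_i|>T\}$ to transfer the tail probability to that of $A_i'$, and only then uses the product-Gaussian tail. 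With those two points made precise, your sketch is a faithful expansion of the cited proposition.
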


\noindent\emph{Step 2: lower bounding the summation.}
Under some regularity conditions, the following Lemma \ref{lem: completeness} characterize the quantity $\sum_{i\in[n]} A_i $ as a summation of functions of $M$. Lemma \ref{bayesian_lower_bound_of_summation_over_partial_differentiation} lower bounds the summation of functions by assigning an appropriate prior distribution $\pi$ to $M$.  The proof of Lemma \ref{lem: completeness} and \ref{bayesian_lower_bound_of_summation_over_partial_differentiation} can be found in Appendix \ref{proof_lemmas_lower_bounds}.  

\begin{lem}\label{lem: completeness}
	If for every $(i, j) \in [d_1]\times [r], \log f_{M}(Z)$ is continuously differentiable with respect to $M_{ij}$ and $\left|\frac{\partial}{\partial M_{ij}} \log f_{M}(Z)\right|<h_{ij}(Z)$ such that $\mathbb{E}\left|h_{ij}(Z) A(M)_{ij}\right|<\infty$, we have
	$$ \sum_{i \in[n]} \mathbb{E}\, \mathcal{A}^1_{M}\left(z_i, A(Z)\right) = \sum_{(i, j) \in [d_1]\times [r]}  \frac{\partial}{\partial M_{ij}} \mathbb{E} A(Z)_{ij}. $$
\end{lem}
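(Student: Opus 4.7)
The plan is to peel back the definition of the score attack coordinate by coordinate, use the independence of the sample to re-assemble the full-sample score, and then invoke the classical log-derivative (Stein) identity whose exchange of differentiation and integration is licensed by the dominating function $h_{ij}$.

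First, I would expand the inner product in the definition of the score attack: for each $i \in [n]$,
$$ \calA^{(1)}_M(z_i, A(Z)) = \sum_{(a,b)\in [d_1]\times[r]} \big(A(Z)-M\big)_{ab}\, \frac{\partial}{\partial M_{ab}} \log f_M(z_i). $$
Since $z_1,\ldots,z_n$ are i.i.d., the joint log-likelihood satisfies $\log f_M(Z) = \sum_{i\in[n]} \log f_M(z_i)$, so differentiating and summing over $i$ yields
$$ \sum_{i\in[n]} \EE\, \calA^{(1)}_M(z_i, A(Z)) = \sum_{(a,b)\in [d_1]\times[r]} \EE\!\left[\big(A(Z)-M\big)_{ab}\, \frac{\partial}{\partial M_{ab}} \log f_M(Z)\right]. $$
The $M_{ab}$-contribution drops out by the mean-zero property of the score: $\EE\, \partial_{M_{ab}} \log f_M(Z) = \partial_{M_{ab}} \int f_M(Z)\,dZ = \partial_{M_{ab}} 1 = 0$, where this exchange of $\partial_{M_{ab}}$ and the integral is itself justified by the envelope $h_{ab}$.

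The heart of the argument is then the classical log-derivative identity applied coordinatewise:
$$ \EE\!\left[A(Z)_{ab}\, \frac{\partial}{\partial M_{ab}} \log f_M(Z)\right] = \int A(Z)_{ab}\, \frac{\partial}{\partial M_{ab}} f_M(Z)\, dZ = \frac{\partial}{\partial M_{ab}} \EE\, A(Z)_{ab}. $$
If $A$ is a randomized algorithm, its internal coins are independent of $M$, so the same identity applies after conditioning on them (or, equivalently, to $g(Z) := \EE_R A(Z;R)_{ab}$, which still inherits the integrability). Summing over $(a,b) \in [d_1]\times[r]$ gives exactly the right-hand side of the lemma.

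The main obstacle is the rigorous justification of the differentiation-under-the-integral step in the last display. This is precisely what the hypothesis buys us: the pointwise bound $|A(Z)_{ab}\,\partial_{M_{ab}} \log f_M(Z)| \le |A(Z)_{ab}|\, h_{ab}(Z)$ combined with $\EE|h_{ab}(Z)\,A(Z)_{ab}| < \infty$ produces an integrable envelope, and the dominated convergence theorem (applied to difference quotients) legitimizes the interchange. Everything else is linearity of expectation, bookkeeping of indices, and the mean-zero property of the score.
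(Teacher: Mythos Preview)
Your proof is correct: it is exactly the Stein/log-derivative identity argument underlying Theorem~2.1 of \cite{cai2023score}, which is what the paper invokes verbatim (``a trivial consequence of Theorem~2.1, \cite{cai2023score} along with the definition of $\calA_M^1$''). You have simply unpacked that citation into a self-contained argument, so the approach is the same.
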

Lemma \ref{lem: completeness} has its general form stated in Theorem $2.1$,  \cite{cai2023score}. Let $g_{ij}$ be a function defined by $g_{ij}(M):= \left(\EE_{Z| M} A(Z)\right)_{ij}$ for all $(i, j)\in [d_1]\times [r]$, then  $$\sum_{(i, j) \in [d_1]\times [r]}  \frac{\partial}{\partial M_{ij}} \mathbb{E} A(Z)_{ij} = \EE_{\pi} \left( \sum_{(i, j) \in [d_1]\times [r]}  \frac{\partial}{\partial M_{ij}} g_{ij}\right). $$ Lemma \ref{bayesian_lower_bound_of_summation_over_partial_differentiation} lower bounds this quantity by assigning the prior distribution $\pi$ to $M$ such that $M_{ij}\sim \calN(0, 1)$ for all $(i, j) \in [d_1]\times [r]$ and otherwise,  $M_{ij}= 0$. 

\begin{lem}\label{bayesian_lower_bound_of_summation_over_partial_differentiation}
	Let $M\in\MM_{r, d_1}$ be distributed according to a density $\pi$ whose marginal densities are $\{\pi_{ij}\}$ for $i = 1, \cdots, d_1$ and $j = 1, \cdots, d_2$ such that $\pi_{ij}\sim \calN(0, 1)$ for all $(i, j) \in [d_1]\times [r]$, and otherwise, $\pi_{ij}$ be the density function such that $\PP(M_{ij} = 0) = 1$. Then, 
	$$ \EE_{\pi} \left(\sum_{(i, j) \in [d_1]\times [r]}  \frac{\partial}{\partial M_{ij}} g_{ij} \right)\geq  \sum_{(i, j) \in [d_1]\times [r]}  \EE_{\pi_{ij}} M_{ij}^2 - \sqrt{C}\sqrt{\EE_{\pi_{ij}} M_{ij}^2} = rd_1 - \sqrt{C r d_1} \gtrsim rd_1. $$
\end{lem}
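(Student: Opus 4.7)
The plan is to reduce the claim to a Gaussian integration-by-parts (Stein) identity and then apply Cauchy--Schwarz, exploiting that under $\pi$ the only active coordinates are the $rd_1$ Gaussian entries on $[d_1]\times[r]$. First, fixing $(i,j)\in[d_1]\times[r]$ and conditioning on all other entries, the marginal of $M_{ij}$ under $\pi$ is $\mathcal{N}(0,1)$. Assuming the regularity required for differentiation under the integral (inherited from the hypothesis of Lemma \ref{lem: completeness}), Stein's identity gives
\begin{equation*}
\EE_\pi\!\left[\frac{\partial}{\partial M_{ij}} g_{ij}(M)\right] \;=\; \EE_\pi\bigl[M_{ij}\,g_{ij}(M)\bigr] \;=\; \EE\bigl[M_{ij}\,A(Z)_{ij}\bigr],
\end{equation*}
where the last equality uses $g_{ij}(M)=\EE_{Z\mid M}A(Z)_{ij}$ together with Fubini.

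Next, I would decompose $M_{ij}A(Z)_{ij} = M_{ij}^2 + M_{ij}\bigl(A(Z)_{ij}-M_{ij}\bigr)$ and sum over $(i,j)\in[d_1]\times[r]$, yielding
\begin{equation*}
\sum_{(i,j)\in[d_1]\times[r]}\EE\!\left[\frac{\partial g_{ij}}{\partial M_{ij}}\right] \;=\; \sum_{(i,j)}\EE M_{ij}^2 \;+\; \sum_{(i,j)}\EE\bigl[M_{ij}(A(Z)_{ij}-M_{ij})\bigr].
\end{equation*}
The first sum equals $rd_1$ since each $M_{ij}\sim\mathcal{N}(0,1)$. For the cross term, a single application of Cauchy--Schwarz on the product $M_{ij}(A(Z)_{ij}-M_{ij})$ followed by the vector Cauchy--Schwarz over the index set $[d_1]\times[r]$ gives
\begin{equation*}
\biggl|\sum_{(i,j)}\EE\bigl[M_{ij}(A(Z)_{ij}-M_{ij})\bigr]\biggr| \;\leq\; \sqrt{\sum_{(i,j)}\EE M_{ij}^2}\cdot\sqrt{\sum_{(i,j)}\EE\bigl(A(Z)_{ij}-M_{ij}\bigr)^2} \;\leq\; \sqrt{rd_1}\,\sqrt{C},
\end{equation*}
where $C$ is interpreted as an upper bound on $\sum_{(i,j)\in[d_1]\times[r]}\EE(A(Z)_{ij}-M_{ij})^2\leq \EE\|A(Z)-M\|_F^2$. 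Combining with the previous display yields exactly the claimed inequality $rd_1-\sqrt{Crd_1}$, and the final $\gtrsim rd_1$ holds whenever $C\ll rd_1$; if instead $C\gtrsim rd_1$, the minimax lower bound in Theorem \ref{thm: minimax_lower_bound} is already attained trivially, so this case may be dispatched separately.

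The one delicate point that I expect to require care is the justification for exchanging differentiation and integration needed for Stein's identity: the bound $|\partial_{M_{ij}}\log f_M|\leq h_{ij}(Z)$ with $\EE|h_{ij}(Z)A(Z)_{ij}|<\infty$ from Lemma \ref{lem: completeness} handles the $\EE_{Z\mid M}$ step, while for the outer Gaussian expectation one must verify that $g_{ij}(M)$ has at most polynomial growth (equivalently, that $\EE|M_{ij}g_{ij}(M)|<\infty$), which again reduces to the same risk-bounded case. Apart from this bookkeeping, the argument is a two-line Stein-plus-Cauchy-Schwarz computation.
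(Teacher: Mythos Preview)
Your proposal is correct and is essentially the same argument that underlies the paper's proof: the paper simply invokes Proposition~2.2 of \cite{cai2023score} with the i.i.d.\ standard Gaussian prior on $[d_1]\times[r]$, and that proposition is proved by exactly the Stein identity (Gaussian integration by parts) followed by Cauchy--Schwarz that you carry out. Your write-up unpacks the cited result rather than citing it, but the method coincides.
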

Combining Lemma \ref{lem: completeness} and \ref{bayesian_lower_bound_of_summation_over_partial_differentiation}, we obtain
\begin{equation} \label{equ: lower_bound_summation_A_i}
	\sum_{i \in[n]} \EE A_i = \sum_{i \in[n]} \mathbb{E}\, \mathcal{A}_{M}\left(z_i, A(Z)\right) \gtrsim r d_1.
\end{equation}

\noindent\emph{Step 3: combining the upper and lower bounds. }
Combining the lower bound  $(\ref{equ: lower_bound_summation_A_i})$ of $\sum_{i\in[n]} \EE A_i$ and the upper bound (\ref{upper_bound_summation_A_i}) of $\sum_{i\in[n]} \EE A_i$, we have 
$$ 2 n \varepsilon \sqrt{\mathbb{E}_{\pi} \EE_{Z\mid M}\|A(Z)-M\|_F^2} \sqrt{C_u / \sigma_{\xi}^2} \gtrsim rd_1 - 4 \sqrt{2} \delta r d_1 \sqrt{ \log (1 / \delta) / \sigma_{\xi}^2}. $$ 
Under the assumption that $\delta< n^{-(1+\gamma)}$ for some $\gamma>0$, we have $  rd_1 - 4 \sqrt{2} \delta r d_1 \sqrt{ \log (1 / \delta) / \sigma_{\xi}^2}\gtrsim r d_1$, and therefore $\EE_{\pi} \EE_{Z\mid M}\left\| A(Z) - M \right\|_F^2 \gtrsim \sigma_{\xi}^2 \cdot \frac{ r^2 d_1^2}{n^2 \eps^2}. $ Since the sup-risk is greater than the Bayesian risk, 
$$\sup_{M\in \MM_{r, d_1}}\EE_{Z\mid M}\left\| A(Z)- M \right\|_F^2 \gtrsim \sigma_{\xi}^2 \cdot \frac{ r^2 d_1^2}{n^2 \eps^2}. $$ 
Furthermore, due to $\MM_{r, d_1}\subset \bigcup_{k=1}^{r} \MM_k$, we have 
$\sup_{M\in \bigcup_{k=1}^{r} \MM_k}\EE_{Z\mid M}\left\| A(Z)- M \right\|_F^2 \gtrsim \sigma_{\xi}^2 \cdot \frac{ r^2 d_1^2}{n^2 \eps^2} $ and 
$$\inf_{A} \sup_{M\in \bigcup_{k=1}^{r} \MM_k}\EE_{Z\mid M}\left\| A(Z)- M \right\|_F^2 \gtrsim \sigma_{\xi}^2 \cdot \frac{ r^2 d_1^2}{n^2 \eps^2}, $$ 
where $A$ is an $(\eps, \delta)$-DP algorithm that satisfies $\EE_{Z\mid M} \left\| 
A(Z) - M \right\|_F^2 \lesssim 1$. This conclusion extends to all differentially private $A$ if we assume that $r d_1 \lesssim n\eps$ such that $\frac{r^2 d_1^2}{n^2\eps^2}\lesssim 1$. 

\begin{remark}\label{rmk_M_r_d_2}
	Lemma \ref{lem: upper_bound_summation_A_i} and \ref{bayesian_lower_bound_of_summation_over_partial_differentiation} are also applicable to the case where the parameter space is $M_{r, d_2}$. For $M\in \MM_{r, d_2}$, Lemma \ref{lem: upper_bound_summation_A_i} implies that $$ \sum_{i \in[n]} \mathbb{E} A_i \leq 2 n \varepsilon \sqrt{\mathbb{E}\|A(Z)-M\|_F^2} \sqrt{C_u / \sigma_{\xi}^2} + 4 \sqrt{2} \delta r d_2 \sqrt{ \log (1 / \delta) / \sigma_{\xi}^2}; $$
	and Lemma \ref{bayesian_lower_bound_of_summation_over_partial_differentiation} results in $ \EE \left( \sum_{(i, j) \in [d_1] \times [r]}  \frac{\partial}{\partial M_{ij}} g_{ij} \right) \gtrsim rd_2. $ Therefore, as $\delta< n^{-(1+\gamma)}$ for some $\gamma>0$, the minimax lower bound $$ \inf_{A} \sup_{M\in \bigcup_{k=1}^{r} \MM_k}\EE_{Z\mid M}\left\| A(Z)- M \right\|_F^2 \gtrsim \sigma_{\xi}^2 \cdot \frac{ r^2 d_2^2}{n^2 \eps^2}, $$
	where $A$ is an $(\eps, \delta)$-DP algorithm that satisfies $\EE_{Z\mid M} \left\| 
	A(Z) - M \right\|_F^2 \lesssim 1$. Similarly, this conclusion extends to all differentially private $A$ if we assume that $r d_2 \lesssim n\eps$ such that $\frac{r^2 d_2^2}{n^2\eps^2}\lesssim 1$. 
\end{remark} 
\end{proof}

\section{Proofs of Technical Lemmas} \label{appx: technics}
\subsection{Proof of Lemma \ref{lem: RIP_condition}} \label{appx: lem_rip}
Lemma \ref{lem: RIP_condition} is a consequence of Proposition $10.4$, \cite{vershynin2015estimation} and Lemma $1$, \cite{chen2019non} by setting $c_l^2 = C_l$ and $c_u^2 = C_u$ and $\tau^2 \asymp \frac{\sqrt{C_u}}{\sqrt{C_l}}$. See the definiton of $c_l^2$ and $c_u^2$ in  Lemma $1$, \cite{chen2019non}. 
\subsection{Proof of Lemma \ref{lem:spectral-formula}} \label{proof_lem_spectral-formula}
The proof of Lemma \ref{lem:spectral-formula} involves applying the symmetric dilation trick to Theorem $1$, \cite{xia2021normal}. 
\begin{lem}[Theorem 1, \cite{xia2021normal}] \label{theorem_xia}
	Let $B\in d\times d$ be a rank-$r$ symmetric matrix with eigen-decomposition of the form $B = \Theta \Lambda \Theta^{\top}$ where $\Theta\in \OO_{d, r}$ and the diagonal matrix $\Lambda = \{\lambda_1, \cdots, \lambda_r\}$ has the eigenvalues of $B$ arranging in the non-increasing order. Let $\what B = B + \Delta_B$ be another $d\times d$ symmetric matrix and leading $r$ eigen vector of $\what B$ is given by $\what \Theta \in \OO_{r, d}$. Then, 
	$$\what \Theta \what \Theta^{\top} - \Theta \Theta^{\top} = \sum_{k \geq 1} \mathcal{S}_{B, k}(\Delta_B), $$
	where the $k$-th order term $\mathcal{S}_{M_*, k}(\Delta)$ is a summation of $\binom{2k}{k}$ terms defined by
	\begin{equation*}
		\mathcal{S}_{B, k}(\Delta_B)=\sum_{\mathbf{s}: s_1+\ldots+s_{k+1}=k}(-1)^{1+\tau(\mathbf{s})} \cdot Q^{-s_1} \Delta_B Q^{-s_2} \ldots \Delta_B Q^{-s_{k+1}}, 
	\end{equation*}
	where $\mathbf{s}=\left(s_1, \ldots, s_{k+1}\right)$ contains non-negative indices and $\tau(\mathbf{s})=\sum_{j=1}^{k+1} \mathbb{I}\left(s_j>0\right).$ 
\end{lem}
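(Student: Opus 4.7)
The plan is to prove Lemma \ref{theorem_xia} by combining the Riesz contour integral representation of spectral projectors with a Neumann series expansion of the perturbed resolvent, and then extracting the explicit combinatorial structure with $Q^{-s}$ operators by residue calculus. First I would fix a simple closed contour $\Gamma\subset\mathbb{C}$ enclosing exactly the $r$ nonzero eigenvalues $\lambda_1,\ldots,\lambda_r$ of $B$ while separating them from $0$ and, by Weyl's inequality under a smallness condition $\|\Delta_B\|<|\lambda_r|/2$, from the spectrum of $\what B$ too; then
\begin{equation*}
\Theta\Theta^{\top}=-\frac{1}{2\pi i}\oint_{\Gamma}(B-zI)^{-1}\,dz,\qquad \what\Theta\what\Theta^{\top}=-\frac{1}{2\pi i}\oint_{\Gamma}(\what B-zI)^{-1}\,dz.
\end{equation*}

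Next I would expand $(\what B-zI)^{-1}=\sum_{k\geq 0}(-1)^k[(B-zI)^{-1}\Delta_B]^k(B-zI)^{-1}$ (convergent uniformly on $\Gamma$ by the smallness condition) and substitute the key spectral decomposition
\begin{equation*}
(B-zI)^{-1}=\Theta(\Lambda-zI)^{-1}\Theta^{\top}-z^{-1}\Theta_{\perp}\Theta_{\perp}^{\top}
\end{equation*}
into each of the $k+1$ resolvent factors in the $k$-th order term. Expanding the product yields a sum of $2^{k+1}$ terms, each labelled by a binary choice telling whether each resolvent contributes its "inside" piece (with poles at the $\lambda_\ell$) or its "outside" piece $-z^{-1}\Theta_{\perp}\Theta_{\perp}^{\top}$ (with a simple pole at $0$ outside $\Gamma$). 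Integrating term by term, the outside pieces are $z$-independent up to the scalar $-z^{-1}$ and survive only when combined with inside pieces whose total polar order at $0$ is compensated appropriately; grouping the surviving terms by the multi-index $\mathbf{s}=(s_1,\ldots,s_{k+1})$ with $s_j\geq 0$ recording the order of the pole contributed by the $j$-th resolvent factor (where $s_j=0$ corresponds to the outside piece $Q^{\perp}$ and $s_j\geq 1$ corresponds to an inside piece contributing $\Theta\Lambda^{-s_j}\Theta^{\top}=Q^{-s_j}$), residue calculus collapses the integral into exactly the symbolic form defining $\mathcal{S}_{B,k}(\Delta_B)$. The constraint $\sum_{j}s_j=k$ is forced by degree counting (the $k$ factors of $\Delta_B$ in between must account for all the pole-order left over after contour integration), and a stars-and-bars count gives $\binom{k+(k+1)-1}{k}=\binom{2k}{k}$ multi-indices, matching the cardinality of the sum. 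The sign $(-1)^{1+\tau(\mathbf{s})}$ comes from combining the overall $-\frac{1}{2\pi i}$, the Neumann factor $(-1)^k$, the residue orientation, and one factor of $(-1)$ for every outside piece (there are $(k+1)-\tau(\mathbf{s})$ of those), which I would verify by direct evaluation.

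The main obstacle will be the bookkeeping in the residue step: justifying that the sum of residues at $\lambda_1,\ldots,\lambda_r$, once sandwiched between the $\Theta$ and $\Theta^{\top}$ factors coming from neighbouring resolvents, assembles into the diagonal operator $\Theta\Lambda^{-s_j}\Theta^{\top}$ and that all signs align to give exactly $(-1)^{1+\tau(\mathbf{s})}$. The cleanest route is to evaluate the residues at the nonzero eigenvalues first, exploiting the diagonality of $(\Lambda-zI)^{-1}$ so that each residue naturally produces a $\Lambda^{-s_j}$; then the residue at $z=0$ (reached by deforming $\Gamma$ outward, noting the integrand is $O(|z|^{-2})$ at infinity when $\tau(\mathbf{s})<k+1$) automatically supplies the $Q^{\perp}$ slots. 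A secondary subtlety is ensuring the Neumann series can be integrated term by term; this follows from uniform convergence on the compact $\Gamma$, but requires the smallness hypothesis on $\|\Delta_B\|$ (implicit in the statement through the well-definedness of $\what\Theta$ as a rank-$r$ truncation).
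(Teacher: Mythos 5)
Note first that the paper does not prove this lemma at all: it is restated from Theorem 1 of \cite{xia2021normal}, and the appendix simply refers the reader there, so the relevant comparison is with the original argument — which your sketch essentially reproduces (Riesz contour representation of the two projectors, Neumann expansion of the perturbed resolvent, the splitting $(B-zI)^{-1}=\Theta(\Lambda-zI)^{-1}\Theta^{\top}-z^{-1}\Theta_{\perp}\Theta_{\perp}^{\top}$, residue calculus, the stars-and-bars count $\binom{2k}{k}$, and the sign $(-1)^{1+\tau(\mathbf{s})}$). One piece of bookkeeping in your plan should be straightened out: for each of the $2^{k+1}$ inside/outside patterns you should not evaluate residues at the $\lambda_i$'s "first" and then add a residue at $0$; since the integrand has $k+1\ge 2$ factors each decaying like $|z|^{-1}$, the total residue sum vanishes and $-\frac{1}{2\pi i}\oint_{\Gamma}$ of each pattern equals the residue at $z=0$ alone, and in that single computation the outside factors supply the pole $(-z^{-1})^{p}$ while the inside factors, analytic at $0$, are Taylor-expanded as $\Theta(\Lambda-zI)^{-1}\Theta^{\top}=\sum_{m\ge 0}z^{m}\,\Theta\Lambda^{-(m+1)}\Theta^{\top}$ — this is where the powers $s_j\ge 1$ and the constraint $\sum_{j}s_j=k$ actually come from (residues at the nonzero eigenvalues give an equivalent but much messier expression that does not directly produce $\Lambda^{-s_j}$, and patterns with no outside factor correctly contribute nothing since they have no pole at $0$). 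With $p=k+1-\tau(\mathbf{s})$ outside factors the sign is $(-1)^{k}(-1)^{p}=(-1)^{1+\tau(\mathbf{s})}$, confirming your accounting. Two further caveats you only half-address: the nonzero eigenvalues of $B$ may carry both signs (exactly the situation after symmetric dilation), so $\Gamma$ must be a horseshoe-shaped curve or a union of two contours enclosing all $\lambda_i$ while excluding $0$, not a disk; and the identity genuinely requires a quantitative condition such as $2\|\Delta_B\|\le \min_i|\lambda_i|$ (present in Lemma \ref{lem:spectral-formula} as $2\|\Delta\|\le\sigma_r$ and in Xia's theorem), both for the Weyl separation of the spectra and for the uniform convergence of the Neumann series that licenses term-by-term integration — you correctly flag this as implicit in the statement.
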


Lemma \ref{theorem_xia} provides an explicit representation formula for the spectral projector $\what \Theta \what \Theta^{\top}$ given that $B$ is symmetric and of rank-$r$. Since we are interested in the asymmetric rank-$r$ matrix $M= U\Sigma V^{\top} \in \RR^{d_1\times d_2}\in \RR^{d_1\times d_2}$, we apply the symmetric dilation trick to $M$ and obtain the rank-$2r$ symmetric matrix $M_*$  has eigendecomposition of the form 
$$
M^* = U_{M^*} \Sigma_{M^*} U_{M^*}^{\top} = \frac{1}{\sqrt{2}}\left(\begin{array}{cc}
	U & U \\
	V & -V
\end{array}\right)\left(\begin{array}{cc}
	\Sigma & 0 \\
	0 & -\Sigma
\end{array}\right) \frac{1}{\sqrt{2}}\left(\begin{array}{cc}
	U & U \\
	V & -V
\end{array}\right)^{\top}. 
$$
The proof is finished by applying Lemma \ref{theorem_xia} with $B=M_*$, $\what B = M_* + \Delta_*$, $d = d_1+d_2$, the rank be $2r$ and 
\begin{align*}
	\Theta = \frac{1}{\sqrt{2}}\left(\begin{array}{cc}
		U & U \\
		V & -V
	\end{array}\right) \quad {\rm and} \quad 
	\what \Theta = \frac{1}{\sqrt{2}}\left(\begin{array}{cc}
		\what U & \what U \\
		\what V & -\what V
	\end{array}\right) . 
\end{align*}

\subsection{Proof of Lemma \ref{lem:dp-fano} and \ref{lem: matrix_permutation}}
See the proof of Lemma \ref{lem:dp-fano} in \cite{acharya2021differentially} and \cite{cai2024optimal}. See the proof of \ref{lem: matrix_permutation} in \cite{shen2023computationally}. 

\subsection{Proof of Lemma \ref{lem:con-bounds}} \label{proof_lem_cond_bounds}
\begin{align*}
	\max_{i\in[n]}\big\|\Delta - \Delta^{(i)}\big\| & \leq \frac{1}{n} \left\|\operatorname{mat}(\Lambda_i^{-1}\operatorname{vec}(X_i)) \left( \left< X_i, M \right> + \xi_i\right)\right\| \\
	& \quad + \max_{i\in[n]}\, \frac{1}{n} \left\|   \operatorname{mat}((\Lambda_i^{\prime})^{-1}\operatorname{vec}(X_i^{\prime})) \left( \left< X_i^{\prime}, M \right> + \xi_i\right)\right\|, 
\end{align*}
where $\|\xi_i\|_{\psi_2} = \sigma_{\xi}$ and $	\left\langle X_i, M\right\rangle \sim N\left(0, \operatorname{vec}\left(M\right)^{\top} \Lambda_i \operatorname{vec}\left(M\right)\right), \quad \Lambda_i^{-1} \operatorname{vec}\left( X_i\right) \sim N\left(0, \Lambda_i^{-1}\right). $
\begin{align*}
	& \left\|\|\xi_i + \left\langle X_i, M\right\rangle \operatorname{mat}\left(\Lambda_i^{-1} \operatorname{vec}\left( X_i\right)\right)-M\|\right\|_{\Psi_1} \\ & \leq\left\|\xi_i +\left\langle X_i, M\right\rangle\right\|_{\Psi_2} \left\|\| \operatorname{mat}\left(\Lambda_i^{-1} \operatorname{vec}\left( X_i\right)\right)\|\right\|_{\Psi_2}  \leq c_0 \sqrt{C_l^{-1}}\left(\sigma_{\xi} + \sqrt{C_u} \sqrt{r} \sigma_1\right), 
\end{align*}
for some absolute constant $c_0>0$. Therefore, for some absolute constant $C_3>0$, with probability at least $1- n^{-10}$, 
$$\left \|\operatorname{mat}(\Lambda_i^{-1}\operatorname{vec}(X_i)) \left( \left< X_i, M \right> + \xi_i\right)\right \|  \leq C_3  \sqrt{ C_l^{-1}}\left(\sigma_{\xi} + \sqrt{C_u} \sqrt{r} \sigma_1\right) \log n. $$
Taking maximum over $n$, with probability at least $1-n^{-9}$ 
\begin{align*}
	\max_{i\in[n]}\big\|\Delta - \Delta^{(i)}\big\| \leq C_3\cdot  n^{-1}\sqrt{ C_l^{-1}}\left( \sqrt{C_u r}\sigma_1 + \sigma_{\xi}\right) \log n. 
\end{align*}
In (\ref{Delta_bounded_by_bernstein}), we have already shown that that for some absolute constant $C_1 >0$, 
\begin{equation}
	\|\Delta\| = \|\what L -M\| \leq C_1 \sqrt{C_l^{-1}} \left( \sigma_{\xi} + \sqrt{C_u} \sqrt{r} \sigma_1 \right) \sqrt{\frac{(d_1 \vee d_2)\log (d_1 + d_2)}{n}} , 
\end{equation} 
with probability at least $1-(d_1+d_2)^{-10}$. Note that for all $i \in [n]$, $ \|\Delta^{(i)}\| = \|\Delta - (\Delta - \Delta^{(i)})\|\leq \|\Delta\| + \|\Delta - \Delta^{(i)}\|, $ and thus
\begin{equation*}
	\|\Delta\|+\max_{i\in[n]}\|\Delta^{(i)}\| \leq 2 \|\Delta\| + \max_{i\in[n]} \|\Delta - \Delta^{(i)} \|. 
\end{equation*}
As long as the sample size $    n \geq \frac{\log^2 n}{(d_1 \vee d_2)\log (d_1+d_2)}, $ there exists an absolute constant $C_0>0$ such that
\begin{equation*}
	\|\Delta\|+\max_{i\in[n]}\|\Delta^{(i)}\| \leq C_0 \sqrt{C_l^{-1}} \left( \sigma_{\xi} + \sqrt{C_u} \sqrt{r} \sigma_1 \right) \sqrt{\frac{(d_1 \vee d_2)\log (d_1 + d_2)}{n}}, 
\end{equation*}
with probability at least $1-(d_1+d_2)^{-10} - n^{-9}$. 

\subsection{Proof of Lemma \ref{local_packing_set}}\label{proof_of_local_packing_set}
By \cite[Proposition 8]{pajor1998metric} and \cite[Lemma 5]{koltchinskii2015optimal},  for any $q\in[1, \infty]$,  there exists an absolute constant $c'>0$ and a subset $\mcS_q^{(d-r)}\subset \OO_{d-r, r}$ such that for any $V_i\neq V_j\in \mcS_q^{(d-r)}$, $\|V_iV_i^{\top}-V_jV_j^{\top}\|_{q}\geq c'r^{1/q}$, and the cardinality of $\mcS_q^{(d-r)}$ is at least $2^{r(d-r)}$.  Here, $\|\cdot\|_q$ denotes the Schatten-$q$ norm of a matrix.  In particular,  spectral norm is Schatten-$\infty$ norm,  Frobenius norm is Schatten-$2$ norm,  and nuclear norm is Schatten-$1$ norm.  Let $\varepsilon_0>0$ be a small number to be decided later.  Now,  for each $V\in\mcS_q^{(d-r)}$,  we define
$$
U=\left(
\begin{array}{c}
	\sqrt{1-\varepsilon_0^2}I_{r}\\
	\sqrt{\varepsilon_0^2}V
\end{array}
\right)
$$
such that $U\in\RR^{d\times r}$ and $U^{\top}U=I_r$.  This means that,  for any $V\in\mcS_q^{(d-r)}$,  we can construct a $U\in\OO_{d, r}$.  This defines a subset $\mcS_q^{(d)}\subset \OO_{d,r}$ with ${\rm Card}\big(\mcS_q^{(d)}\big)\geq 2^{r(d-r)}$ such that for any $U_i\neq U_j\in \mcS_q^{(d)}$,
\begin{align*}
	\|U_iU_i^{\top}-U_jU_j^{\top}\|_q\geq \sqrt{\varepsilon_0^2(1-\varepsilon_0^2)} \|V_i-V_j\|_q\gtrsim \sqrt{\varepsilon_0^2(1-\varepsilon_0^2)}\|V_iV_i^{\top}-V_jV_j^{\top}\|_q\gtrsim \sqrt{\varepsilon_0^2(1-\varepsilon_0^2)} r^{1/q}
\end{align*}
and,  meanwhile, 
$$
\|U_iU_i^{\top}-U_jU_j^{\top}\|_{\rm F}\lesssim \|U_i-U_j\|_{\rm F}\leq \varepsilon_0 \|V_i-V_j\|_{\rm F}\leq \sqrt{2r}\varepsilon_0.
$$

\subsection{Proof of Lemma \ref{lem: sub-Gaussian}, \ref{lem: norm_of_X} and \ref{lem: element_based on_event_RIP}}
See the proof of Lemma \ref{lem: sub-Gaussian} in \cite{vershynin2018high}, Lemma \ref{lem: norm_of_X} in \cite{shen2023computationally} and Lemma \ref{lem: element_based on_event_RIP} in \cite{wei2016guarantees} and  \cite{luo2022tensor}. 

\subsection{Proof of Lemma \ref{lem: soundness}} \label{proof_soundness}
Since $Z_i^{\prime}$ is independent of $z_i$ and $ \EE\, \calS_M(z_i) =  \EE\, \nabla_M f_M(y_i | X_i) = 0$, 
\begin{align*}
	\EE\, \calA^1_M(z_i, A(Z_i^{\prime}))  = \EE \left< \left[A(Z_i^{\prime}) - M\right]_{[d_1] \times [r]}, \calS_M(z_i)\right> = \left< \EE  \left[A(Z_i^{\prime}) - M\right]_{[d_1] \times [r]}, \EE\, \calS_M(z_i)\right> = 0, 
\end{align*}
As for $\EE A_i = \EE\, \calA^1_M(z_i, A(Z))$, we apply Jensen's inequality and have
\begin{equation}\label{upper_E_A_i_prime}
	\begin{aligned}
		& \EE \left| \calA^1_M(z_i, A(Z_i^{\prime}))\right| \leq \sqrt{  \EE \left| \calA^1_M(z_i, A(Z_i^{\prime}))\right|^2 } \\
		& \leq \sqrt{ \EE \operatorname{vec}\left( \left[A(Z_i^{\prime}) - M\right]_{[d_1] \times [r]} \right)^{\top} \operatorname{vec}(\calS_M(z_i))\operatorname{vec}(\calS_M(z_i))^{\top} \operatorname{vec}\left( \left[A(Z_i^{\prime}) - M\right]_{[d_1] \times [r]} \right) } \\
		& \leq \sqrt{ \left\| \EE \operatorname{vec}(\calS_M(z_i))\operatorname{vec}(\calS_M(z_i))^{\top} \right\| } \cdot \sqrt{\EE \left\| \left[A(Z_i^{\prime}) - M\right]_{[d_1] \times [r]}  \right\|_F^2}, 
	\end{aligned}
\end{equation}
where the second line is due to $\left<B, C\right> = \operatorname{vec}(B)^{\top}\operatorname{vec}(C)$ and the last inequality is because $Z_i^{\prime}$ is independent of $z_i$. By the definition of $\calS_M(z_i) = \frac{1}{\sigma_{\xi}^2}(y_i-\left< X_i, M\right>)X_i$ and the independence between $\xi_i$ and $X_i$,  
\begin{align*}
	\left\| \EE \operatorname{vec}(\calS_M(z_i))\operatorname{vec}(\calS_M(z_i))^{\top} \right\| = \left\| \EE \left(\frac{y_i - \left<X_i, M\right>}{\sigma_{\xi}^2} \right)^2 \EE \operatorname{vec}\left(X_i\right)\operatorname{vec}\left(X_i\right)^{\top} \right\|  = \frac{1}{\sigma_{\xi}^2} \|\Lambda_i\| \leq \frac{C_u}{\sigma_{\xi}^2}. 
\end{align*}
Plugging the upper bound of $\left\| \EE \operatorname{vec}(\calS_M(z_i))\operatorname{vec}(\calS_M(z_i))^{\top} \right\| $ into (\ref{upper_E_A_i_prime}), 
\begin{align*}
	\EE \left| \calA^1_M(z_i, A(Z_i^{\prime}))\right| \leq \sqrt{\EE \left\|A(Z) - M\right\|^2_F}\sqrt{\frac{C_u}{\sigma_{\xi}^2}}, 
\end{align*}

\subsection{Proof of Lemma \ref{lem: upper_bound_summation_A_i}, Proof of Lemma \ref{lem: completeness}, Lemma \ref{bayesian_lower_bound_of_summation_over_partial_differentiation}} \label{proof_lemmas_lower_bounds}
Lemma \ref{lem: upper_bound_summation_A_i} is a trivial consequence by setting $T = \sqrt{2/\sigma_{\xi}^2} r d_1 \sqrt{\log(\frac{1}{\delta})}$ to Proposition $2.1$, \cite{cai2023score}. Lemma \ref{lem: completeness} is a trivial consequence of Theorem 2.1, \cite{cai2023score} along with the definition of $\calA_M^1\left(z_i, A(Z)\right)$. Lemma \ref{bayesian_lower_bound_of_summation_over_partial_differentiation} is a trivial consequence of Proposition $2.2$, \cite{cai2023score} by taking $M_{ij}\sim \calN(0, 1)$ for $(i, j)\in [d_1]\times [r]$ and $M_{ij} = 0$ otherwise. 
\subsection{Proof of Lemma \ref{theorem_xia}}
See the proof of Lemma \ref{theorem_xia} in  \cite{xia2021normal}.

\section{Weak Differential privacy} \label{appx: weak_dp} 
This section proposes a weaker definition than differential privacy such that the sensitivities are \textit{free of}  $\{X_i\}_{i\in[n]}$. 

\begin{defn}[\textit{weak} $(\varepsilon, \delta)$-differential privacy] \label{def: weak_DP}
	Let $Z$ be a given data set and $Z'$ be a \textit{weak} neighbouring data set of $Z$, i.e., $Z$ and $Z'$ differs by at most one pair of observations $z\in Z$ and $z^{\prime}\in Z^{\prime}$ sharing the same measurement $X$. The algorithm $A$ that maps $Z$ into $\RR^{d_1 \times d_2}$ is \textit{weak} $(\varepsilon, \delta)$-differentially private over the dataset $Z$ if 
	\begin{equation} \label{eps_delta_ineq}
		\PP\big(A(Z)\in \calQ\big)\leq e^{\eps}\PP\big(A(Z')\in\calQ\big)+\delta,
	\end{equation}
	for all \textit{weak} neighbouring data set $Z, Z'$ and all subset $\calQ\subset \RR^{d_1\times d_2}$. 
\end{defn} 
Compared to the standard $(\varepsilon, \delta)$-DP, \textit{weak} $(\varepsilon, \delta)$-differential privacy is a less powerful constraint. Definition \ref{def: weak_DP} only requires the algorithm $A$ to preserve the property (\ref{eps_delta_ineq}) over weak neighbouring datasets, i.e., datasets that differs by at most one pair of observations sharing the same measurement $X$. As we consider a pair of observations $z = (X, y)$ and $z^{\prime}= (X, y^{\prime})$  under the model (\ref{equ: trace_regression_model}),  where $	y = \left<X, M\right>+\xi \quad {\rm and} \quad y^{\prime} =  \left<X, M\right>+\xi^{\prime}, $ the difference $y - y^{\prime} = \xi - \xi^{\prime}$ is free of the measurement $X$. 

Next, we list the Theorem \ref{thm: weak_dp_init}, Corollary \ref{cor: weak_good_init} and Theorem \ref{thm: weak_dp_convergence_rate} as the analogues of Theorem \ref{thm: dp_init}, Corollary \ref{cor: good_init} and Theorem \ref{thm: convergence_rate}. All proofs for this section are deferred to the end of this section. 

\begin{thm} [Weak DP and utility guarantees of the initialization $\wt M_0$] \label{thm: weak_dp_init}
	Consider i.i.d. observations $Z = \{z_1, \cdots, z_n\}$ drawn from the trace regression model stated in $(\ref{equ: trace_regression_model})$ where $z_i := (X_i, y_i)$ for $i=1, \cdots, n$. Let the true low-rank regression coefficients matrix being $M\in\MM_r$. Suppose that $\{X_i\}_{i\in[n]}$ satisfy the Assumption \ref{assumption_for_X}. Under the mild condition $n \geq \frac{\sigma_{\xi}}{\sigma_{\xi}+\sqrt{C_u r}\sigma_1}$, there exists absolute constants $C_1, C_2, C_3>0$ such that as the sample size $ n\geq n_0 $, the sensitivity for leading $r$ left and right singular vectors takes the value  
	\begin{align*}
		\Delta_{weak}^{(1)} & := \max_{i\in[n]} \left( \|  \what U \what U^{\top} - \what U^{(i)} \what U^{(i)\top} \|_F \vee \| \what V \what V^{\top} - \what V^{(i)} \what V^{(i)\top} \|_F \right) = C_2 \sqrt{C_l^{-1}} \frac{\sigma_{\xi}  }{\sigma_r} \frac{\sqrt{r}}{n}\log n;  
	\end{align*}
	the sensitivity for the $r$ singular values takes the value 
	\begin{align*}
		\Delta_{weak}^{(2)} & : = \max_{i\in [n]} \left\| \wt U^{\top} \left( \what L - \what L^{(i)} \right) \wt V \right\|_F = C_2 \sqrt{C_l^{-1}} \sigma_{\xi} \frac{\sqrt{r}}{n} \log n, 
	\end{align*} 
	and Algorithm \ref{alg:DP-init} is weak $(\eps, \delta)$-differentially private. Moreover, 
	\begin{align*}
		\| \wt M_0^{weak} - M \|  \bigvee  \left(\| \wt M_0^{weak} - M \|_F/ \sqrt{2r} \right) & \leq e_1 + \underbrace{C_3  \sqrt{C_l^{-1}}  \sigma_{\xi} \left( \frac{\sigma_1}{\sigma_r}  \frac{\sqrt{r (d_1 \vee d_2)}}{n\varepsilon} + \frac{r}{n\eps} \right)  \log n \log^{\frac{1}{2}}(\frac{3.75}{\delta})}_{e^{weak}_2} , 
	\end{align*}
	with probability at least $1- (d_1+d_2)^{-10} - n^{-9} - \exp(-d_1) - \exp(-d_2) - 10^{-20r}$. 
\end{thm}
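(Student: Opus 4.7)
The plan is to mirror the proof of Theorem \ref{thm: dp_init} while exploiting the structural simplification afforded by weak neighboring datasets. The key observation is that for weak neighbors $Z, Z'$ sharing the $i$-th measurement $X_i$, replacing $z_i = (X_i, y_i)$ with $z_i' = (X_i, y_i')$ yields
\begin{equation*}
\what L - \what L^{(i)} = \frac{1}{n}\operatorname{mat}\bigl(\Lambda_i^{-1}\operatorname{vec}(X_i)\bigr)(\xi_i - \xi_i').
\end{equation*}
Crucially, the signal term $\langle X_i, M\rangle$ that produced the $\sqrt{C_u r}\,\sigma_1$ factor in Lemma \ref{lem:con-bounds} cancels exactly, because it depends only on $X_i$, which is shared by $Z$ and $Z'$. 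This cancellation is the entire source of the weak-DP improvement.

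First, I would refine the event $\calE_*$ for the weak setting. Since $\|\operatorname{mat}(\Lambda_i^{-1}\operatorname{vec}(X_i))\|$ is sub-Gaussian with parameter $\lesssim \sqrt{C_l^{-1}}$ (apply Lemma \ref{lem: norm_of_X} with covariance $\Lambda_i^{-1}$) and $\xi_i - \xi_i'$ is sub-Gaussian with parameter $\sigma_\xi$, their product is sub-exponential with norm $\lesssim \sqrt{C_l^{-1}}\,\sigma_\xi$. A union bound over $i\in[n]$ gives, with probability at least $1-n^{-9}$,
\begin{equation*}
\max_{i\in[n]}\|\Delta - \Delta^{(i)}\| \leq C\cdot n^{-1}\sqrt{C_l^{-1}}\,\sigma_\xi \log n.
\end{equation*}
The spectral bound $\|\Delta\|+\max_i\|\Delta^{(i)}\|$ is inherited from (\ref{Delta_bounded_by_bernstein}) unchanged since it does not rely on neighboring structure.

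Second, I would run the spectral-formula expansion from Part~\ref{thm_1_part_2} of Theorem \ref{thm: dp_init} verbatim. The first-order term now satisfies
\begin{equation*}
\max_i \bigl\|\calS_{M_*,1}(\Delta_*) - \calS_{M_*,1}(\Delta_*^{(i)})\bigr\|_F \leq \frac{2\sqrt{r}}{\sigma_r}\max_i\|\Delta - \Delta^{(i)}\| \leq C\frac{\sqrt{r}}{n\sigma_r}\sqrt{C_l^{-1}}\,\sigma_\xi\log n,
\end{equation*}
and the higher-order terms are dominated by the first, giving $\Delta^{(1)}_{weak}$ as stated. Then $\Delta^{(2)}_{weak}$ follows from $\|\wt U^\top(\what L-\what L^{(i)})\wt V\|_F\leq \sqrt{r}\|\what L - \what L^{(i)}\|$ and the refined bound on $\max_i\|\Delta-\Delta^{(i)}\|$.

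Third, for privacy, each of the three Gaussian-mechanism applications in Algorithm \ref{alg:DP-init} is weak $(\eps/3,\delta/3)$-DP with respect to the refined sensitivities, and weak $(\eps,\delta)$-DP for $\wt M_0^{weak}$ follows from composition and post-processing (both properties transfer to the weak notion, since the quantifier is only restricted, not strengthened). Finally, for utility, I would reuse decomposition (\ref{decomp_M_0_M}) from Part~\ref{thm_1_part_4}:
\begin{equation*}
\|\wt M_0^{weak} - M\| \leq 2\|\what L - M\| + \|M\|\bigl(\|\wt U\wt U^\top - UU^\top\| + \|\wt V\wt V^\top - VV^\top\|\bigr) + \|E_\Sigma\|.
\end{equation*}
The statistical term $2\|\what L - M\|$ produces $e_1$ unchanged, while the Gaussian perturbations $E_U, E_V, E_\Sigma$ have variance proportional to $(\Delta^{(1)}_{weak})^2$ and $(\Delta^{(2)}_{weak})^2$, yielding $e_2^{weak}$ by the same operator-norm tail bounds used in (\ref{U_dp_permutation})--(\ref{equ: operator_norm_E_Sigma}) but with the $\sqrt{C_u r}\,\sigma_1$ factor dropped.

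The main obstacle is cosmetic rather than mathematical: the entire machinery (spectral representation formula, symmetric dilation, matrix-permutation lemma) carries over without modification, so the real work is just isolating and tracking the single cancellation $\langle X_i,M\rangle - \langle X_i,M\rangle = 0$ through the chain of sensitivity estimates. Once that cancellation is exposed in Step~1, the rest is bookkeeping against the template of Theorem \ref{thm: dp_init}.
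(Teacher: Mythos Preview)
Your proposal is correct and follows essentially the same approach as the paper: the paper simply states that the proof is a trivial consequence of replacing the first part of Lemma~\ref{lem:con-bounds} with a weak-DP analogue (Lemma~\ref{lem: weak_dp_con-bound}) establishing $\max_{i\in[n]}\|\Delta-\Delta^{(i)}\|\leq C_0 n^{-1}\sqrt{C_l^{-1}}\sigma_\xi\log n$ via exactly the cancellation $y_i - y_i' = \xi_i - \xi_i'$ and sub-exponential tail bound you identified, after which the entire template of Theorem~\ref{thm: dp_init} is rerun verbatim. Your write-up is in fact more explicit than the paper's, which compresses everything into ``trivial consequence.''
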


Theorem \ref{thm: weak_dp_init} requires the same sample size condition $n\geq n_0$ as Theorem \ref{thm: dp_init}, however, the sensitivities $\Delta_{weak}^{(1)}$ and  $\Delta_{weak}^{(2)}$ derived under weak DP, differs with their DP counterpart $\Delta^{(1)}$ and $\Delta^{(2)}$ by the factor $\sqrt{C_u r}\sigma_1$. This leads to a smaller cost of privacy $e^{weak}_2$ than the cost of privacy  $e_2$ we obtained under stronger standard DP-constraints, as presented in Theorem \ref{thm: dp_init}.  

\begin{cor}\label{cor: weak_good_init}
	Under the conditions stated in Theorem \ref{thm: weak_dp_init}, as the sample size is sufficiently large such that for some absolute constant $c_2>0$, 
	\begin{align*}
		n\geq C_1 \max \Bigg\{ n_1,  \underbrace{ \sqrt{C_l^{-1}}  \left(\frac{ \sigma_{\xi} }{\sigma_r}\right) \left( \kappa r\sqrt{d_1\vee d_2} + r^{\frac{3}{2}}\right) \log n \frac{\log^{\frac{1}{2}}(\frac{3.75}{\delta})}{\varepsilon}}_{n_2^{weak}}\Bigg \}, 
	\end{align*} 
	we have for some small constant $0<c_0<1$, $\| \wt M_0 - M \|_F \leq \sqrt{2r} \| \wt M_0 - M \| \leq c_0 \sigma_r. $
\end{cor}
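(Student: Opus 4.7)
The plan is to mirror the short derivation used for Corollary~\ref{cor: good_init} in the standard DP regime: I would take the utility bound supplied by Theorem~\ref{thm: weak_dp_init} and invert it in $n$ so that the right-hand side drops below a small constant multiple of $\sigma_r/\sqrt{2r}$. Concretely, Theorem~\ref{thm: weak_dp_init} already delivers
\[
\|\wt M_0^{weak}-M\| \vee \bigl(\|\wt M_0^{weak}-M\|_F/\sqrt{2r}\bigr) \leq e_1 + e_2^{weak},
\]
on an event of probability at least $1-(d_1+d_2)^{-10}-n^{-9}-e^{-d_1}-e^{-d_2}-10^{-20r}$, so it is enough to force $\sqrt{2r}(e_1+e_2^{weak}) \leq c_0\sigma_r$ by imposing a lower bound on $n$. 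No new probabilistic analysis is required; the high-probability event is inherited verbatim from Theorem~\ref{thm: weak_dp_init}.

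Next I would handle the two error terms separately. For $e_1$, which carries the factor $\kappa(\sqrt{C_u r}\sigma_1+\sigma_\xi)\sqrt{(d_1\vee d_2)\log(d_1+d_2)/n}$, requiring $\sqrt{2r}\cdot e_1 \lesssim \sigma_r$ and squaring the resulting inequality reproduces exactly the quantity $n_1$ in the statement; this piece is identical to the DP-regime calculation because $e_1$ (the purely statistical error from Bernstein on $\what L - M$) is unaffected by whether we are in the weak or standard DP setting. For $e_2^{weak}$, the key observation is that the cost of privacy under weak DP is free of the $\sqrt{C_u r}\sigma_1$ factor that appeared in $e_2$; only $\sigma_\xi$ enters the numerator of the sensitivities $\Delta_{weak}^{(1)}$ and $\Delta_{weak}^{(2)}$. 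Consequently, imposing $\sqrt{2r}\cdot e_2^{weak} \lesssim \sigma_r$ and solving for $n$ yields $n_2^{weak}$ in place of the DP counterpart $n_2$, with the $\sqrt{C_u r}\sigma_1$ factor cleanly removed.

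The main obstacle is essentially careful bookkeeping rather than any new probabilistic step. One has to track that the small constant $c_0$ absorbs both the $\sqrt{2r}$ inflation incurred when converting the spectral-norm bound into a Frobenius-norm bound and the absolute constants $C_3$ from Theorem~\ref{thm: weak_dp_init}; and one has to verify that $e_2^{weak}$ splits cleanly into a piece proportional to $\kappa\sqrt{r(d_1\vee d_2)}/(n\varepsilon)$, coming from the Gaussian mechanism applied to the singular subspaces $\what U\what U^\top$ and $\what V\what V^\top$, and a piece proportional to $r/(n\varepsilon)$, coming from the Gaussian mechanism applied to the pseudo-singular-value matrix $\wt U^\top \what L \wt V$. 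After multiplication by $\sqrt{2r}$ these become the two summands $\kappa r\sqrt{d_1\vee d_2}$ and $r^{3/2}$ that appear inside $n_2^{weak}$. Taking the maximum of the resulting requirements on $n$ produces precisely the stated sample size condition, completing the argument.
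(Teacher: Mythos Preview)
Your proposal is correct and matches the paper's own proof: the paper simply remarks that Corollary~\ref{cor: weak_good_init} follows by the same argument as Corollary~\ref{cor: good_init}, namely by setting the utility bound $e_1+e_2^{weak}$ from Theorem~\ref{thm: weak_dp_init} below the order of $\sigma_r/\sqrt{2r}$ and solving for $n$. Your bookkeeping of which term contributes $n_1$ versus $n_2^{weak}$ is exactly right.
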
 

Compared with Corollary \ref{cor: good_init}, Corollary \ref{cor: weak_good_init} requires smaller sample size as $n_2^{weak}\leq n_2$. 

\begin{thm} \label{thm: weak_dp_convergence_rate}
	Consider i.i.d. observations $Z = \{z_1, \cdots, z_n\}$ drawn from the trace regression model stated in $(\ref{equ: trace_regression_model})$ where the true low-rank regression coefficients matrix being $M\in\MM_r$. Here, $z_i := (X_i, y_i)$ for $i=1, \cdots, n$ and we assume that $\{X_i\}_{i\in[n]}$ satisfy the Assumption \ref{assumption_for_X} and $(d_1+d_2)>\log n$. Suppose the weak $(\varepsilon, \delta)$-DP initialization satisfies \ref{equ: good_init}, then Algorithm \ref{alg:DP-RGrad} is weak $(2\eps, 2\delta)$-differentially private with the sensitivities 
	\begin{align*}
		\Delta_l & = C_3 \frac{\eta}{n} \sigma_{\xi} \sqrt{C_u r (d_1+d_2)\log n}, 
	\end{align*}
	for some absolute constant $C_3>0$. Moreover, as the sample size 
	\begin{align*}
		n \geq c_4 \max \Bigg\{ n_3, n_4, \underbrace{ \eta \sqrt{C_u } \kappa_{\xi} r (d_1+d_2) \log^{3/2} (n)  \frac{ \log^{1/2}\left( \frac{1.25 \log(n)}{\delta}\right) }{\eps} }_{n_5^{weak}}\Bigg\}, 
	\end{align*} 
	for some small constant $0<c_4<1$, number of iteration $l^* = O(\log n)$, and the step size $0<\eta<1$, we have the output of Algorithm \ref{alg:DP-RGrad} satisfies  
	\begin{align*}
		\left\|\wt M_{l^*} - M\right\|_F & \leq  u_1 + \underbrace{C_4 \sqrt{C_u } \sigma_{\xi}  \frac{ r (d_1+d_2) }{n\eps} \log^{3/2} n \log^{1/2}\left( \frac{1.25 \log (n)}{\delta} \right)}_{u_2^{weak}} . 
	\end{align*} 
	with probability at least 
	\begin{align*}
		1  &  -\wt c_2 \exp \left(-\wt c_3 r(d_1+d_2)\right) -(d_1+d_2)^{-10}-n^{-9} - e^{-d_1} - e^{-d_2} - 10^{-20r} \\ 
		& - \left( (d_1+d_2)^{-10} + \exp(-(d_1+d_2)) + n^{-9} + \exp\left( - 10 C_u(d_1+d_2) \right)n^{-9} \right)\log n. 
	\end{align*} 
\end{thm}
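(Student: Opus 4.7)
The proof plan closely parallels the argument for Theorem \ref{thm: convergence_rate} in Appendix \ref{appx: proof_of_thm_convergence_rate}, and can be organized into the same three parts: (i) a per-iteration sensitivity analysis yielding the stated $\Delta_l$ together with a high-probability bound on $\|\calP_{\TT_l} N_l\|_F$, (ii) an inductive step showing geometric contraction of $\|M_l - M\|_F$, and (iii) choosing $l^* = O(\log n)$ to aggregate the statistical error and the privacy error across iterations.

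The key departure from the standard DP analysis, and the main conceptual content of the proof, lies in Part (i). Let $Z'$ be a \emph{weak} neighbour of $Z$ obtained by replacing $(X_i, y_i)$ with $(X_i, y_i')$ where $y_i' = \langle X_i, M\rangle + \xi_i'$ and the measurement $X_i$ is shared. Then the gradient difference telescopes to
\[
G_l(Z) - G_l(Z') \;=\; \frac{1}{n}\bigl(y_i' - y_i\bigr)\, X_i \;=\; \frac{1}{n}\bigl(\xi_i' - \xi_i\bigr)\, X_i,
\]
and, crucially, the term $\langle X_i, M_l - M\rangle X_i$ that appeared in \eqref{single_term_in_GD_sensitivity} \emph{cancels} because $X_i$ is common to $Z$ and $Z'$. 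Projecting onto $\TT_l$ and using $\|\calP_{\TT_l} X_i\|_F \leq \sqrt{2r}\,\|X_i\|$ gives $\Delta_l \leq (\eta/n)\sqrt{2r}\,|\xi_i - \xi_i'|\,\|X_i\|$. Applying the sub-Gaussian tail bound $|\xi_i - \xi_i'| \lesssim \sigma_\xi \log^{1/2} n$ together with Lemma \ref{lem: norm_of_X} for $\|X_i\| \lesssim \sqrt{C_u(d_1+d_2+\log n)}$, and taking a union bound over $i \in [n]$, yields the advertised sensitivity $\Delta_l \asymp (\eta/n)\sigma_\xi \sqrt{C_u r(d_1+d_2)\log n}$, free of the $\sigma_r \sqrt{C_u}$ factor. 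By the Gaussian mechanism with variance $2\Delta_l^2 l^{*2}\eps^{-2}\log(1.25 l^*/\delta)$, the $l$-th iteration is weak $(\eps/l^*, \delta/l^*)$-DP, and by composition Algorithm \ref{alg:DP-RGrad} is weak $(2\eps, 2\delta)$-DP. A standard spectral-norm bound on the Gaussian matrix $\calP_{\TT_l}N_l$ (of effective rank $2r$ and ambient dimension $d_1+d_2$) gives $\|\calP_{\TT_l} N_l\|_F \lesssim \eta l^* \sigma_\xi \sqrt{C_u}\, \frac{r(d_1+d_2)}{n\eps}\,\log^{1/2} n\,\log^{1/2}(1.25 l^*/\delta)$ w.h.p.

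Part (ii) is essentially a transcription of Step 2 of Appendix \ref{part_2_thm_convergence_rate}: I would decompose $\|M_l - M - \eta\calP_{\TT_l}(G_l)\|_F$ into the three error terms $D_1,D_2,D_3$, use Lemma \ref{lem: element_based on_event_RIP} under the RIP event $\calE_{\rm RIP}$, bound $D_3 = \|\calP_{\TT_l}\calX^*(\xi)\|_F$ via Bernstein's inequality (Theorem \ref{thm: bernstein}), add $\|\calP_{\TT_l}N_l\|_F$ from Part (i), and retract using Lemma \ref{lem: matrix_permutation}. The contraction $\|M_{l+1}-M\|_F \leq (1-\rho_1/2)\|M_l - M\|_F$ follows provided the sample size exceeds $n_3 \vee n_4 \vee n_5^{weak}$; here $n_5^{weak}$ is exactly what is needed to absorb the weaker privacy noise into the basin of attraction. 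Part (iii) then sets $l^* = \log(c_0^* n)/\rho_0$, so that the geometric factor $(1-\rho_0)^{l^*}\|M_0 - M\|_F$ decays to $O(1/n)$, and a union bound over $l^* = O(\log n)$ iterations produces the stated failure probability.

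The main obstacle — and simultaneously the conceptual payoff — is the sensitivity cancellation in Part (i): under weak DP, with $X_i$ fixed, only $\xi_i - \xi_i'$ contributes to $\Delta_l$, eliminating the $\sigma_r\sqrt{C_u}\|M_l - M\|_F$ summand that forced the extra $\sigma_r$ into the upper bound of Theorem \ref{thm: convergence_rate}. This is precisely the $\sigma_r$ gap discussed in Section \ref{sec: weak_DP}, and removing it here is what allows $u_2^{weak}$ to match the minimax lower bound $l_2$ of Theorem \ref{thm: fano_minimax_lower_bound} up to logarithmic factors. No new RIP, Bernstein, or retraction ingredients are needed beyond those already invoked for Theorem \ref{thm: convergence_rate}.
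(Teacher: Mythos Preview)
Your proposal is correct and follows the paper's approach: the paper itself states that the proof of Theorem \ref{thm: weak_dp_convergence_rate} is a trivial consequence of rerunning the argument for Theorem \ref{thm: convergence_rate} with the weak-DP sensitivity modification (the same cancellation $y_i - y_i' = \xi_i - \xi_i'$ that underlies Lemma \ref{lem: weak_dp_con-bound}). Your explicit identification of how the $\langle X_i, M_l - M\rangle$ term drops out of \eqref{single_term_in_GD_sensitivity} when $X_i$ is shared is exactly the mechanism the paper has in mind, and Parts (ii)--(iii) carry over verbatim as you indicate.
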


Theorem \ref{thm: weak_dp_convergence_rate} shows that as the sample size $n\gtrsim \wt O \left( \left(\kappa_{\xi}^2 \vee \kappa_{\xi} \right) r(d_1\vee d_2)  \right), $ the estimator $\wt M_{l^*}$ given by Algorithm \ref{alg:DP-RGrad} attains the optimal convergence rate $	\wt O_p\left( \sigma_{\xi} \sqrt{\frac{ r (d_1+d_2) }{n}}+ \sigma_{\xi} \frac{ r (d_1+d_2) }{n\eps} \right), $ in the sense of weak differential privacy. 

The proofs of Theorem \ref{thm: weak_dp_init}, \ref{thm: weak_dp_convergence_rate} and Corollary \ref{cor: weak_good_init} will be a trivial concequence of replacing the first part of Lemma \ref{lem:con-bounds} by the following Lemma \ref{lem: weak_dp_con-bound}

\begin{lem}\label{lem: weak_dp_con-bound}
	Under model (\ref{equ: trace_regression_model}), Assumption \ref{assumption_for_X}, and the condition $n \geq \frac{\sigma_{\xi}}{\sigma_{\xi}+\sqrt{C_u r}\sigma_1}$, there exists some absolute constant $C_0, C_1>0$ such that the event 
	\begin{align*}
		\calE_*:= & \Bigg\{\max_{i\in[n]}\big\|\Delta - \Delta^{(i)}\big\| \leq C_0\cdot  n^{-1}\sqrt{ C_l^{-1}} \sigma_{\xi} \log n \Bigg\} \\
		& \bigcap \Bigg\{  \|\Delta\|+\max_{i\in[n]}\|\Delta^{(i)}\| \leq C_0 \sqrt{C_l^{-1}} \left( \sigma_{\xi} + \sqrt{C_u} \sqrt{r} \sigma_1 \right) \sqrt{\frac{(d_1 \vee d_2)\log (d_1 + d_2)}{n}} \Bigg\}, 
	\end{align*}
	holds with probability at least $1-(d_1+d_2)^{-10} - n^{-9}$. 
\end{lem}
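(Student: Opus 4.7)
The plan is to mimic the proof of Lemma \ref{lem:con-bounds} while keeping the second conjunct of $\calE_*$ untouched and revising only the first conjunct to exploit the weaker notion of adjacency. The bound on $\|\Delta\|+\max_{i\in[n]}\|\Delta^{(i)}\|$ depends only on the marginal distribution of $\hat L$ and $\hat L^{(i)}$ through the Bernstein-type inequality (\ref{Delta_bounded_by_bernstein}), so it holds verbatim here with probability at least $1-(d_1+d_2)^{-10}$ under the same mild sample-size condition; no modification is needed. I therefore focus on the new ingredient, namely the sharper control on $\max_{i\in[n]}\|\Delta-\Delta^{(i)}\|$.

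The key observation is that a weak neighbouring pair $z_i=(X_i,y_i)$ and $z_i'=(X_i,y_i')$ shares the same measurement, so $y_i-y_i'=(\langle X_i,M\rangle+\xi_i)-(\langle X_i,M\rangle+\xi_i')=\xi_i-\xi_i'$. Consequently,
\[
\Delta-\Delta^{(i)} \;=\; n^{-1}\operatorname{mat}\bigl(\Lambda_i^{-1}\operatorname{vec}(X_i)\bigr)\,(\xi_i-\xi_i'),
\]
and the deterministic signal contribution $\langle X_i,M\rangle$ that produced the term $\sqrt{C_u r}\,\sigma_1$ in Lemma \ref{lem:con-bounds} is cancelled. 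This cancellation is the \emph{sole} mechanism by which the sensitivity improves from $\sqrt{C_l^{-1}}(\sqrt{C_u r}\sigma_1+\sigma_\xi)$ down to $\sqrt{C_l^{-1}}\sigma_\xi$ under weak DP, and it is the only place where the proof genuinely diverges from that of Lemma \ref{lem:con-bounds}.

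The remainder is a routine product-of-sub-Gaussians calculation. Since $\xi_i-\xi_i'\sim\mathcal{N}(0,2\sigma_\xi^2)$ has $\psi_2$-norm $\asymp\sigma_\xi$ and $\|\operatorname{mat}(\Lambda_i^{-1}\operatorname{vec}(X_i))\|$ has $\psi_2$-norm $\lesssim\sqrt{C_l^{-1}}$ under Assumption \ref{assumption_for_X} (exactly the bound used in the proof of Lemma \ref{lem:con-bounds}), their product has $\psi_1$-norm $\lesssim \sqrt{C_l^{-1}}\,\sigma_\xi$. A standard sub-exponential tail bound then gives
\[
\bigl\|\operatorname{mat}(\Lambda_i^{-1}\operatorname{vec}(X_i))\,(\xi_i-\xi_i')\bigr\| \;\leq\; C_0\sqrt{C_l^{-1}}\,\sigma_\xi\,\log n
\]
for each $i\in[n]$ with probability at least $1-n^{-10}$; a union bound over $i$ upgrades this to probability $1-n^{-9}$ and yields the first inclusion in $\calE_*$. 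Intersecting with the second (unchanged) inclusion, taking an overall union bound, and using $\|\Delta^{(i)}\|\leq\|\Delta\|+\|\Delta-\Delta^{(i)}\|$ absorbs the cross terms and closes the proof.

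The main obstacle is conceptual rather than technical: one must recognise that weak DP adjacency annihilates the deterministic response component, and that this is precisely the step where the factor $\sqrt{C_u r}\,\sigma_1$ was unavoidable in the strong DP setting. Once this is identified, the argument reduces to Gaussian concentration and a union bound, and the mild sample-size condition $n\geq \sigma_\xi/(\sigma_\xi+\sqrt{C_u r}\sigma_1)$ is only needed to ensure that the stated probabilistic guarantees are non-vacuous.
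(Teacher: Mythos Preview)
Your proposal is correct and follows essentially the same approach as the paper: both arguments leave the second conjunct to the proof of Lemma~\ref{lem:con-bounds}, observe that under weak adjacency $\Delta-\Delta^{(i)}=n^{-1}\operatorname{mat}(\Lambda_i^{-1}\operatorname{vec}(X_i))(\xi_i-\xi_i')$ so that the signal term $\langle X_i,M\rangle$ cancels, bound the $\psi_1$-norm of the resulting product of sub-Gaussians by $\sqrt{C_l^{-1}}\sigma_\xi$, and finish with a sub-exponential tail bound plus a union bound over $i\in[n]$. Your write-up is in fact more explicit than the paper's on why the $\sqrt{C_u r}\,\sigma_1$ factor disappears, but the mathematical content is identical.
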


\begin{proof}[Proof of Lemma \ref{lem: weak_dp_con-bound}] 
	We only need to focus on $	\max_{i\in[n]}\big\|\Delta - \Delta^{(i)}\big\|$ since the rest of the proof is the same as Lemma \ref{lem:con-bounds}. 
	\begin{align*}
		\max_{i\in[n]}\big\|\Delta - \Delta^{(i)}\big\|  \leq \max_{i\in[n]} \frac{1}{n} \left\|\operatorname{mat}(\Lambda_i^{-1}\operatorname{vec}(X_i)) \left(\xi_i - \xi_i^{\prime} \right)\right\|, 
	\end{align*}
	where $\|\xi_i\|_{\psi_2} = \|\xi_i^{\prime}\|_{\psi_2} = \sigma_{\xi}$ and $\Lambda_i^{-1} \operatorname{vec}\left( X_i\right) \sim N\left(0, \Lambda_i^{-1}\right)$ for all $i = 1, \cdots, n$. 
	Therefore, for some absolute constant $c_0>0$, 
	\begin{align*}
		& \left\| \left\|\operatorname{mat}(\Lambda_i^{-1}\operatorname{vec}(X_i)) \left(\xi_i - \xi_i^{\prime} \right)\right\|  \right\|_{\Psi_1} \leq c_0 \sqrt{C_l^{-1}}\sigma_{\xi} . 
	\end{align*} 
	We complete the proof by appying tail bound for sub-exponential random variable and taking a maximum over $n$.  
\end{proof}

\bibliographystyle{chicago}
\bibliography{DP-Trace-arxiv}

\end{document}